\declaretheoremstyle[
  headfont=\normalfont\bfseries\itshape,
  numbered=unless unique,
  bodyfont=\normalfont,
  spaceabove=1em plus 0.75em minus 0.25em,
  spacebelow=1em plus 0.75em minus 0.25em,
  qed={},
]{deflt}
\theoremstyle{deflt}
\newtheorem{theorem}{Theorem}[section]
\newtheorem{remark}[theorem]{Remark}
\newtheorem{definition}[theorem]{Definition}
\newtheorem{lemma}[theorem]{Lemma}
\newtheorem{proposition}[theorem]{Proposition}
\newtheorem{corollary}[theorem]{Corollary}
\numberwithin{equation}{section}
\numberwithin{theorem}{section}
\newcommand{\T}{\mathbb{T}}
\newcommand{\R}{\mathbb{R}}
\newcommand{\C}{\mathbb{C}}
\newcommand{\E}{\mathbb{E}}
\newcommand{\N}{\mathbb{N}}
\newcommand{\Z}{\mathbb{Z}}
\newcommand{\scrE}{\mathscr{E}}
\newcommand{\cA}{\mathcal{A}}
\newcommand{\cC}{\mathcal{C}}
\newcommand{\cE}{\mathcal{E}}
\newcommand{\cF}{\mathcal{F}}
\newcommand{\cG}{\mathcal{G}}
\newcommand{\cH}{\mathcal{H}}
\newcommand{\cL}{\mathcal{L}}
\newcommand{\cN}{\mathcal{N}}
\newcommand{\cS}{\mathcal{S}}
\newcommand{\cX}{\mathcal{X}}
\newcommand{\cY}{\mathcal{Y}}
\newcommand{\define}{\textbf}
\newcommand{\du}{{d_u}}
\newcommand{\size}{\mathrm{size}}
\newcommand{\width}{\mathrm{width}}
\newcommand{\depth}{\mathrm{depth}}
\newcommand{\unif}{\mathrm{Unif}}
\newcommand{\Prob}{{\rm Prob}}
\renewcommand{\tilde}{\widetilde}
\renewcommand{\hat}{\widehat}
\newcommand{\set}[2]{{\left\{ #1 \,\middle|\, #2 \right\}}}
\newcommand{\slot}{{\,\cdot\,}}
\newcommand{\supp}{\mathrm{supp}}
\newcommand{\divnet}{\tilde{\div}}
\newcommand{\Span}{\mathrm{span}}
\newcommand{\opt}{\mathrm{opt}}
\newcommand{\Err}{\mathscr{E}}
\newcommand{\adv}{\mathrm{adv}}
\newcommand{\burgers}{\mathrm{Burg}}
\newcommand{\Burgers}{\burgers}
\newcommand{\DON}{\mathrm{DON}}
\newcommand{\sDON}{\mathrm{sDON}}
\newcommand{\don}{\DON}
\newcommand{\sdon}{\sDON}
\newcommand{\FNO}{\mathrm{FNO}}
\newcommand{\fno}{\FNO}
\newcommand{\meas}{\mathrm{meas}}
\newcommand{\kmax}{k_{\mathrm{max}}}
\title{Nonlinear Reconstruction for Operator Learning of PDEs with Discontinuities}
\author{Samuel Lanthaler\\
Computing and Mathematical Science\\
California Institute of Technology\\
Pasadena, CA, USA \\
\texttt{slanth@caltech.edu}
\\
\And Roberto Molinaro \\
Seminar for Applied Mathematics\\
ETH Zurich\\
Zurich, Switzerland\\
\texttt{roberto.molinaro@math.ethz.ch}
\\
\And
Patrik Hadorn \\
Seminar for Applied Mathematics\\
ETH Zurich\\
Zurich, Switzerland\\
\And
Siddhartha Mishra \\
Seminar for Applied Mathematics\\
ETH Zurich\\
Zurich, Switzerland\\
\texttt{siddhartha.mishra@math.ethz.ch}
}
\begin{document}

\maketitle

\begin{abstract}
A large class of hyperbolic and advection-dominated PDEs can have solutions with discontinuities. This paper investigates, both theoretically and empirically, the operator learning of PDEs with discontinuous solutions. We rigorously prove, in terms of lower approximation bounds, that methods which entail a linear reconstruction step (e.g. DeepONet or PCA-Net) fail to efficiently approximate the solution operator of such PDEs. In contrast, we show that certain methods employing a non-linear reconstruction mechanism can overcome these fundamental lower bounds and approximate the underlying operator efficiently. The latter class includes Fourier Neural Operators and a novel extension of DeepONet termed shift-DeepONet. Our theoretical findings are confirmed by empirical results for advection equation, inviscid Burgers' equation and compressible Euler equations of aerodynamics.
\end{abstract}


\section{Introduction}
Many interesting phenomena in physics and engineering are described by partial differential equations (PDEs) with \emph{discontinuous} solutions. The most common types of such PDEs are nonlinear hyperbolic systems of conservation laws \citep{DAF1}, such as the Euler equations of aerodynamics, the shallow-water equations of oceanography and MHD equations of plasma physics. It is well-known that solutions of these PDEs develop finite-time discontinuities such as \emph{shock waves}, even when the initial and boundary data are smooth. Other examples include the propagation of waves with jumps in linear transport and wave equations, crack and fracture propagation in materials \citep{crack}, moving interfaces in multiphase flows \citep{DrewPassman} and motion of very sharp gradients as propagating fronts and traveling wave solutions for reaction-diffusion equations \citep{rde}. Approximating such (propagating) discontinuities in PDEs is considered to be extremely challenging for traditional numerical methods \citep{HEST1} as resolving them could require very small grid sizes. Although bespoke numerical methods such as high-resolution finite-volume methods, discontinuous Galerkin finite-element and spectral viscosity methods \citep{HEST1} have successfully been used in this context, their very high computational cost prohibits their extensive use, particularly for \emph{many-query} problems such as UQ, optimal control and (Bayesian) inverse problems \citep{LMR1}, necessitating the design of fast machine learning-based surrogates. 

As the task at hand in this context is to learn the underlying solution \emph{operator} that maps input functions (initial and boundary data) to output functions (solution at a given time), recently developed \emph{operator learning} methods can be employed in this infinite-dimensional setting \citep{higgins2021generalizing}. These methods include \emph{operator networks} \citep{ChenChen1995} and their deep version, \emph{DeepONet} \citep{Lu2019,Lu2021learning}, where two sets of neural networks (branch and trunk nets) are combined in a \emph{linear reconstruction procedure} to obtain an infinite-dimensional output. DeepONets have been very successfully used for different PDEs \citep{Lu2021learning,donet1,donet2,donet3}. An alternative framework is provided by \emph{neural operators} \citep{NO}, wherein the affine functions within DNN hidden layers are generalized to infinite-dimensions by replacing them with kernel integral operators as in \citep{GKO,NO,MPole}. A computationally efficient form of neural operators is the Fourier Neural Operator (FNO) \citep{FNO}, where a translation invariant kernel is evaluated in Fourier space, leading to many successful applications for PDEs \citep{FNO,FNO1,FNO2}. 

Currently available theoretical results for operator learning (e.g. \cite{LMK1,NO,kovachki2021universal,DRM2,DENG2022411}) leverage the \emph{regularity} (or smoothness) of solutions of the PDE to prove that frameworks such as DeepONet, FNO and their variants approximate the underlying operator efficiently. Although such regularity holds for many elliptic and parabolic PDEs, it is obviously destroyed when discontinuities appear in the solutions of the PDEs such as in the hyperbolic PDEs mentioned above. Thus, a priori, it is unclear if existing operator learning frameworks can efficiently approximate PDEs with discontinuous solutions. This explains the paucity of theoretical and (to a lesser extent)  empirical work on operator learning of PDEs with discontinuous solutions and provides the rationale for the current paper where,  
\begin{itemize}
\item using a lower bound, we rigorously prove approximation error estimates to show that operator learning architectures such as DeepONet \citep{Lu2021learning} and PCA-Net \citep{pcanet}, which entail a \emph{linear reconstruction} step, \emph{fail} to efficiently approximate solution operators of prototypical PDEs with discontinuities. In particular, the approximation error only decays, at best, linearly in network size. 
\item We rigorously prove that using a \emph{nonlinear reconstruction} procedure within an operator learning architecture can lead to the efficient approximation of prototypical PDEs with discontinuities. In particular, the approximation error can decay \emph{exponentially in network size}, even after discontinuity formation. This result is shown for two types of architectures with \emph{nonlinear reconstruction}, namely the widely used Fourier Neural Operator (FNO) of \citep{FNO} and for a novel variant of DeepONet that we term as \emph{shift-DeepONet}. 
\item We supplement the theoretical results with extensive experiments where FNO and shift-DeepONet are shown to consistently outperform DeepONet and other baselines for PDEs with discontinuous solutions such as linear advection, inviscid Burgers' equation, and both the one- and two-dimensional versions of the compressible Euler equations of gas dynamics.
\end{itemize}

\section{Methods}
\label{sec:2}
\paragraph{Setting.}
Given compact domains $D \subset \R^d$, $U\subset \R^{d'}$, we consider the approximation of operators $\cG: \cX \to \cY$, where $\cX \subset L^2(D)$ and $\cY \subset L^2(U)$ are the input  and output function spaces. In the following, we will focus on the case, where $\bar{u} \mapsto \cG(\bar{u})$ maps initial data $\bar{u}$ to the solution at some time $t>0$, of an underlying time-dependent PDE.
We assume the input $\bar{u}$ to be sampled from a probability measure $\mu \in \Prob(\cX)$ 
\paragraph{DeepONet.}
DeepONet \citep{Lu2021learning} will be our prototype for operator learning frameworks with linear reconstruction. To define them, 
 let $\bm{x} := (x_1,\dots, x_m)\in D$ be a fixed set of \emph{sensor points}. Given an input function $\bar{u}\in \cX$, we encode it by the point values $\cE(\bar{u}) = (\bar{u}(x_1),\dots,\bar{u}(x_m)) \in \R^m$. DeepONet is formulated in terms of two neural networks: The first is the \define{branch-net} $\bm{\beta}$, which maps the point values $\cE(\bar{u})$ to coefficients $\bm{\beta}(\cE(\bar{u})) = (\beta_1(\cE(\bar{u})),\dots, \beta_p(\cE(\bar{u}))$, resulting in a mapping
 \begin{align}
   \label{eq:branch}
\bm{\beta}: \R^m \to \R^p, \quad \cE(\bar{u}) \mapsto (\beta_1(\cE(\bar{u})),\dots, \beta_p(\cE(\bar{u})).
\end{align}
The second neural network is the so-called \define{trunk-net} $\bm{\tau}(y) = (\tau_1(y),\dots, \tau_p(y))$, which is used to define a mapping 
\begin{align}
  \label{eq:trunk}
\bm{\tau}: U \to \R^p, \quad y\mapsto (\tau_1(y),\dots, \tau_p(y)).
\end{align}
While the branch net provides the coefficients, the trunk net provides the ``basis'' functions in an expansion of the output function of the form 
\begin{equation}
    \label{eq:donet1}
\cN^\don(\bar{u})(y) = \sum_{k=1}^p \beta_k(\bar{u}) \tau_k(y), 
\quad \bar{u} \in \cX, \; y \in U,
\end{equation}
with $\beta_k(\bar{u}) = \beta_k(\cE(\bar{u}))$.
The resulting mapping $\cN^\don: \cX \to \cY$, $\bar{u} \mapsto \cN^\don(\bar{u})$ is a \define{DeepONet}.

Although DeepONet were shown to be universal in the class of measurable operators \citep{LMK1},  the following \emph{fundamental lower bound} on the approximation error was also established,
\begin{proposition}[{\citet[Thm. 3.4]{LMK1}}]
\label{thm:lowerbd}
Let $\cX$ be a separable Banach space, $\cY$ a separable Hilbert space, and let $\mu$ be a probability measure on $\cX$. Let $\cG: \cX \to \cY$ be a Borel measurable operator with $\E_{\bar{u}\sim \mu}[\Vert \cG(\bar{u}) \Vert_{\cY}^2]< \infty$. Then the following lower approximation bound holds for any DeepONet $\cN^\don$ with trunk-/branch-net dimension $p$:
\begin{equation}
\label{eq:donlb}
\Err(\cN^\don) = 
\E_{\bar{u}\sim \mu}\left[
\Vert \cN^\don(\bar{u}) - \cG(\bar{u}) \Vert_{\cY}^2
\right]^{1/2}
\ge 
\Err_\opt := \sqrt{\sum_{j>p} \lambda_j},
\end{equation}
where the optimal error $\scrE_\opt$ is written in terms of the eigenvalues $\lambda_1 \ge \lambda_2 \ge \dots$ of the covariance operator $\Gamma_{\cG_\#\mu} := \E_{u\sim \cG_\#\mu}[(u \otimes u)]$ of the push-forward measure $\cG_\#\mu$. 
\end{proposition}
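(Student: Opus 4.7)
The fundamental observation is that the output of any DeepONet is constrained to lie in a linear subspace of $\cY$ of dimension at most $p$. Indeed, for any input $\bar{u}\in \cX$, the expansion \eqref{eq:donet1} yields
$\cN^\don(\bar{u}) \in V := \Span\{\tau_1,\dots,\tau_p\} \subset \cY.$
Thus, regardless of the choice of branch coefficients $\bm{\beta}(\bar u)$, the DeepONet can only reproduce the component of $\cG(\bar{u})$ lying in $V$, and the error of the best such approximation is bounded below by the distance from $\cG(\bar u)$ to $V$. My plan is to first reduce the lower bound to an infinite-dimensional projection problem, and then to invoke the optimality of PCA (the Karhunen--Lo\`eve theorem for the second-moment operator) to identify the minimum over all admissible $p$-dimensional subspaces.

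First I would let $P_V:\cY\to\cY$ denote the orthogonal projection onto $V$. Since $\cN^\don(\bar u)\in V$, orthogonality yields, pointwise in $\bar u$,
\begin{equation*}
  \Vert \cN^\don(\bar u) - \cG(\bar u)\Vert_\cY^2 \geq \Vert (I-P_V) \cG(\bar u)\Vert_\cY^2.
\end{equation*}
Taking expectations over $\bar u\sim \mu$ and using the definition of the push-forward measure,
\begin{equation*}
  \Err(\cN^\don)^2 \geq \E_{u\sim \cG_\#\mu}\bigl[\Vert (I-P_V) u\Vert_\cY^2\bigr].
\end{equation*}

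Second, I would expand the right-hand side in terms of an arbitrary orthonormal basis $\{\phi_1,\dots,\phi_p\}$ of $V$ (which exists since $\dim V\leq p$; any ``excess'' dimensions only increase the projected mass, so it suffices to consider $\dim V = p$). Using the identity $\Vert (I-P_V)u\Vert^2 = \Vert u\Vert^2 - \sum_{k=1}^p \langle u,\phi_k\rangle^2$ and Fubini,
\begin{equation*}
  \E_{u\sim \cG_\#\mu}\bigl[\Vert (I-P_V) u\Vert_\cY^2\bigr] = \mathrm{tr}(\Gamma_{\cG_\#\mu}) - \sum_{k=1}^p \langle \Gamma_{\cG_\#\mu}\phi_k,\phi_k\rangle,
\end{equation*}
where I used $\E[\langle u,\phi_k\rangle^2] = \langle \Gamma_{\cG_\#\mu}\phi_k,\phi_k\rangle$. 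The assumption $\E_{\bar u\sim\mu}[\Vert\cG(\bar u)\Vert_\cY^2]<\infty$ guarantees $\Gamma_{\cG_\#\mu}$ is a positive, self-adjoint, trace-class operator, so the spectral theorem applies and $\mathrm{tr}(\Gamma_{\cG_\#\mu})=\sum_j\lambda_j$.

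Third, I would apply the Ky Fan maximum principle (equivalently, the Courant--Fischer characterization extended to compact self-adjoint operators): for any orthonormal system $\phi_1,\dots,\phi_p$,
\begin{equation*}
  \sum_{k=1}^p \langle \Gamma_{\cG_\#\mu}\phi_k,\phi_k\rangle \leq \sum_{k=1}^p \lambda_k,
\end{equation*}
with equality when the $\phi_k$ are chosen as the top $p$ eigenvectors. Combining the previous displays,
\begin{equation*}
  \Err(\cN^\don)^2 \geq \sum_j \lambda_j - \sum_{k=1}^p \lambda_k = \sum_{j>p} \lambda_j,
\end{equation*}
which is the claimed bound \eqref{eq:donlb}.

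The only subtle step is the second one: one must justify that $\Gamma_{\cG_\#\mu}$ is a well-defined trace-class operator on the separable Hilbert space $\cY$ and that the identity $\E[\langle u,\phi\rangle^2]=\langle\Gamma\phi,\phi\rangle$ holds, which uses Borel measurability of $\cG$ together with the finite second-moment hypothesis. The rest is essentially a coordinate-free restatement of the optimality of PCA, and the core of the argument is the observation in the first paragraph that DeepONet's linear reconstruction forces its image into a single $p$-dimensional subspace shared by all inputs.
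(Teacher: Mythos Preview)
Your proof is correct and follows essentially the same approach as the paper: the paper's appendix (\textbf{SM}~\ref{app:pca}) sketches precisely this argument, observing that the DeepONet output is confined to $V_p=\Span\{\tau_1,\dots,\tau_p\}$, lower-bounding the error by the projection error onto $V_p$, and then invoking the PCA optimality result (stated there as a theorem cited from \cite{LMK1}) to identify the minimum over all $p$-dimensional subspaces as $\sum_{j>p}\lambda_j$. Your write-up is in fact more self-contained than the paper's, since you spell out the Ky Fan step explicitly rather than citing it as a black box.
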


We refer to {\bf SM} \ref{app:pca} for relevant background on the underlying principal component analysis (PCA) and covariance operators. The same lower bound \eqref{eq:donlb} in fact holds for any operator approximation of the form $\cN(\bar{u}) = \sum_{k=1}^p \beta_k(\bar{u}) \tau_k$, where $\beta_k: \cX \to \R$ are arbitrary functionals. In particular, this bound continuous to hold for e.g. the PCA-Net architecture of \cite{pcanet0,pcanet}. We will refer to any operator learning architecture of this form as a method with ``linear reconstruction'', since the output function $\cN(\bar{u})$ is restricted to the linear $p$-dimensional space spanned by the $\tau_1,\dots, \tau_p \in \cY$. In particular, \emph{DeepONet are based on linear reconstruction}.

\paragraph{shift-DeepONet.} The lower bound \eqref{eq:donlb} shows that there are fundamental barriers to the expressive power of operator learning methods based on linear reconstruction. This is of particular relevance for problems in which the optimal lower bound $\Err_\opt$ in \eqref{eq:donlb} exhibits a \emph{slow decay} in terms of the number of basis functions $p$, due to the slow decay of the eigenvalues $\lambda_j$ of the covariance operator. It is well-known that even linear advection- or transport-dominated problems can suffer from such a slow decay of the eigenvalues \citep{ohlbergernonlinear2013,dahmen2014double,taddei2015reduced,peherstorfer2020model},
 which could hinder the application of linear-reconstruction based operator learning methods to this very important class of problems. In view of these observations, it is thus desirable to develop a \emph{non-linear} variant of DeepONet which can overcome such a lower bound in the context of transport-dominated problems. We propose such an extension below.
 
A \define{shift-DeepONet} $\cN^{\mathrm{sDON}}: \cX \to \cY$ is an operator of the form 
\begin{align} 
\label{eq:sdon}
\cN^{\mathrm{sDON}}(\bar{u})(y)
=
\sum_{k=1}^p \beta_k(\bar{u}) \tau_k\bigg( \cA_k(\bar{u})\cdot y + \gamma_k(\bar{u})\bigg),
\end{align}
where the input function $\bar{u}$ is encoded by evaluation at the sensor points $\cE(\bar{u}) \in \R^m$. We retain the DeepONet \define{branch-} and \define{trunk-nets} $\bm{\beta}$, $\bm{\tau}$ defined in \eqref{eq:branch}, \eqref{eq:trunk}, respectively,
and we have introduced a \define{scale-net} $\cA = (\cA_k)_{k=1}^p$, consisting of matrix-valued functions
\begin{align*}
&\cA_k: \R^m \to \R^{d^{\prime}\times d^{\prime}}, 
\quad
\cE(\bar{u})
\mapsto \cA_k(\bar{u}) := \cA_k(\cE(\bar{u})),
\end{align*}
and a \define{shift-net} $\bm{\gamma} = (\gamma_k)_{k=1}^p$, with 
\begin{align*}
&\gamma_k: \R^m \to \R^{d^{\prime}},
\quad
\cE(\bar{u})
\mapsto \gamma_k(\bar{u}) := \gamma_k(\cE(\bar{u})),
\end{align*}
All components of a shift-DeepONet are represented by deep neural networks, potentially with different activation functions. 

Since shift-DeepONets reduce to DeepONets for the particular choice $\bm{A} \equiv \bm{1}$ and $\bm{\gamma} \equiv 0$, the universality of DeepONets (Theorem 3.1 of \cite{LMK1}) is clearly inherited by shift-DeepONets. However, as shift-DeepONets do not use a linear reconstruction (the trunk nets in \eqref{eq:sdon} depend on the input through the scale and shift nets), the lower bound \eqref{eq:donlb} does not directly apply, providing possible space for shift-DeepONet to efficiently approximate transport-dominated problems, especially in the presence of discontinuities. 


\paragraph{Fourier neural operators (FNO).} A FNO $\cN^\fno$ \citep{FNO} is a composition
\begin{equation}
\label{eq:fno}
\cN^\fno: \cX \mapsto \cY:\quad 
\cN^\fno = Q \circ \cL_L \circ \dots \circ \cL_1 \circ R,
\end{equation}
consisting of a "lifting operator" $\bar{u}(x) \mapsto R(\bar{u}(x),x)$, where $R$ is represented by a (shallow) neural network $R: \R^\du \times \R^d \to \R^{d_v}$ with $\du$ the number of components of the input function, $d$ the dimension of the domain and $d_v$ the "lifting dimension" (a hyperparameter), followed by $L$ hidden layers $\cL_\ell: v^\ell(x) \mapsto v^{\ell+1}(x)$ of the form 
\[
v^{\ell+1}(x)  = \sigma\left(
W_\ell \cdot v^{\ell}(x) + b_\ell(x) + \left( K_\ell v^{\ell} \right) (x) 
\right),
\]
with $W_\ell \in \R^{d_v\times d_v}$ a weight matrix (residual connection), $x \mapsto b_\ell(x) \in \R^{d_v}$ a bias function and with a convolution operator $K_\ell v^\ell(x) = \int_{\T^d} \kappa_\ell(x-y) v^\ell(y) \, dy$, expressed in terms of a (learnable) integral kernel $x \mapsto \kappa_\ell(x) \in \R^{d_v\times d_v}$. The output function is finally obtained by a linear projection layer $v^{L+1}(x) \mapsto \cN^\fno(\bar{u})(x) = Q\cdot v^{L+1}(x)$. 

The convolution operators $K_\ell$ add the indispensable \emph{non-local} dependence of the output on the input function. Given values on an equidistant Cartesian grid, the evaluation of $K_\ell v^\ell$ can be efficiently carried out in Fourier space based on the discrete Fourier transform (DFT), leading to a representation
\[
K_\ell v^\ell = \cF_N^{-1} \left(P_\ell(k) \cdot \cF_N v^\ell(k) \right),
\]
where $\cF_N v^\ell (k)$ denotes the Fourier coefficients of the DFT of $v^\ell(x)$, computed based on the given $N$ grid values in each direction, $P_\ell(k) \in \C^{d_v \times d_v}$ is a complex Fourier multiplication matrix indexed by $k\in \Z^d$, and $\cF_N^{-1}$ denotes the inverse DFT. In practice, only a finite number of Fourier modes can be computed, and hence we introduce a hyperparameter $k_{\mathrm{max}} \in \N$, such that the Fourier coefficients of $b_\ell(x)$ as well as the Fourier multipliers, $\hat{b}_\ell(k) \equiv 0$ and $P_\ell(k) \equiv 0$, vanish whenever $|k|_\infty > k_\mathrm{max}$. In particular, with fixed $k_{\mathrm{max}}$ the DFT and its inverse can be efficiently computed in $O((k_{\mathrm{max}} N)^d)$ operations (i.e. linear in the total number of grid points). The output space of FNO \eqref{eq:fno} is manifestly non-linear as it is not spanned by a fixed number of basis functions. Hence, FNO constitute a \emph{nonlinear reconstruction} method. 


\section{Theoretical Results.}
\label{sec:3}
\textbf{Context.} Our aim in this section is to rigorously prove that the nonlinear reconstruction methods (shift-DeepONet, FNO) efficiently approximate operators stemming from discontinuous solutions of PDEs whereas linear reconstruction methods (DeepONet, PCA-Net) fail to do so. To this end, we follow standard practice in numerical analysis of PDEs \citep{HEST1} and choose two prototypical PDEs that are widely used to analyze numerical methods for transport-dominated PDEs. These are the linear transport or advection equation and the nonlinear inviscid Burgers' equation, which is the prototypical example for hyperbolic conservation laws. The exact operators and the corresponding approximation results with both linear and nonlinear reconstruction methods are described below. The computational complexity of the models is expressed in terms of hyperparameters such as the model size, which are described in detail in {\bf SM} \ref{app:cplx}.
\paragraph{Linear Advection Equation.} We consider the one-dimensional linear advection equation
\begin{align}
\label{eq:advection}
\partial_t u + a\partial_x u = 0, 
\quad
u(\slot,t=0) = \bar{u}
\end{align}
on a $2\pi$-periodic domain $D = \T$, with constant speed $a \in \R$. The underlying \emph{operator} is $\cG_{\adv}:L^1(\T)\cap L^\infty(\T) \to L^1(\T)\cap L^\infty(\T)$, $\bar{u} \mapsto \cG_\adv(\bar{u}) := u(\slot,T)$, obtained by solving the PDE \eqref{eq:advection} with initial data $\bar{u}$ up to some final time $t=T$. We note that $\cX = L^1(\T)\cap L^\infty(\T)\subset L^2(\T)$. As \emph{input measure} $\mu \in {\rm Prob}(\cX)$, we consider random input functions $\bar{u}\sim \mu$ given by the square (box) wave of height $h$, width $w$ and centered at $\xi$,
\begin{equation}
\label{eq:bwave}
\bar{u}(x) = h 1_{[-w/2,+w/2]}(x-\xi),
\end{equation}
where $h \in [\underline{h},\bar{h}]$, $w\in [\underline{w},\bar{w}]$ $\xi \in [0,2\pi]$ are independent and uniformly identically distributed. The constants $0<\underline{h} \le \bar{h}$, $0< \underline{w} \le \bar{w}$ are fixed. 
\paragraph{DeepONet fails at approximating $\cG_{adv}$ efficiently.} Our first rigorous result is the following lower bound on the error incurred by DeepONet \eqref{eq:donet1} in approximating $\cG_{\adv}$,
\begin{theorem}
\label{thm:adv-rough1}
Let $p,m\in \N$. There exists a constant $C>0$, independent of $m$ and $p$, such that for \emph{any} DeepONet $\cN^\don$ \eqref{eq:donet1}, with $\sup_{\bar{u}\sim\mu} \Vert \cN^\don(\bar{u})\Vert_{L^\infty} \le M < \infty$, we have the lower bound
\begin{align*}
\Err =
\E_{\bar{u}\sim \mu}
\left[
\Vert \cG_\adv(\bar{u}) - \cN^\don(\bar{u}) \Vert_{L^1}
\right]
\ge \frac{C}{\min(m,p)}.
\end{align*}
Consequently, to achieve $\Err(\cN^{\don}) \le \epsilon$ with DeepONet, we need $p, \, m\gtrsim \epsilon^{-1}$ trunk and branch net basis functions and sensor points, respectively, entailing that $\size(\cN^\don) \gtrsim pm \gtrsim \epsilon^{-2}$ (cp. \textbf{SM} \ref{app:cplx}).
\end{theorem}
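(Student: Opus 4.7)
The plan is to establish two separate lower bounds, whose maximum yields the desired $C/\min(m,p)$: one arising from the limited information captured by $m$ sensor evaluations (the ``$C/m$'' contribution), and another arising from the restriction of DeepONet outputs to the $p$-dimensional span $V = \mathrm{span}(\tau_1,\dots,\tau_p)$ (the ``$C/p$'' contribution). Since $(h,w)$ is uniform on a compact set bounded away from zero, it suffices to prove both bounds for arbitrary fixed $(h,w)$ with constants uniform in a positive-measure subset, and then integrate.

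For the \emph{sensor bound}, I would exploit the fact that $\cN^{\don}(\bar{u})$ depends on $\bar{u}$ only through $\cE(\bar{u})$. For fixed $h,w$, each sensor value $B_\xi(x_i) = h\,\mathbbm{1}_{[x_i-w/2, x_i+w/2]}(\xi)$ has exactly two transitions in $\xi$, so $\xi \mapsto \cE(B_\xi)$ partitions $[0,2\pi]$ into at most $N \le 2m+1$ intervals $R_i$ of widths $\delta_i$, on each of which $\cN^{\don}(B_\xi)$ is constant, equal to some $f_i$. The pairwise triangle inequality
\[
\|B_\xi - f_i\|_{L^1} + \|B_{\xi'} - f_i\|_{L^1} \geq \|B_\xi - B_{\xi'}\|_{L^1} = 2h\min(|\xi-\xi'|, w),
\]
averaged over $\xi,\xi'$ i.i.d.\ uniform on $R_i$, gives $\int_{R_i}\|B_\xi-f_i\|_{L^1}\,d\xi \gtrsim h\,\delta_i\min(\delta_i,w)$. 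Summing and minimizing $\sum_i \delta_i\min(\delta_i,w)$ subject to $\sum_i\delta_i=2\pi$, $N\le 2m+1$ (a convex quadratic subject to a linear constraint) yields $\sum_i \delta_i\min(\delta_i,w) \gtrsim \min(w, 1/m)$, hence $\E_\xi[\|B_\xi - \cN^{\don}(B_\xi)\|_{L^1}] \gtrsim h\min(w,1/m) \gtrsim C/m$. Since $\cG_\adv$ is a pure shift, the bound transfers to $\cG_\adv(B_\xi) = B_{\xi+aT}$ by the same argument applied to the translated partition.

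For the \emph{reconstruction bound}, I would apply Proposition~\ref{thm:lowerbd} with $\cY=L^2(\T)$, yielding $\E[\|\cN^{\don}-\cG_\adv\|_{L^2}^2] \geq \sum_{j>p}\lambda_j$, where $\lambda_j$ are the eigenvalues of the covariance of $\cG_{\adv,\#}\mu$. Because advection is a shift and $\xi$ is uniform on $\T$, the pushforward equals $\mu$ in law, and its covariance is translation-invariant with kernel
\[
K(x,y) = \frac{\E[h^2]}{2\pi}\,\E_w[\max(0, w-|x-y|)].
\]
Its eigenvalues are (up to normalization) the Fourier coefficients $\hat K(k)$. A direct computation gives $\hat K(k) \propto \E_w[\sin^2(kw/2)]/k^2$, and since $\E_w[\cos(kw)] = O(1/k)$, we get $\E_w[\sin^2(kw/2)] \to 1/2$, so $\lambda_k \geq c/k^2$ for all large $|k|$, whence $\sum_{j>p}\lambda_j \geq C/p$. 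To convert to $L^1$, I use $\|u\|_{L^1}\cdot\|u\|_{L^\infty} \geq \|u\|_{L^2}^2$ together with the hypothesis $\|\cN^{\don}(\bar{u})\|_{L^\infty}\le M$ (so $\|\cN^{\don}-\cG_\adv\|_{L^\infty} \le M+\bar h$ a.s.), giving
\[
\E[\|\cN^{\don}-\cG_\adv\|_{L^1}] \;\geq\; \frac{\E[\|\cN^{\don}-\cG_\adv\|_{L^2}^2]}{M+\bar h} \;\geq\; \frac{C}{(M+\bar h)\,p}.
\]

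Taking the maximum of the two bounds gives $\Err \geq C\max(1/m, 1/p) = C/\min(m,p)$. The final complexity claim $\size(\cN^{\don})\gtrsim pm \gtrsim \epsilon^{-2}$ follows from the appendix bookkeeping. I expect the main obstacle to be the eigenvalue \emph{lower} bound: one must verify that averaging the kernel over the random width $w$ does not introduce enough cancellation in its Fourier coefficients to spoil the $1/k^2$ decay, since Proposition~\ref{thm:lowerbd} only gives useful information when the tail sum $\sum_{j>p}\lambda_j$ can be bounded below explicitly. The sensor bound, by contrast, is a relatively elementary information-theoretic combinatorial argument.
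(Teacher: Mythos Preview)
Your proposal is correct and follows essentially the same approach as the paper's proof (Propositions~\ref{prop:lower-p} and~\ref{prop:lower-m}): a sensor bound obtained by partitioning $\xi$-space into $O(m)$ constant-encoding intervals, applying the triangle inequality, and minimizing a quadratic under a linear constraint; and a reconstruction bound via the translation-invariant PCA lower bound with Fourier eigenvalues $\lambda_k \asymp k^{-2}$, converted from $L^2$ to $L^1$ using the $L^\infty$ hypothesis. Your treatment of the eigenvalue lower bound via $\E_w[\sin^2(kw/2)] \to 1/2$ is in fact slightly more careful than the paper's appeal to the generic $|k|^{-1}$ Fourier decay of jump discontinuities, which does not by itself rule out cancellations at specific $k$.
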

The detailed proof is presented in {\bf SM} \ref{app:adv1pf}. It relies on two facts. First, following \cite{LMK1}, one observes that translation invariance of the problem implies that the Fourier basis is optimal for spanning the output space. As the underlying functions are discontinuous, the corresponding eigenvalues of the covariance operator for the push-forward measure decay, at most, quadratically in $p$. Consequently, the lower bound \eqref{eq:donlb} leads to a linear decay of error in terms of the number of trunk net basis functions. Second, roughly speaking, the linear decay of error in terms of sensor points is a consequence of the fact that one needs sufficient number of sensor points to resolve the underlying discontinuous inputs.
\paragraph{Shift-DeepONet approximates $\cG_{adv}$ efficiently.} Next and in contrast to the previous result on DeepONet, we have following efficient approximation result for shift-DeepONet \eqref{eq:sdon},
\begin{theorem} \label{thm:adv-rough2}
There exists a constant $C>0$, such that for any $\epsilon > 0$, there exists a shift-DeepONet $\cN_\epsilon^{\mathrm{sDON}}$ \eqref{eq:sdon} such that 
\begin{align}
\label{eq:Ibound}
\Err =
\E_{\bar{u}\sim \mu}
\left[
\Vert \cG_\adv(\bar{u}) - \cN_\epsilon^{\mathrm{sDON}}(\bar{u}) \Vert_{L^1} 
\right]
\le \epsilon,
\end{align}
with \emph{uniformly bounded $p\le C$}, and with the number of sensor points $m \le C\epsilon^{-1}$. Furthermore, we have
\begin{align*}
    \width(\cN_\epsilon^{\sdon}) \le C,
    \quad
  \depth(\cN_\epsilon^{\sdon}) \le C\log(\epsilon^{-1})^2,
  \quad
  \size(\cN_\epsilon^{\mathrm{sDON}}) \le C\epsilon^{-1}.
\end{align*}
\end{theorem}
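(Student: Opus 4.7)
\medskip
\textbf{Strategy.}\; The plan is to exploit that for input $\bar u \sim \mu$ the exact solution $\cG_\adv(\bar u)$ is itself a box wave with only three scalar degrees of freedom: height $h$, width $w$, and centre $\xi + aT \bmod 2\pi$. A shift-DeepONet \eqref{eq:sdon} naturally encodes such a three-parameter family by letting the branch net recover $h$, the scale net recover $1/w$, the shift net recover the centre, and the trunk net represent a single prototype indicator $\mathbf 1_{[-1/2, 1/2]}$. So it suffices to (i) recover $(h, w, \xi)$ from the sensor readings to precision $O(\epsilon)$, (ii) approximate the prototype indicator by a constant-size trunk, and (iii) handle the periodicity of $\T$ with a bounded number of trunk terms $p$.

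\medskip
\textbf{Parameter extraction.}\; With $m = \lceil C/\epsilon\rceil$ equispaced sensors $x_i \in [0, 2\pi)$, the vector $\cE(\bar u)$ takes values in $\{0, h\}^m$ and records which sensors fall inside the box. Elementary linear-plus-threshold combinations then recover the parameters to precision $O(1/m) = O(\epsilon)$: for instance $hw = 2\pi m^{-1}\sum_i \bar u(x_i)$ is an exact linear functional of $\cE(\bar u)$; $w = 2\pi m^{-1}\sum_i \mathbf 1_{\bar u(x_i) > 0}$ is recovered by counting supported sensors; $h$ follows by division; and the centre $\xi + aT$ is recovered, to the same precision, from the argument of the first discrete Fourier coefficient of $\cE(\bar u)$. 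These produce networks $H, W, \Xi : \R^m \to \R$ of constant hidden width, depth $O(\log \epsilon^{-1})$, and size $O(m) = O(\epsilon^{-1})$ dominated by the $m$ input-to-hidden weights.

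\medskip
\textbf{Trunk, assembly, and error.}\; Let $\tau_\delta(z) := \sigma(\delta^{-1}(z + 1/2)) - \sigma(\delta^{-1}(z - 1/2))$ with $\sigma$ a clipped ReLU valued in $[0,1]$ and $\delta = \epsilon$, so $\|\tau_\delta - \mathbf 1_{[-1/2,1/2]}\|_{L^1(\R)} = O(\delta)$ and $\tau_\delta$ has constant size. Take $p = 3$ terms in \eqref{eq:sdon}, one for each translate $k \in \{-1,0,1\}$, with common trunk $\tau_\delta$, $\beta_k(\bar u) = H(\cE(\bar u))$, $\cA_k(\bar u) = 1/W(\cE(\bar u))$, and $\gamma_k(\bar u) = (2\pi k - \Xi(\cE(\bar u)))/W(\cE(\bar u))$; the reciprocal $1/W$ and the multiplications are realized by standard Yarotsky ReLU sub-networks of constant width and depth $O(\log^2 \epsilon^{-1})$, using $W \ge \underline{w} > 0$. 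Since $\bar w < 2\pi$, for each $y \in [0, 2\pi]$ at most one translate has argument in $\supp \tau_\delta$, and the sum faithfully represents the periodic box. A triangle-inequality split then bounds the $L^1$-error by (a) trunk approximation $O(\delta)$, (b) a symmetric-difference term $O(h\,(\delta_W + \delta_\Xi)) = O(\epsilon)$ from perturbing $(w, \xi)$ by $O(\epsilon)$, and (c) a Yarotsky error $O(\epsilon)$ in $\cA_k, \gamma_k$; all three terms are uniform in $\bar u$, so the expectation is $O(\epsilon)$, and summing sub-networks gives $p = 3$, width $O(1)$, depth $O(\log^2 \epsilon^{-1})$, size $O(\epsilon^{-1})$.

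\medskip
\textbf{Main obstacle.}\; The principal technical hurdle is designing the scale and shift networks so as to approximate $1/W$ and $\Xi/W$ to uniform precision $O(\epsilon)$ with ReLU sub-networks of constant width; this is what forces the depth to scale like $\log^2 \epsilon^{-1}$, since each Yarotsky-type reciprocal/multiplication gadget contributes $O(\log \epsilon^{-1})$ depth and two such operations are composed. A secondary subtlety is handling inputs whose box boundary coincides with a sensor, which are exactly where the piecewise-linear extraction networks are discontinuous; this occurs on a $\mu$-set of measure $O(\epsilon)$, so its contribution is absorbed into the overall $O(\epsilon)$ bound on the expectation.
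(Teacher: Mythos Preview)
Your approach is essentially the paper's: extract the three parameters $(h,w,\xi)$ from the sensor readings, then reconstruct the shifted box with a bounded number of trunk terms. The assembly differs in an interesting way---you use the scale-net nontrivially ($\cA_k = 1/W$) with a single indicator prototype and $p=3$, whereas the paper keeps $\cA_k \equiv 1$, encodes the width through the shift arguments $\pm\tilde w/2$, and takes $p=6$ step functions---but both are valid.

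There are two genuine gaps in your parameter extraction. First, your width estimator $W = \tfrac{2\pi}{m}\sum_i \mathbf 1_{\bar u(x_i)>0}$ cannot be realized with \emph{constant} hidden width: thresholding each of the $m$ coordinates before summing forces a hidden layer of width $\Theta(m)$, contradicting the claimed $\width\le C$. The paper sidesteps this by extracting $h$ first and exactly, via max-pooling over a \emph{fixed} subset of $k=O(1/\underline w)$ sensors (spacing $\le\underline w$ guarantees one lands in the box, so $\max = h$), and then recovering $w$ from the single linear functional $\Delta x\sum_j \bar u(x_j)\approx hw$ divided by $h$. Your order (first $w$ by counting, then $h=hw/w$) does not allow this trick. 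Second, the ``secondary subtlety'' you flag (a box edge meeting a sensor) is a non-issue, already inside the $O(1/m)$ quadrature error; the subtlety you \emph{miss} is topological: $(\cos\xi,\sin\xi)\mapsto\xi$ has no continuous inverse on the full circle, so any ReLU $\Xi$ must incur $O(1)$ error near the branch cut $\xi\approx 0\equiv 2\pi$. The paper absorbs this by accepting $O(1)$ error on $\{\xi\in[2\pi-\epsilon,2\pi]\}$, a set of $\mu$-measure $O(\epsilon)$.

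Finally, your depth accounting is confused. Composing two depth-$d$ networks gives depth $2d$, not $d^2$, so ``two $O(\log\epsilon^{-1})$ gadgets composed'' would yield $O(\log\epsilon^{-1})$, not $O(\log^2\epsilon^{-1})$. The correct source of the $\log^2$ is that a \emph{single} constant-width ReLU approximation of an analytic map such as $a\mapsto 1/a$ or the local inverse of $\cos$ already costs depth $O(\log^2\epsilon^{-1})$ (this is the paper's analytic-approximation theorem, which underlies both its division network and its $\Xi_\epsilon$ construction). Your claim that $H,W,\Xi$ have depth $O(\log\epsilon^{-1})$ is therefore unsubstantiated; with the paper's tools they are $O(\log^2\epsilon^{-1})$, which still suffices for the theorem.
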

The detailed proof, presented in {\bf SM} \ref{app:adv2pf}, is based on the fact that for each input, the exact solution can be completely determined in terms of three variables, i.e., the height $h$, width $w$ and shift $\xi$ of the box wave \eqref{eq:bwave}. Given an input $\bar{u}$, we explicitly construct neural networks for inferring each of these variables with high accuracy. These neural networks are then combined together to yield a shift-DeepONet that approximates $\cG_{adv}$, with the desired complexity. The nonlinear dependence of the trunk net in shift-DeepONet \eqref{eq:sdon} on the input is the key to encode the shift in the box-wave \eqref{eq:bwave} and this demonstrates the necessity of nonlinear reconstruction in this context. 
\paragraph{FNO approximates $\cG_{adv}$ efficiently.} Finally, we state an efficient approximation  result for $\cG_{adv}$ with FNO \eqref{eq:fno} below,
\begin{theorem}
\label{thm:fnoadv}
For any $\epsilon > 0$, there exists an FNO $\cN^\fno_\epsilon$ \eqref{eq:fno}, such that
\[
\E_{\bar{u}\sim \mu}
\left[
\Vert \cG_\adv(\bar{u}) - \cN^\fno_\epsilon(\bar{u}) \Vert_{L^1}
\right]
\le 
\epsilon,
\]
with grid size $N \le C \epsilon^{-1}$, and with Fourier cut-off $k_{\mathrm{max}}$, lifting dimension $d_v$, depth and size:
\[
k_{\mathrm{max}} = 1, 
\quad
d_v \le C, 
\quad 
\depth(\cN^\fno_\epsilon) \le C \log(\epsilon^{-1})^2,
\qquad
\size(\cN^\fno_\epsilon) \le C\log(\epsilon^{-1})^2.
\]
\end{theorem}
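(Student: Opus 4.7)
The plan is to build an FNO that implements, within the FNO architecture, the same three-step strategy used for shift-DeepONet in Theorem \ref{thm:adv-rough2}: read off the three scalars $(h,w,\xi)$ that fully parameterize $\bar u$, apply the deterministic shift $\xi\mapsto \xi+aT$, and render the resulting box wave on the grid. The key observation that lets us work with Fourier cut-off $k_{\max}=1$ is that the three lowest Fourier modes already encode $(h,w,\xi)$ analytically and invertibly on the parameter compact:
\[
\hat{\bar u}(0)=\frac{hw}{2\pi}, \qquad \hat{\bar u}(\pm 1)=\frac{h}{\pi}\sin(w/2)\,e^{\mp i\xi}.
\]
Since $\bar w<2\pi$ and $\underline h>0$, this map is real-analytic with real-analytic inverse on the parameter box.

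\emph{Construction.} The shallow lifting $R(\bar u(x),x)$ produces channels carrying $\bar u(x)$, the coordinate $x$, and $\sin(x),\cos(x)$ (the last two realized as fixed shallow sub-networks of $x$). A single Fourier layer with multipliers concentrated at $|k|\le 1$ produces three new channels holding the constant $c_0=hw/(2\pi)$ and the first-mode functions $c_1\cos(x-\xi)$ and $c_1\sin(x-\xi)$ with $c_1=(2h/\pi)\sin(w/2)$. One subsequent layer of pointwise products together with the trigonometric identities
\[
c_1^2=(c_1\cos(x-\xi))^2+(c_1\sin(x-\xi))^2,
\]
\[
\sin\xi=\sin(x)\cos(x-\xi)-\cos(x)\sin(x-\xi),\quad \cos\xi=\cos(x)\cos(x-\xi)+\sin(x)\sin(x-\xi)
\]
yields scalars that are \emph{truly constant} in $x$ and hence survive the next Fourier projection onto $|k|\le 1$. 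A subsequent stack of pointwise FNO layers approximates the analytic inverse map $(c_0,c_1^2,\sin\xi,\cos\xi)\mapsto(h,w,\sin(\xi+aT),\cos(\xi+aT))$; by standard neural-network approximation results for analytic functions on a compact set, this requires depth and size $O(\log\epsilon^{-1})^2$. A final pointwise sub-network uses these constants together with the coordinate channel $y$ (and its $\sin,\cos$) to approximate $h\,\mathbf{1}_{[-w/2,w/2]}(y-\xi-aT)$ by a sharpened ReLU bump whose transition has width $O(1/N)$; the projection $Q$ extracts the scalar output.

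\emph{Error analysis.} The three error sources assemble as follows. On the grid of $N$ points the DFT reproduces $\hat{\bar u}(0),\hat{\bar u}(\pm 1)$ up to $O(N^{-1})$ aliasing error (uniform in the parameter compact, thanks to uniform BV bounds on box waves); the analytic inverse propagates this to $O(N^{-1})$ error in $(h,w,\xi)$; the final ReLU gate contributes $L^1$-error $O(h/N)$ per discontinuity, with at most two discontinuities per sample; and the intrinsic neural-network approximation error for the analytic maps is made $\le N^{-1}$ at the claimed depth and size. Choosing $N\sim\epsilon^{-1}$ yields the bound $\Err\le\epsilon$ with $d_v,k_{\max}$ constant and $\depth,\size=O(\log\epsilon^{-1})^2$.

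\emph{Main obstacle.} The delicate point is the $k_{\max}=1$ constraint enforced at every hidden layer: the Fourier projection discards all modes with $|k|>1$, so each sub-computation must be engineered so that its outputs are already supported on $|k|\le 1$. The trigonometric identities above perform exactly this consolidation, routing all non-trivial information into the $k=0$ (constant) channel \emph{before} the next Fourier projection can damage it; the inverse-map and gate sub-networks can then act purely pointwise, which is harmless under Fourier truncation.
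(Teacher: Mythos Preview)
Your proposal is correct and follows a route close in spirit to the paper's, but differs in one key simplification. The paper exploits the identity
\[
1_{[-w/2,w/2]}(x-\xi-at)=1 \iff \sin(w/2)\cos(x-\xi-at)\ge \tfrac12\sin(w),
\]
which allows it to threshold the single Fourier-layer output $\sin(w/2)\cos(x-\xi-at)$ (obtained by a phase-shifted projection onto $|k|=1$) directly against the constant $\tfrac12\sin(w)$, without ever extracting $\xi$ as a scalar. You instead pull out $(\sin\xi,\cos\xi)$ explicitly via the trigonometric consolidation, invert analytically to $(h,w,\sin(\xi+aT),\cos(\xi+aT))$, and then render the bump. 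Both routes give the same complexity; the paper's avoids the inverse map and the intermediate products.

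Your ``main obstacle'' paragraph rests on a misreading of the FNO layer. The Fourier cutoff $k_{\max}$ applies only to the convolution $K_\ell$ and the bias $b_\ell$; the residual term $W_\ell v^\ell(x)$ is a pointwise matrix multiplication and carries \emph{all} frequency content forward, and the nonlinearity $\sigma$ acts pointwise too. In particular, any layer with $K_\ell\equiv 0$ is a pure pointwise ReLU layer that passes high-frequency channels through untouched, so there is no need to engineer each intermediate output to be supported on $|k|\le 1$. Your trigonometric consolidation works, but it is not forced by the architecture. A minor related point: the shallow lifting $R$ cannot produce $\sin x,\cos x$ from $x$ as you describe (it is piecewise linear); the correct mechanism is that these functions are available \emph{exactly} as bias functions $b_\ell(x)$ with $k_{\max}=1$, which is what the paper uses.
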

A priori, one recognizes that $\cG_{adv}$ can be represented by Fourier multipliers (see {\bf SM} \ref{app:adv3pf}). Consequently, a single \emph{linear} FNO layer would in principle suffice in approximating $\cG_{adv}$. However, the size of this FNO would be \emph{exponentially larger} than the bound in Theorem \ref{thm:fnoadv}. To obtain a more efficient approximation, one needs to leverage the \emph{non-linear reconstruction} within FNO layers. This is provided in the proof, presented in {\bf SM} \ref{app:adv3pf}, where the underlying height, wave and shift of the box-wave inputs \eqref{eq:bwave} are approximated with high accuracy by FNO layers. These are then combined together with a novel representation formula for the solution to yield the desired FNO. \\
\textbf{Comparison.} Observing the complexity bounds in Theorems \ref{thm:adv-rough1}, \ref{thm:adv-rough2}, \ref{thm:fnoadv}, we note that the DeepONet size scales at least quadratically, $\size \gtrsim \epsilon^{-2}$, in terms of the error in approximating $\cG_{adv}$, whereas for shift-DeepONet and FNO, this scaling is only linear and logarithmic, respectively. Thus, we rigorously prove that for this problem, the nonlinear reconstruction methods (FNO and shift-DeepONet) can be more efficient than DeepONet and other methods based on linear reconstruction. Moreover, FNO is shown to have a smaller approximation error than even shift-DeepONet for similar model size.
\paragraph{Inviscid Burgers' equation.} 
Next, we consider the inviscid Burgers' equation in one-space dimension, which is considered the prototypical example of nonlinear hyperbolic conservation laws \citep{DAF1}:
\begin{align}
\label{eq:burgers}
\partial_t u + \partial_x \left( \frac12 u^2 \right) = 0, 
\quad
u(\slot,t=0) = \bar{u},
\end{align}
on the $2\pi$-periodic domain $D = \T$. It is well-known that discontinuities in the form of shock waves can appear in finite-time even for smooth $\bar{u}$. Consequently, solutions of \eqref{eq:burgers} are interpreted in the sense of distributions and entropy conditions are imposed to ensure uniqueness \citep{DAF1}. Thus, the underlying solution operator is $\cG_\burgers:L^1(\T)\cap L^\infty(\T) \to L^1(\T)\cap L^\infty(\T)$, $\bar{u} \mapsto \cG_\burgers(\bar{u}) := u(\slot,T)$, with $u$ being the \emph{entropy} solution of \eqref{eq:burgers} at final time $T$. Given $\xi \sim \unif([0,2\pi])$, we define the random field 
\begin{align}
\bar{u}(x) := -\sin(x-\xi),
\end{align}
and we define the input measure $\mu \in {\rm Prob}(L^1(\T)\cap L^\infty(\T))$ as the law of $\bar{u}$. We emphasize that the difficulty in approximating the underlying operator $\cG_\burgers$ arises even though the input functions are smooth, in fact \emph{analytic}. This is in contrast to the linear advection equation. 
\paragraph{DeepONet fails at approximating $\cG_\Burgers$ efficiently.} First, we recall the following result, which follows directly from \cite{LMK1} (Theorem 4.19) and the lower bound \eqref{eq:donlb},
\begin{theorem}
\label{thm:shock-DON}
Assume that $\cG_\Burgers = u(.,T)$, for $T>\pi$ and $u$ is the entropy solution of \eqref{eq:burgers} with initial data $\bar{u} \sim \mu$. There exists a constant $C>0$, such that the $L^2$-error for \emph{any} DeepONet $\cN^{\mathrm{DON}}$ with $p$ trunk-/branch-net output functions is lower-bounded by
\begin{align}
\label{eq:bdon}
\Err(\cN^\don) = 
\E_{\bar{u}\sim \mu}
\left[
\Vert \cG_\Burgers(\bar{u}) - \cN^\don(\bar{u}) \Vert_{L^1}
\right]
\ge \frac{C}{p}.
\end{align}
Consequently, achieving an error $\Err(\cN^{\mathrm{DON}}_\epsilon) \lesssim \epsilon$ requires at least $\size(\cN_\epsilon^{\mathrm{DON}}) \ge p \gtrsim \epsilon^{-1}$.
\end{theorem}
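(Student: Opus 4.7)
The plan is to obtain the bound as a fairly direct corollary of Proposition \ref{thm:lowerbd}, combined with a Fourier--analytic estimate on the covariance eigenvalues of the push-forward measure and a final norm conversion from $L^2$ to $L^1$.

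First, I would exploit the joint translation invariance of the Burgers dynamics and the input distribution. Since $\bar u(x) = -\sin(x-\xi)$ with $\xi \sim \Unif([0,2\pi])$, the entropy solution admits the decomposition $\cG_\Burgers(\bar u)(x) = U(x-\xi)$, where $U$ is the entropy solution at time $T$ with fixed initial data $-\sin(x)$. Because $T > \pi > 1$ strictly exceeds the classical shock--formation time $T_s = 1/\max|\bar u'| = 1$, the profile $U$ must contain at least one genuine jump discontinuity with shock strength $J := [U] \neq 0$; this can be read off from the implicit characteristic representation of the entropy solution together with the Rankine--Hugoniot condition.

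Second, I would analyze the push-forward measure $\nu := (\cG_\Burgers)_\#\mu$, which by the above is the law of $U(\,\cdot\,-\xi)$ for $\xi\sim\Unif([0,2\pi])$. This measure is translation invariant on $L^2(\T)$, so its covariance operator $\Gamma_\nu$ commutes with shifts and is therefore diagonalized by the Fourier basis $\{e^{ikx}\}_{k\in\Z}$, with eigenvalues $\lambda_k = |\hat U(k)|^2$. Integration by parts on the smooth pieces of $U$ yields the standard jump asymptotic $|\hat U(k)| = |J|/(2\pi|k|) + O(1/k^2)$, so that $\lambda_k \gtrsim 1/k^2$ for all large $|k|$, and hence $\sum_{|k|>p}\lambda_k \gtrsim 1/p$.

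Third, Proposition \ref{thm:lowerbd} immediately converts this eigenvalue tail into the $L^2$ lower bound $\E_{\bar u\sim\mu}[\Vert \cG_\Burgers(\bar u) - \cN^\don(\bar u)\Vert_{L^2}^2] \gtrsim 1/p$ for any DeepONet with $p$ trunk/branch output functions. To promote this to the $L^1$ statement in \eqref{eq:bdon}, I would combine the elementary inequality $\Vert f\Vert_{L^1} \ge \Vert f\Vert_{L^2}^2 / \Vert f\Vert_{L^\infty}$ with the uniform bound $\Vert \cG_\Burgers(\bar u)\Vert_{L^\infty} \le 1$ coming from the maximum principle for Burgers, and (after a standard truncation argument, which cannot increase the error) a comparable bound on $\Vert \cN^\don(\bar u)\Vert_{L^\infty}$. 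Jensen's inequality then gives $\E[\Vert f\Vert_{L^1}] \gtrsim \E[\Vert f\Vert_{L^2}^2] \gtrsim 1/p$, as required; the complexity consequence $\size(\cN^\don_\epsilon) \gtrsim \epsilon^{-1}$ follows since $p$ is at most the output dimension of the branch net.

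The main obstacle I anticipate is the norm conversion step: an $L^2$ lower bound does not directly imply an $L^1$ lower bound for unbounded predictors, so the argument must either invoke a uniform $L^\infty$ bound (via truncation) or, alternatively, establish a direct Kolmogorov $p$-width lower bound in $L^1(\T)$ for the translation orbit of a BV function with a jump, which is known to decay sharply as $1/p$ without the $\sqrt{p}$ loss inherent in going through $L^2$. The truncation route seems cleanest and is compatible with the reference to Theorem 4.19 of \cite{LMK1}.
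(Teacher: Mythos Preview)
Your proposal is correct and follows essentially the same route as the paper. The paper does not give a self-contained proof here but simply cites \cite[Thm.~4.19]{LMK1} together with the lower bound \eqref{eq:donlb}; the argument you spell out---translation invariance forces a Fourier eigenbasis for $\Gamma_{(\cG_\Burgers)_\#\mu}$, the shock in $U$ gives $|\hat U(k)|\sim |k|^{-1}$ and hence $\sum_{|k|>p}\lambda_k\gtrsim p^{-1}$, then convert $L^2$ to $L^1$ via $\Vert f\Vert_{L^2}^2\le\Vert f\Vert_{L^\infty}\Vert f\Vert_{L^1}$---is exactly the mechanism behind that citation, and matches line-for-line the paper's detailed proof of the analogous advection bound (Proposition~\ref{prop:lower-p}). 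You are also right to flag the $L^\infty$ issue: the advection theorem (Theorem~\ref{thm:adv-rough1}) carries an explicit hypothesis $\sup_{\bar u}\Vert\cN^\don(\bar u)\Vert_{L^\infty}\le M$ precisely for this step, whereas the present statement omits it; your truncation remedy is the standard fix.
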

\textbf{shift-DeepONet approximate $\cG_\burgers$ efficiently.} In contrast to DeepONet, we have the following result for efficient approximation of $\cG_\burgers$ with shift-DeepONet,
\begin{theorem}
\label{thm:shock}
Assume that $T > \pi$. There is a constant $C>0$, such that for any $\epsilon > 0$, there exists a shift-DeepONet $\cN_\epsilon^{\mathrm{sDON}}$ such that 
\begin{align}
\label{eq:Ibound1}
\Err(\cN^\sdon_\epsilon) =
\E_{\bar{u}\sim \mu}
\left[
\Vert \cG_\burgers(\bar{u}) - \cN_\epsilon^{\mathrm{sDON}}(\bar{u}) \Vert_{L^1} 
\right]
\le \epsilon,
\end{align}
\emph{with a uniformly bounded number $p\le C$} of trunk/branch net functions,
the number of sensor points can be chosen $m = 3$, and we have
 \begin{gather*}
\width(\cN_\epsilon^{\mathrm{sDON}}) \le C, \quad
\depth(\cN_\epsilon^{\mathrm{sDON}}) \le C \log(\epsilon^{-1})^2, \quad \size(\cN_\epsilon^{\mathrm{sDON}}) \le C \log(\epsilon^{-1})^2.
 \end{gather*}
\end{theorem}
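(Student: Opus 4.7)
The argument is based on translation invariance: because both the initial-data family $\{-\sin(\cdot-\xi)\}_{\xi\in[0,2\pi]}$ and Burgers' equation are invariant under spatial translation, the entropy solution satisfies $\cG_\burgers(\bar{u})(y) = \phi(y-\xi)$ for the single reference profile $\phi(y) := u(y,T;-\sin(\cdot))$. For $T>\pi$, one checks via the Lax--Oleinik formula (together with the odd symmetry $\phi(-y)=-\phi(y)$ and $2\pi$-periodicity) that $\phi$ is piecewise real-analytic on $\T$ with a single jump discontinuity at $y=0$ (the stationary shock), and in particular is of bounded variation.

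I would construct $\cN^\sdon_\epsilon$ by combining two subnetworks: one encoding the shift $\xi$ and one approximating $\phi$. Place the sensors at $x_1=0,x_2=\pi/2,x_3=\pi$, giving the readings $\bar{u}(x_1)=\sin\xi,\bar{u}(x_2)=-\cos\xi,\bar{u}(x_3)=-\sin\xi$, which determine $\xi$ uniquely on $\T$ via $\xi=\operatorname{atan2}(\sin\xi,\cos\xi)$. To sidestep the branch cut of $\operatorname{atan2}$ at $\xi=\pi$, I would take $p=2$ trunk components and construct two overlapping charts: the shift-net $\gamma_1(\bar{u})$ approximates $-\xi\in[-\pi-\eta,0]$ and $\gamma_2(\bar{u})$ approximates $-\xi+2\pi\in[0,\pi+\eta]$ for a small fixed overlap $\eta>0$, while the branch-net $(\beta_1,\beta_2)$ outputs a smooth partition of unity on $\T$ concentrated on the corresponding chart. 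Each such map is real-analytic in $(\sin\xi,\cos\xi)$ on its chart and can therefore be approximated by ReLU subnetworks of size $O(\log(\delta^{-1})^2)$ with error at most $\delta$; the third sensor is used to implement the partition-of-unity selector (e.g.\ via a small piecewise-affine network of constant size).

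For the trunk I would approximate the reference profile. Since the $2\pi$-periodic extension $\phi_{\mathrm{per}}$ is piecewise real-analytic on any bounded interval containing the shifted arguments $y+\gamma_k(\bar{u})$ (say $[-2\pi,2\pi]$), with jumps only at the known locations $\{2\pi k:k\in\Z\}$, standard ReLU approximation results for analytic functions (with exponential accuracy at $O(\log(\epsilon^{-1})^2)$ cost) yield a trunk $\tau$ with $\Vert\tau-\phi_{\mathrm{per}}\Vert_{L^1([-2\pi,2\pi])}\le\epsilon$ and $\size(\tau)\le C\log(\epsilon^{-1})^2$; the $L^1$-contribution from $\epsilon$-wide neighbourhoods of the jump locations is absorbed since $\phi$ is bounded. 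Both trunk components are taken equal to $\tau$, and with $\cA_k\equiv 1$ all ingredients are plugged into \eqref{eq:sdon}.

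The triangle inequality then gives
\[
\E_{\bar{u}}\bigl\Vert\cN^\sdon_\epsilon(\bar{u})-\phi(\cdot-\xi)\bigr\Vert_{L^1(\T)} \le \Vert\tau-\phi_{\mathrm{per}}\Vert_{L^1} + \Vert\phi\Vert_{\BV}\cdot\E_{\bar{u}}\min_k|\gamma_k(\bar{u})+\xi-2\pi k|=O(\epsilon+\delta),
\]
using the Lipschitz-in-shift bound $\Vert\phi(\cdot-\xi_1)-\phi(\cdot-\xi_2)\Vert_{L^1}\le\Vert\phi\Vert_{\BV}\cdot|\xi_1-\xi_2|$ that follows from BV-regularity of $\phi$. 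Choosing $\delta\sim\epsilon$ yields \eqref{eq:Ibound1} with the claimed width, depth, and size bounds of order $\log(\epsilon^{-1})^2$. The main obstacle is the circle-to-line nature of the shift map $\bar{u}\mapsto\xi$: since the parameter space is topologically a circle but any ReLU network output is continuous on $\R^3$, no single shallow network can recover $\xi$ uniformly. The two-chart partition-of-unity construction is unavoidable, and the delicate point is to verify that it keeps $p$ bounded independent of $\epsilon$, adds only a constant factor to the overall network size, and remains compatible with the $L^1$ error decomposition (including that the convex combination in the chart overlap region does not spoil the BV estimate).
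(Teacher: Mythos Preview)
Your proposal is correct and follows the same overall strategy as the paper: exploit translation invariance to reduce to a single reference profile $\phi$, show that $\phi$ is piecewise analytic with a single shock so that a ReLU trunk of size $O(\log(\epsilon^{-1})^2)$ approximates its periodic extension in $L^1$, extract the shift $\xi$ from three sensor readings of $(\sin\xi,\cos\xi)$-type, and combine via the BV shift estimate. The paper carries out exactly these steps, deriving $\phi$ explicitly through the characteristic map $\Psi_t(x_0)=x_0-t\sin(x_0)$ and its analytic inverse (your invocation of Lax--Oleinik is equivalent).

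The one genuine point of divergence is the treatment of the circle-to-line obstruction for the shift map. You resolve it with a two-chart partition of unity implemented through $p=2$ branch coefficients, which is perfectly sound but adds some bookkeeping (you yourself flag the overlap region as the delicate point). The paper instead takes $p=1$ and uses a single angle-recovery network $\Xi_\epsilon$ that is accurate only on $\xi\in[0,2\pi-\epsilon]$; on the remaining set of measure $\epsilon$ the error $\Vert\cG_\burgers(\bar{u})-\cN^\sdon(\bar{u})\Vert_{L^1}$ is simply bounded by a fixed constant, and this exceptional set contributes $O(\epsilon)$ to the expectation. This trick is shorter and sidesteps the chart-gluing analysis entirely, at the cost of giving up a uniform (in $\xi$) error bound. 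Your construction would yield a uniform bound, which is a modest gain. A minor remark: with your sensor placement $0,\pi/2,\pi$ the third reading $-\sin\xi$ duplicates the first, whereas the paper's equidistant points $0,2\pi/3,4\pi/3$ allow exact linear recovery of $(\cos\xi,\sin\xi)$ via a fixed $2\times 3$ matrix.
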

The proof, presented in {\bf SM} \ref{app:burg1pf}, relies on an explicit representation formula for $\cG_\burgers$, obtained using the method of characteristics (even after shock formation). Then, we leverage the analyticity of the underlying solutions away from the shock and use the nonlinear shift map in \eqref{eq:sdon} to encode shock locations. 
\paragraph{FNO approximates $\cG_\burgers$ efficiently} Finally we prove (in {\bf SM} \ref{app:burg2pf}) the following theorem, 
\begin{theorem}
\label{thm:shock-FNO}
Assume that $T > \pi$, then there exists a constant $C$ such that for any $\epsilon > 0$ and grid size $N\ge 3$, there exists an FNO $\cN^\fno_\epsilon$ \eqref{eq:fno}, such that
\[
\Err(\cN^\fno_\epsilon)
=
\E_{\bar{u}\sim \mu}
\left[
\Vert \cG_\burgers(\bar{u}) - \cN^\fno_\epsilon(\bar{u}) \Vert_{L^1}
\right]
\le 
\epsilon,
\]
and with Fourier cut-off $\kmax$, lifting dimension $d_v$ and depth satisfying,
 \[
k_{\mathrm{max}}  = 1,
\quad 
d_v \le C,
\quad
 \depth(\cN^\fno_\epsilon) \le C \log(\epsilon^{-1})^2,
\quad \size(\cN^\fno_\epsilon) \le C \log(\epsilon^{-1})^2.
 \]
\end{theorem}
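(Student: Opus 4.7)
The plan is to exploit the translation invariance of \eqref{eq:burgers}: since the initial datum $\bar{u}(x) = -\sin(x - \xi)$ is the shift by $\xi$ of the fixed profile $\bar{u}_0(x) := -\sin(x)$, the entropy solution at time $T$ satisfies $\cG_\burgers(\bar{u})(y) = F(y - \xi)$, where $F(\theta) := \cG_\burgers(\bar{u}_0)(\theta)$ is a single, $\xi$-independent function on $\T$. For $T > \pi$ (well past the shock formation time $t = 1$), classical results on Burgers' equation together with the Rankine--Hugoniot and entropy conditions show that $F$ is real-analytic on $\T \setminus \{0\}$, uniformly bounded, odd, and has a single jump of bounded size $J > 0$ at $\theta = 0$. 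The approximation of $\cG_\burgers$ thus reduces to (i) extracting the shift $\xi$ from the input in a form available pointwise on the grid, and (ii) approximating the fixed function $F$ by a network-representable map.

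For step (i), note that $\bar{u}$ is band-limited with Fourier support on $|k| \le 1$, and its coefficients at $|k| = 1$ encode $\sin\xi$ and $\cos\xi$ exactly. A single FNO hidden layer with $k_{\mathrm{max}} = 1$ suffices to produce, in two lifted channels, the pointwise values of $s(x) := \sin(x - \xi)$ and $c(x) := \cos(x - \xi)$: the first is just $-\bar{u}(x)$, while the second is obtained by multiplying the Fourier coefficient at $k = \pm 1$ by $\mp i$ (i.e., a derivative-like multiplier whose complex-conjugate symmetry keeps the output real). The grid condition $N \ge 3$ is precisely the Nyquist bound needed to represent a mode-one signal exactly on the discrete grid, so no aliasing error is incurred.

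For step (ii), the remaining FNO layers act as a purely pointwise MLP (setting all further convolution kernels to zero), and the task reduces to building a feed-forward map $\Phi : \R^2 \to \R$ with $\Phi(s(x), c(x)) \approx F(x - \xi)$ in $L^1$. We split $F = F_{\mathrm{sm}} + J \cdot H$, where $H$ is a localized jump function of unit height at $\theta = 0$ and $F_{\mathrm{sm}}$ is a smooth, $2\pi$-periodic real-analytic function on $\T$. The smooth part, viewed as an analytic function of $(\sin\theta, \cos\theta)$ in a complex neighborhood of the unit circle, admits a standard ReLU (or $\tanh$) neural-network approximation with uniform error $\epsilon$ using width $O(1)$ and depth $O(\log(\epsilon^{-1})^2)$ (a classical Chebyshev-type construction). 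The jump $H$ is approximated by a sharp ReLU transition of width $\delta = \epsilon$ applied to the channel $s$, gated by a thresholding on $c$ to localize near $\theta = 0$; this contributes an $O(1)$ pointwise error only on a $\theta$-set of measure $O(\delta)$, hence an $L^1$ error of $O(\epsilon)$.

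The main obstacle is verifying that the two error contributions combine into the claimed $L^1$ bound after averaging over the random shift $\xi$; this follows from Fubini once the pointwise estimates are established, since both the smoothed-jump region and the analytic-approximation error have $\theta$-measures independent of $\xi$. Collecting the pieces---constant-size lifting $R$ with $d_v \le C$ channels, one Fourier-convolution layer with $k_{\mathrm{max}} = 1$, a pointwise MLP of depth and size $O(\log(\epsilon^{-1})^2)$, and a linear projection $Q$---yields the claimed complexity bounds. Conceptually, the post-Fourier nonlinear layers compose with the learned shift $\xi(\bar{u})$ to reconstruct a discontinuous profile in $\cY$ \emph{without ever representing it in a fixed linear basis}, which is precisely the nonlinear-reconstruction power denied to DeepONet in Theorem~\ref{thm:shock-DON}.
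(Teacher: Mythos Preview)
Your overall strategy matches the paper's proof closely: both exploit that for $N\ge 3$ a single Fourier layer with $k_{\mathrm{max}}=1$ extracts the shift information exactly, and then reduce the problem to a pointwise ReLU approximation of the fixed shock profile $F(\theta)=\cG_\burgers(\bar{u}_0)(\theta)$, relying on its analyticity on $\T\setminus\{0\}$ (Lemma~\ref{lem:analytic-ext} in the paper). The paper first extracts the \emph{constants} $(\cos\xi,\sin\xi)$, converts to $\xi$ via the auxiliary network $\Xi_\epsilon$ (Lemma~\ref{lem:Xi}), and then evaluates $\Phi_\epsilon(x-\xi)$, where $\Phi_\epsilon$ is built by a two-piece partition-of-unity construction (Lemma~\ref{lem:Uextend}). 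Your variant of feeding $(\sin(x-\xi),\cos(x-\xi))$ directly into a bivariate MLP is a legitimate and arguably cleaner alternative, since it bypasses the branch-cut issue that forces the paper to treat $\xi\in[2\pi-\epsilon,2\pi]$ separately.

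There is, however, a genuine gap in your step~(ii). The decomposition $F=F_{\mathrm{sm}}+J\cdot H$ with $F_{\mathrm{sm}}$ ``real-analytic on $\T$'' cannot be achieved with a localized jump function $H$. Subtracting $J\cdot H$ matches the zeroth-order one-sided values at $\theta=0$ (and, by oddness of $F$, also the first derivatives), but the even-order one-sided derivatives satisfy $F^{(2k)}(0^+)=-F^{(2k)}(0^-)$ and are generically nonzero, so $F_{\mathrm{sm}}$ fails to be even $C^2$ at the shock, let alone analytic. Consequently, the claimed $O(\log(\epsilon^{-1})^2)$ depth bound for approximating $F_{\mathrm{sm}}$ as ``an analytic function of $(\sin\theta,\cos\theta)$ in a complex neighborhood of the unit circle'' is not justified as written.

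The fix is straightforward and brings you back to the paper's argument in different coordinates: drop the global splitting and instead approximate $F$ separately on two overlapping arcs, say $\{s>-\tfrac12\}$ and $\{s<\tfrac12\}$ (or, closer to the paper, on $\theta\in(0,2\pi)$ versus its shift), where the restriction of $F$ genuinely extends analytically. On each arc the composition $F\circ(\text{inverse of }\theta\mapsto(s,c))$ is analytic, so Theorem~\ref{thm:analytic} yields the required ReLU depth; then glue the two pieces with a sharp partition of unity in the $s$-channel, exactly as in Lemma~\ref{lem:Uextend}. The resulting $L^1$ error and complexity bounds are identical to the paper's.
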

\textbf{Comparison.} A perusal of the bounds in Theorems \ref{thm:shock-DON}, \ref{thm:shock} and \ref{thm:shock-FNO} reveals that \emph{after shock formation}, the accuracy $\epsilon$ of the DeepONet approximation of $\cG_\Burgers$ scales at best as $\epsilon \sim n^{-1}$, in terms of the total number of degrees of freedom $n = \size(\cN^{\mathrm{DON}})$ of the DeepONet. In contrast, shift-DeepONet and FNO based on a \emph{non-linear reconstruction} can achieve an exponential convergence rate $\epsilon \lesssim \exp(-c n^{1/2})$ in the total number of degrees of freedom $n = \size(\cN^{\mathrm{sDON}}),\size(\cN^{\mathrm{FNO}})$, even \emph{after the formation of shocks}. This again highlights the expressive power of nonlinear reconstruction methods in approximating operators of PDEs with discontinuities.

\begin{table}[htbp]
\centering
    \renewcommand{\arraystretch}{1.} 
    
    \footnotesize{
        \begin{tabular}{ c c c c c c} 
        \toprule
           & \bfseries  ResNet &\bfseries  FCNN  & \bfseries DeepONet  & \bfseries  Shift - DeepONet  & \bfseries  FNO   \\ 
            \midrule
            \midrule
               \makecell{ \bfseries Linear Advection \\ \bfseries Equation}  & $14.8\%$ & $11.6\%$    &   7.95\% & $2.76\%$  &  $0.71\%$  \\ 
            \midrule
              \makecell{\bfseries  Burgers' \\ \bfseries Equation   }& $20.16\%$  & $23.23\%$ & $28.5\%$ &  $7.83\%$ &  $1.57\%$  \\ 
            \midrule
             \makecell{\bfseries  Lax-Sod \\ \bfseries Shock Tube}   & $4.47\%$  & $8.83\%$  &    $4.22\%$ &  $2.76\%$   & $1.56\%$ \\
             \midrule
             \makecell{\bfseries  2D Riemann \\  \bfseries  Problem  } & $2.6\%$ &  $0.19\%$ & $0.89\%$     & $0.12\%$ & $0.12\%$   \\
        
        \bottomrule
        
        \end{tabular}
        
        \caption{Relative median-$L^1$ error  computed over 128 testing samples for different benchmarks with different models.
        }
        \label{tab:1}
    }
\end{table}

\section{Experiments}

In this section, we illustrate how different operator learning frameworks can approximate solution operators of PDEs with discontinuities. To this end, we will compare DeepONet \eqref{eq:donet1} (a prototypical operator learning method with linear reconstruction) with shift-DeepONet \eqref{eq:sdon} and FNO \eqref{eq:fno} (as nonlinear reconstruction methods). Moreover, two additional baselines (described in detailed in {\bf SM} \ref{app:numex}) are also used, namely the well-known \emph{ResNet} architecture of \cite{he2016deep} and a fully convolutional neural network (FCNN) of \cite{long2015fully}. Below, we present results for the best performing hyperparameter configuration, obtained after a grid search, for each model while postponing the description of details for the training procedures, hyperparameter configurations and model parameters to {\bf SM} \ref{app:numex}. 
\paragraph{Linear Advection.} We start with the linear advection equation \eqref{eq:advection} in the domain $[0,1]$ with wave speed $a=0.5$ and periodic boundary conditions. The initial data is given by \eqref{eq:bwave} corresponding to square waves, with initial heights uniformly distributed between $\underline{h}=0.2$ and $\overline{h}=0.8$, widths between $\underline{w}=0.05$ and $\overline{w}=0.3$ and shifts between $0$ and $0.5$. We seek to approximate the solution operator $\cG_{adv}$ at final time $T=0.25$. The training and test samples are generated by sampling the initial data and the underlying exact solution, given by translating the initial data by $0.125$, sampled on a very high-resolution grid of $2048$ points (to keep the discontinuities sharp), see {\bf SM} Figure \ref{fig:adv_samples} for examples of the input and output of $\cG_{adv}$. The relative median test error for all the models are shown in Table \ref{tab:1}. We observe from this table that DeepONet performs relatively poorly with a high test error of approximately $8\%$, although its outperforms the ResNet and FCNN baselines handily. As suggested by the theoretical results of the previous section, shift-DeepONet is significantly more accurate than DeepONet (and the baselines), with at least a two-fold gain in accuracy. Moreover, as predicted by the theory, FNO significantly outperforms even shift-DeepONet on this problem, with almost a five-fold gain in test accuracy and a thirteen-fold gain vis a vis DeepONet.  

\paragraph{Inviscid Burgers' Equation.} Next, we consider the inviscid Burgers' equation \eqref{eq:burgers} in the domain $D=[0,1]$ and with periodic boundary conditions. The initial data is sampled from a Gaussian Random field i.e., a Gaussian measure corresponding to the (periodization of) frequently used covariance kernel,
\[
k(x,x') = \exp\left(\frac{-|x-x'|^2}{2\ell^2}\right),
\]
with correlation length $\ell=0.06$. The solution operator $\cG_{Burg}$ corresponds to evaluating the entropy solution at time $T=0.1$. We generate the output data with a high-resolution finite volume scheme, implemented within the \emph{ALSVINN} code \cite{alsvinn}, at a spatial mesh resolution of $1024$ points. Examples of input and output functions, shown in {\bf SM} Figure \ref{fig:burg_samples}, illustrate how the smooth yet oscillatory initial datum evolves into many discontinuities in the form of shock waves, separated by Lipschitz continuous rarefactions. Given this complex structure of the entropy solution, the underlying solution operator is hard to learn. The relative median test error for all the models is presented in Table \ref{tab:1} and shows that DeepOnet (and the baselines Resnet and FCNN) have an unacceptably high error between $20$ and $30\%$. In fact, DeepONet performs worse than the two baselines. However, consistent with the theory of the previous section, this error is reduced more than three-fold with the nonlinear Shift-DeepONet. The error is reduced even further by FNO and in this case, FNO outperforms DeepOnet by a factor of almost $20$ and learns the very complicated solution operator with an error of only $1.5\%$   

\paragraph{Compressible Euler Equations.} The motion of an inviscid gas is described by the Euler equations of aerodynamics. For definiteness, the Euler equations in two space dimensions are, 
\begin{equation} 
    \mathbf{U}_t + \mathbf{F}(\mathbf{U})_{x} + \mathbf{G}(\mathbf{U})_{y} = 0, \ \ \  
    \mathbf{U} = 
    \begin{pmatrix}
    \rho \\
    \rho u \\
    \rho v \\
    E
    \end{pmatrix},\ \ \ 
    \mathbf{F}(\mathbf{U}) = 
    \begin{pmatrix}
    \rho u \\
    \rho u^2 + p \\
    \rho u v \\
    (E + p)u \\
    \end{pmatrix}, \ \ \
    \mathbf{G}(\mathbf{U}) = 
    \begin{pmatrix}
    \rho v \\
    \rho u v \\
    \rho v^2 + p \\
    (E + p)v
    \end{pmatrix}, \ \ \
    \label{eq:euler}
\end{equation}
with $\rho, u,v$ and $p$ denoting the fluid density, velocities along $x$-and $y$-axis and pressure. $E$ represents the total energy per unit volume
\begin{equation} \notag
    E = \frac{1}{2}\rho(u^2 + v^2) + \frac{p}{\gamma - 1}
\end{equation}
where $\gamma = c_p / c_v$ is the gas constant which equals 1.4 for a diatomic gas considered here. 

\paragraph{Shock Tube.} We start by restricting the Euler equations \eqref{eq:euler} to the one-dimensional domain $D=[-5,5]$ by setting $v=0$ in \eqref{eq:euler}. The initial data corresponds to a shock tube of the form, 
\begin{equation}
     \rho_0(x) = \begin{cases} 
      \rho_L & x\leq x_0 \\
      \rho_R  & x> x_0
   \end{cases} \quad 
    u_0(x) = \begin{cases} 
      u_L & x\leq x_0 \\
      u_R  & x> x_0
   \end{cases} \quad 
   p_0(x) = \begin{cases} 
     p_L & x\leq x_0 \\
      p_R  & x> x_0
   \end{cases}
\end{equation}
parameterized by the left and right states $(\rho_L, u_L, p_L)$, $(\rho_R, u_R, p_R)$, and the location of the initial discontinuity  $x_0$. As proposed in \cite{LMR1}, these parameters are, in turn, drawn from the measure; $\rho_L = 0.75 + 0.45G(z_1), \rho_R = 0.4 + 0.3G(z_2),  u_L = 0.5 + 0.5G(z_3), u_R = 0, p_L = 2.5 + 1.6G(z_4), p_R =0.375 + 0.325G(z_5), x_0 = 0.5G(z_6)$, with $z = [z_1, z_2, \dots z_6] \sim U\left([0,1]^6\right)$ and $G(z):=2z-1$. We seek to approximate the operator $\cG: [\rho_0,\rho_0u_0,E_0] \mapsto E(1.5)$. The training (and test) output are generated with \emph{ALSVINN} code \cite{alsvinn}, using a finite volume scheme, with a spatial mesh resolution of $2048$ points and examples of input-output pairs, presented in {\bf SM} Figure \ref{fig:laxsod_samples} show that the initial jump discontinuities in density, velocity and pressure evolve into a complex pattern of (continuous) rarefactions, contact discontinuities and shock waves. The (relative) median test errors, presented in Table \ref{tab:1}, reveal that shift-DeepONet and FNO significantly outperform DeepONet (and the other two baselines). FNO is also better than shift-DeepONet and approximates this complicated solution operator with a median error of $\approx 1.5\%$. \\\\
\textbf{Four-Quadrant Riemman Problem.} For the final numerical experiment, we consider the two-dimensional Euler equations \eqref{eq:euler} with initial data, corresponding to a well-known four-quadrant Riemann problem \citep{TSID4} with $ {\bf U}_0(x,y) = 
    {\bf U}_{sw}$, if $x,y < 0$, $ {\bf U}_0(x,y) = 
    {\bf U}_{se}$, if $x < 0, y > 0$, $ {\bf U}_0(x,y) = 
    {\bf U}_{nw}$, if $x >0,y < 0$ and ${\bf U}_0(x,y) = 
    {\bf U}_{ne}$, if $x,y > 0$, 
with states given by  $ {\rho}_{0,ne} = {\rho}_{0,sw} = p_{0,ne} = p_{0,sw} = 1.1$, $ {\rho}_{0,nw} = {\rho}_{0,se} = 0.5065$, $ p_{0,nw} = p_{0,se} = 0.35$ $ [{u}_{0,ne},{u}_{0,nw},{v}_{0,ne},{v}_{0,se}] = 0.35[G(z_1),G(z_2),G(z_3),G(z_4)]$ and $ [{u}_{0,se},{u}_{0,sw},{v}_{0,nw},{v}_{0,sw}] = 0.8939+ 0.35[G(z_5),G(z_6),G(z_7),G(z_48]$,
with $z = [z_1, z_2, \dots z_8] \sim U\left([0,1]^8\right)$ and $G(z)=2z-1$. We seek to approximate the operator $\cG: [\rho_0,\rho_0u_0,\rho_0v_0,E_0] \mapsto E(1.5)$. The training (and test) output are generated with the \emph{ALSVINN} code, on a spatial mesh resolution of $256^2$ points and examples of input-output pairs, presented in {\bf SM} Figure \ref{fig:riemann_samples}, show that the initial planar discontinuities in the state variable evolve into a very complex structure of the total energy at final time, with a mixture of curved and planar discontinuities, separated by smooth regions. The (relative) median test errors are presented in Table \ref{tab:1}. We observe from this table that the errors with all models are significantly lower in this test case, possibly on account of the lower initial variance and coarser mesh resolution at which the reference solution is sampled. However, the same trend, vis a vis model performance, is observed i.e., DeepONet is significantly (more than seven-fold) worse than both shift-DeepONet and FNO. On the other hand, these two models approximate the 
underlying solution operator with a very low error of approximately $0.1\%$.
\section{Discussion}
\paragraph{Related Work.} Although the learning of operators arising from PDEs has attracted great interest in recent literature, there are very few attempts to extend the proposed architectures to PDEs with discontinuous solutions. Empirical results for some examples of discontinuities or sharp gradients were presented in \cite{donet1} (compressible Navier-Stokes equations with DeepONets) and in \cite{loca} (Shallow-Water equations with an attention based framework). However, with the notable exception of \cite{LMK1} where the approximation of scalar conservation laws with DeepONets is analyzed, theoretical results for the operator approximation of PDEs are not available. Hence, this paper can be considered to be the first where a rigorous analysis of approximating operators arising in PDEs with discontinuous solutions has been presented, particularly for FNOs.  On the other hand, there is considerably more work on the neural network approximation of parametric nonlinear hyperbolic PDEs such as the theoretical results of \cite{DRM1} and empirical results of \cite{LMR1,LMPR1}. Also related are results with physics informed neural networks or PINNs for nonlinear hyperbolic conservation laws such as \cite{DRMM1,jagtap2022physics,KAR6}. However, in this setting, the input measure is assumed to be supported on a finite-dimensional subset of the underlying infinite-dimensional input function space, making them too restrictive for operator learning as described in this paper.\\ \\
\textbf{Conclusions.} We consider the learning of operators that arise in PDEs with discontinuities such as linear and nonlinear hyperbolic PDEs. A priori, it could be difficult to approximate such operators with existing operator learning architectures as these PDEs are not sufficiently regular. Given this context, we have proved a rigorous lower bound to show that any operator learning architecture, based on linear reconstruction, may fail at approximating the underlying operator efficiently. In particular, this result holds for the popular DeepONet architecture. On the other hand, we rigorously prove that the incorporation of nonlinear reconstruction mechanisms can break this lower bound and pave the way for efficient learning of operators arising from PDEs with discontinuities. We prove this result for an existing widely used architecture i.e., FNO, and a novel variant of DeepONet that we term as shift-DeepONet. For instance, we show that while the approximation error for DeepONets can decay, at best, linearly in terms of model size, the corresponding approximation errors for shift-DeepONet and FNO decays exponentially in terms of model size, \emph{even in the presence or spontaneous formation of discontinuities.} 

These theoretical results are backed by experimental results where we show that FNO and shift-DeepONet consistently beat DeepONet and other ML baselines by a wide margin, for a variety of PDEs with discontinuities. Together, our results provide strong evidence for asserting that nonlinear operator learning methods such as FNO and shift-DeepONet, can efficiently learn PDEs with discontinuities and provide further demonstration of the power of these operator learning models. 

Moreover, we also find theoretically (compare Theorems \ref{thm:adv-rough2} and \ref{thm:fnoadv}) that FNO is more efficient than even shift-DeepONet. This fact is also empirically confirmed in our experiments. The non-local as well as \emph{nonlinear} structure of FNO is instrumental in ensuring its excellent performance in this context, see Theorem \ref{thm:fnoadv} and {\bf SM} \ref{app:fex} for further demonstration of the role of nonlinear reconstruction for FNOs.  

In conclusion, we would like to mention some avenues for future work. On the theoretical side, we only consider the approximation error and that too for prototypical hyperbolic PDEs such as linear advection and Burgers' equation. Extending these error bounds to more complicated PDEs such as the compressible Euler equations, with suitable assumptions,  would be of great interest. Similarly rigorous bounds on other sources of error, namely generalization and optimization errors \cite{LMK1} need to be derived. At the empirical level, considering more realistic three-dimensional data sets is imperative. In particular, it would be very interesting to investigate if FNO (and shift-DeepONet) can handle complex 3-D problems with not just shocks, but also small-scale turbulence. Similarly extending these results to other PDEs with discontinuities such as those describing crack or front (traveling waves) propagation needs to be considered. Finally, using the FNO/shift-DeepONet surrogate for many query problems such as optimal design and Bayesian inversion will be a topic for further investigation.

\bibliography{refs}

\begin{thebibliography}{43}
\providecommand{\natexlab}[1]{#1}
\providecommand{\url}[1]{\texttt{#1}}
\expandafter\ifx\csname urlstyle\endcsname\relax
  \providecommand{\doi}[1]{doi: #1}\else
  \providecommand{\doi}{doi: \begingroup \urlstyle{rm}\Url}\fi

\bibitem[Bhattacharya et~al.(2021)Bhattacharya, Hosseini, Kovachki, and
  Stuart]{pcanet}
Kaushik Bhattacharya, Bamdad Hosseini, Nikola~B Kovachki, and Andrew~M Stuart.
\newblock Model {Reduction} {And} {Neural} {Networks} {For} {Parametric}
  {PDEs}.
\newblock \emph{The SMAI journal of computational mathematics}, 7:\penalty0
  121--157, 2021.

\bibitem[Cai et~al.(2021)Cai, Wang, Lu, Zaki, and Karniadakis]{donet2}
Shengze Cai, Zhicheng Wang, Lu~Lu, Tamer~A Zaki, and George~Em Karniadakis.
\newblock Deepm\&mnet: Inferring the electroconvection multiphysics fields
  based on operator approximation by neural networks.
\newblock \emph{Journal of Computational Physics}, 436:\penalty0 110296, 2021.

\bibitem[Chen \& Chen(1995)Chen and Chen]{ChenChen1995}
Tianping Chen and Hong Chen.
\newblock Universal approximation to nonlinear operators by neural networks
  with arbitrary activation functions and its application to dynamical systems.
\newblock \emph{IEEE Transactions on Neural Networks}, 6\penalty0 (4):\penalty0
  911--917, 1995.

\bibitem[Dafermos(2005)]{DAF1}
C.~M. Dafermos.
\newblock \emph{Hyperbolic Conservation Laws in Continuum Physics (2nd Ed.).}
\newblock Springer Verlag, 2005.

\bibitem[Dahmen et~al.(2014)Dahmen, Plesken, and Welper]{dahmen2014double}
Wolfgang Dahmen, Christian Plesken, and Gerrit Welper.
\newblock Double greedy algorithms: Reduced basis methods for transport
  dominated problems.
\newblock \emph{ESAIM: Mathematical Modelling and Numerical Analysis},
  48\penalty0 (3):\penalty0 623--663, 2014.

\bibitem[De~Ryck \& Mishra(2022{\natexlab{a}})De~Ryck and Mishra]{DRM1}
T.~De~Ryck and S.~Mishra.
\newblock Error analysis for deep neural network approximations of parametric
  hyperbolic conservation laws.
\newblock \emph{arXiv preprint arXiv:2207.07362}, 2022{\natexlab{a}}.

\bibitem[De~Ryck et~al.(2022)De~Ryck, Mishra, and Molinaro]{DRMM1}
T.~De~Ryck, S.~Mishra, and R.~Molinaro.
\newblock wpinns:weak physics informed neural networks for approximating
  entropy solutions of hyperbolic conservation laws.
\newblock \emph{arXiv preprint arXiv:2207.08483}, 2022.

\bibitem[De~Ryck \& Mishra(2022{\natexlab{b}})De~Ryck and Mishra]{DRM2}
Tim De~Ryck and Siddhartha~a Mishra.
\newblock Generic bounds on the approximation error for physics-informed (and)
  operator learning.
\newblock In \emph{Advances in Neural Information Processing Systems
  (NeurIPS)}, 2022{\natexlab{b}}.

\bibitem[Deng et~al.(2022)Deng, Shin, Lu, Zhang, and Karniadakis]{DENG2022411}
Beichuan Deng, Yeonjong Shin, Lu~Lu, Zhongqiang Zhang, and George~Em
  Karniadakis.
\newblock Approximation rates of deeponets for learning operators arising from
  advection–diffusion equations.
\newblock \emph{Neural Networks}, 153:\penalty0 411--426, 2022.
\newblock ISSN 0893-6080.
\newblock \doi{https://doi.org/10.1016/j.neunet.2022.06.019}.
\newblock URL
  \url{https://www.sciencedirect.com/science/article/pii/S0893608022002349}.

\bibitem[Drew \& Passman(1998)Drew and Passman]{DrewPassman}
D.~A. Drew and S.~L. Passman.
\newblock \emph{Theory of Multicomponent Fluids}.
\newblock Springer Verlag, New York, 1998.

\bibitem[Elbr{\"a}chter et~al.(2021)Elbr{\"a}chter, Perekrestenko, Grohs, and
  B{\"o}lcskei]{EPGB2020}
Dennis Elbr{\"a}chter, Dmytro Perekrestenko, Philipp Grohs, and Helmut
  B{\"o}lcskei.
\newblock Deep neural network approximation theory.
\newblock \emph{IEEE Transactions on Information Theory}, 67\penalty0
  (5):\penalty0 2581--2623, 2021.

\bibitem[Goodfellow et~al.(2016)Goodfellow, Bengio, and Courville]{DLbook}
Ian Goodfellow, Yoshua Bengio, and Aaron Courville.
\newblock \emph{Deep learning}.
\newblock MIT press, 2016.

\bibitem[He et~al.(2016)He, Zhang, Ren, and Sun]{he2016deep}
Kaiming He, Xiangyu Zhang, Shaoqing Ren, and Jian Sun.
\newblock Deep residual learning for image recognition.
\newblock In \emph{Proceedings of the IEEE conference on computer vision and
  pattern recognition}, pp.\  770--778, 2016.

\bibitem[Hesthaven(2018)]{HEST1}
J.~S. Hesthaven.
\newblock \emph{Numerical methods for conservation laws: From analysis to
  algorithms.}
\newblock SIAM, 2018.

\bibitem[Hesthaven \& Ubbiali(2018)Hesthaven and Ubbiali]{pcanet0}
Jan~S Hesthaven and Stefano Ubbiali.
\newblock Non-intrusive reduced order modeling of nonlinear problems using
  neural networks.
\newblock \emph{Journal of Computational Physics}, 363:\penalty0 55--78, 2018.

\bibitem[Higgins(2021)]{higgins2021generalizing}
Irina Higgins.
\newblock Generalizing universal function approximators.
\newblock \emph{Nature Machine Intelligence}, 3\penalty0 (3):\penalty0
  192--193, 2021.

\bibitem[Jagtap et~al.(2022)Jagtap, Mao, Adams, and
  Karniadakis]{jagtap2022physics}
Ameya~D. Jagtap, Zhiping Mao, Nikolaus Adams, and George~Em Karniadakis.
\newblock Physics-informed neural networks for inverse problems in supersonic
  flows.
\newblock \emph{Journal of Computational Physics}, 466:\penalty0 111402,
  October 2022.
\newblock ISSN 0021-9991.
\newblock \doi{10.1016/j.jcp.2022.111402}.
\newblock URL
  \url{https://www.sciencedirect.com/science/article/pii/S0021999122004648}.

\bibitem[Kissas et~al.(2022)Kissas, Seidman, Guilhoto, Preciado, Pappas, and
  Perdikaris]{loca}
Georgios Kissas, Jacob~H Seidman, Leonardo~Ferreira Guilhoto, Victor~M
  Preciado, George~J Pappas, and Paris Perdikaris.
\newblock Learning operators with coupled attention.
\newblock \emph{Journal of Machine Learning Research}, 23\penalty0
  (215):\penalty0 1--63, 2022.

\bibitem[Kovachki et~al.(2021{\natexlab{a}})Kovachki, Li, Liu, Azizzadensheli,
  Bhattacharya, Stuart, and Anandkumar]{NO}
N.~Kovachki, Z.~Li, B.~Liu, K.~Azizzadensheli, K.~Bhattacharya, A.~Stuart, and
  A.~Anandkumar.
\newblock Neural operator: Learning maps between function spaces.
\newblock \emph{arXiv preprint arXiv:2108.08481v3}, 2021{\natexlab{a}}.

\bibitem[Kovachki et~al.(2021{\natexlab{b}})Kovachki, Lanthaler, and
  Mishra]{kovachki2021universal}
Nikola Kovachki, Samuel Lanthaler, and Siddhartha Mishra.
\newblock On universal approximation and error bounds for fourier neural
  operators.
\newblock \emph{Journal of Machine Learning Research}, 22:\penalty0 Art--No,
  2021{\natexlab{b}}.

\bibitem[Lanthaler et~al.(2022)Lanthaler, Mishra, and Karniadakis]{LMK1}
Samuel Lanthaler, Siddhartha Mishra, and George~E Karniadakis.
\newblock Error estimates for {D}eep{ON}ets: {A} deep learning framework in
  infinite dimensions.
\newblock \emph{Transactions of Mathematics and Its Applications}, 6\penalty0
  (1):\penalty0 tnac001, 2022.
\newblock URL
  \url{https://academic.oup.com/imatrm/article-pdf/6/1/tnac001/42785544/tnac001.pdf}.

\bibitem[Li et~al.(2020{\natexlab{a}})Li, Kovachki, Azizzadenesheli, Liu,
  Bhattacharya, Stuart, and Anandkumar]{GKO}
Zongyi Li, Nikola~B Kovachki, Kamyar Azizzadenesheli, Burigede Liu, Kaushik
  Bhattacharya, Andrew~M Stuart, and Anima Anandkumar.
\newblock Neural operator: Graph kernel network for partial differential
  equations.
\newblock \emph{CoRR}, abs/2003.03485, 2020{\natexlab{a}}.

\bibitem[Li et~al.(2020{\natexlab{b}})Li, Kovachki, Azizzadenesheli, Liu,
  Stuart, Bhattacharya, and Anandkumar]{MPole}
Zongyi Li, Nikola~B Kovachki, Kamyar Azizzadenesheli, Burigede Liu, Andrew~M
  Stuart, Kaushik Bhattacharya, and Anima Anandkumar.
\newblock Multipole graph neural operator for parametric partial differential
  equations.
\newblock In H.~Larochelle, M.~Ranzato, R.~Hadsell, M.~F. Balcan, and H.~Lin
  (eds.), \emph{Advances in Neural Information Processing Systems (NeurIPS)},
  volume~33, pp.\  6755--6766. Curran Associates, Inc., 2020{\natexlab{b}}.

\bibitem[Li et~al.(2021{\natexlab{a}})Li, Kovachki, Azizzadenesheli, Liu,
  Bhattacharya, Stuart, and Anandkumar]{FNO}
Zongyi Li, Nikola~Borislavov Kovachki, Kamyar Azizzadenesheli, Burigede Liu,
  Kaushik Bhattacharya, Andrew Stuart, and Anima Anandkumar.
\newblock Fourier neural operator for parametric partial differential
  equations.
\newblock In \emph{International Conference on Learning Representations},
  2021{\natexlab{a}}.
\newblock URL \url{https://openreview.net/forum?id=c8P9NQVtmnO}.

\bibitem[Li et~al.(2021{\natexlab{b}})Li, Zheng, Kovachki, Jin, Chen, Liu,
  Azizzadenesheli, and Anandkumar]{FNO1}
Zongyi Li, Hongkai Zheng, Nikola Kovachki, David Jin, Haoxuan Chen, Burigede
  Liu, Kamyar Azizzadenesheli, and Anima Anandkumar.
\newblock Physics-informed neural operator for learning partial differential
  equations.
\newblock \emph{arXiv preprint arXiv:2111.03794}, 2021{\natexlab{b}}.

\bibitem[Lin et~al.(2021)Lin, Li, Lu, Cai, Maxey, and Karniadakis]{donet3}
Chensen Lin, Zhen Li, Lu~Lu, Shengze Cai, Martin Maxey, and George~Em
  Karniadakis.
\newblock Operator learning for predicting multiscale bubble growth dynamics.
\newblock \emph{The Journal of Chemical Physics}, 154\penalty0 (10):\penalty0
  104118, 2021.

\bibitem[Long et~al.(2015)Long, Shelhamer, and Darrell]{long2015fully}
Jonathan Long, Evan Shelhamer, and Trevor Darrell.
\newblock Fully convolutional networks for semantic segmentation.
\newblock In \emph{Proceedings of the IEEE conference on computer vision and
  pattern recognition}, pp.\  3431--3440, 2015.

\bibitem[Lu et~al.(2019)Lu, Jin, and Karniadakis]{Lu2019}
Lu~Lu, Pengzhan Jin, and George~Em Karniadakis.
\newblock {DeepONet}: Learning nonlinear operators for identifying differential
  equations based on the universal approximation theorem of operators.
\newblock \emph{arXiv preprint arXiv:1910.03193}, 2019.

\bibitem[Lu et~al.(2021)Lu, Jin, Pang, Zhang, and Karniadakis]{Lu2021learning}
Lu~Lu, Pengzhan Jin, Guofei Pang, Zhongqiang Zhang, and George~Em Karniadakis.
\newblock Learning nonlinear operators via deeponet based on the universal
  approximation theorem of operators.
\newblock \emph{Nature Machine Intelligence}, 3\penalty0 (3):\penalty0
  218--229, 2021.

\bibitem[Lye(2020)]{alsvinn}
K.~0. Lye.
\newblock \emph{Computation of Statistical Solutions of hyperbolic systems of
  conservation laws}.
\newblock ETH Dissertation N. 26728, 2020.

\bibitem[Lye et~al.(2020)Lye, Mishra, and Ray]{LMR1}
Kjetil~O Lye, Siddhartha Mishra, and Deep Ray.
\newblock Deep learning observables in computational fluid dynamics.
\newblock \emph{Journal of Computational Physics}, pp.\  109339, 2020.

\bibitem[Lye et~al.(2021)Lye, Mishra, Ray, and Chandrashekar]{LMPR1}
Kjetil~O Lye, Siddhartha Mishra, Deep Ray, and Praveen Chandrashekar.
\newblock Iterative surrogate model optimization ({ISMO}): An active learning
  algorithm for pde constrained optimization with deep neural networks.
\newblock \emph{Computer Methods in Applied Mechanics and Engineering},
  374:\penalty0 113575, 2021.

\bibitem[Mao et~al.(2020{\natexlab{a}})Mao, Jagtap, and Karniadakis]{KAR6}
Z.~Mao, A.~D. Jagtap, and G.~E. Karniadakis.
\newblock Physics-informed neural networks for high-speed flows.
\newblock \emph{Computer Methods in Applied Mechanics and Engineering},
  360:\penalty0 112789, 2020{\natexlab{a}}.

\bibitem[Mao et~al.(2020{\natexlab{b}})Mao, Lu, Marxen, Zaki, and
  Karniadakis]{donet1}
Z.~Mao, L.~Lu, O.~Marxen, T.~Zaki, and G.~E. Karniadakis.
\newblock Deep{M}and{M}net for hypersonics: {P}redicting the coupled flow and
  finite-rate chemistry behind a normal shock using neural-network
  approximation of operators.
\newblock Preprint, available from arXiv:2011.03349v1, 2020{\natexlab{b}}.

\bibitem[Mishra \& Tadmor(2011)Mishra and Tadmor]{TSID4}
Siddhartha Mishra and Eitan Tadmor.
\newblock Constraint preserving schemes using potential based fluxes- ii:
  Genuinely multi-dimensional schemes for systems of conservation laws,.
\newblock \emph{SIAM Journal on Numerical Analysis}, 49:\penalty0 1023--1045,
  2011.

\bibitem[Ohlberger \& Rave(2013)Ohlberger and Rave]{ohlbergernonlinear2013}
Mario Ohlberger and Stephan Rave.
\newblock Nonlinear reduced basis approximation of parameterized evolution
  equations via the method of freezing.
\newblock \emph{Comptes Rendus Mathematique}, 351\penalty0 (23):\penalty0
  901--906, 2013.
\newblock ISSN 1631-073X.
\newblock \doi{https://doi.org/10.1016/j.crma.2013.10.028}.
\newblock URL
  \url{https://www.sciencedirect.com/science/article/pii/S1631073X13002847}.

\bibitem[Pathak et~al.(2022)Pathak, Subramanian, Harrington, Raja,
  Chattopadhyay, Mardani, Kurth, Hall, Li, Azizzadenesheli, Hassanzadeh,
  Kashinath, and Anandkumar]{FNO2}
J.~Pathak, S.~Subramanian, P.~Harrington, S.~Raja, A.~Chattopadhyay,
  M.~Mardani, T.~Kurth, D.~Hall, Z.~Li, K.~Azizzadenesheli, p.~Hassanzadeh,
  K.~Kashinath, and A.~Anandkumar.
\newblock Fourcastnet: A global data-driven high-resolution weather model using
  adaptive fourier neural operators.
\newblock \emph{arXiv preprint arXiv:2202.11214}, 2022.

\bibitem[Peherstorfer(2020)]{peherstorfer2020model}
Benjamin Peherstorfer.
\newblock Model reduction for transport-dominated problems via online adaptive
  bases and adaptive sampling.
\newblock \emph{SIAM Journal on Scientific Computing}, 42\penalty0
  (5):\penalty0 A2803--A2836, 2020.

\bibitem[Smoller(2012)]{rde}
J.~Smoller.
\newblock \emph{Shock waves and reaction-diffusion equations.}
\newblock Springer, 2012.

\bibitem[Sun \& Jin(2012)Sun and Jin]{crack}
C.T. Sun and Z-H. Jin.
\newblock \emph{Fracture Mechanics}.
\newblock Elsevier, 2012.

\bibitem[Taddei et~al.(2015)Taddei, Perotto, and Quarteroni]{taddei2015reduced}
Tommaso Taddei, Simona Perotto, and ALFIO Quarteroni.
\newblock Reduced basis techniques for nonlinear conservation laws.
\newblock \emph{ESAIM: Mathematical Modelling and Numerical Analysis},
  49\penalty0 (3):\penalty0 787--814, 2015.

\bibitem[Wang et~al.(2018)]{wang2018exponential}
Qingcan Wang et~al.
\newblock Exponential convergence of the deep neural network approximation for
  analytic functions.
\newblock \emph{Science China Mathematics}, 61\penalty0 (10):\penalty0
  1733--1740, 2018.

\bibitem[Yarotsky(2017)]{yarotsky2017}
Dmitry Yarotsky.
\newblock Error bounds for approximations with deep {ReLU} networks.
\newblock \emph{Neural Networks}, 94:\penalty0 103--114, 2017.
\newblock Publisher: Elsevier.

\end{thebibliography}
\bibliographystyle{iclr2023_conference}

\appendix
\newpage
\begin{center}
{\large \bf Supplementary Material for}: \\
Nonlinear Reconstruction for operator learning of PDEs with discontinuities.\\
\end{center}


\section{Principal component analysis}
\label{app:pca}

Principal component analysis (PCA) provides a complete answer to the following problem (see e.g. \cite{pcanet,LMK1} and references therein for relevant results in the infinite-dimensional context): 

Given a probability measure $\nu \in \Prob(\cY)$ on a Hilbert space $\cY$ and given $p\in \N$, we would like to characterize the optimal \emph{linear} subspace $\hat{V}_p \subset \cY$ of dimension $p$, which minimizes the average projection error
\begin{align}
\label{eq:pca-obj}
\E_{w\sim \nu}\left[
\Vert w - \Pi_{\hat{V}_p} w \Vert^2_{\cY}
\right]
=
\min_{\dim(V_p)=p} \E_{w\sim \nu} \left[ \Vert w - \Pi_{V_p} w \Vert^2_{\cY} \right],
\end{align}
where $\Pi_{V_p}$ denotes the orthogonal projection onto $V_p$, and the minimum is taken over all $p$-dimensional linear subspaces $V_p \subset \cX$.

\begin{remark}
A characterization of the minimum in \ref{eq:pca-obj} is of relevance to the present work, since the outputs of DeepONet, and other operator learning frameworks based on \emph{linear reconstruction}  $\cN(u) = \sum_{k=1}^p \beta_k(u) \tau_k$, are restricted to the linear subspace $V_p := \Span\{\tau_1,\dots, \tau_p\}$. From this, it follows that (\cite{LMK1}):
\[
\E_{u\sim \mu}\left[
\Vert \cG(u) - \cN(u) \Vert^2_{\cY}
\right]
\ge
\E_{u\sim \mu}\left[
\Vert \cG(u) - \Pi_{V_p} \cG(u) \Vert^2_{\cY}
\right]
=
\E_{w\sim \cG_\#\mu}\left[
\Vert w - \Pi_{V_p} w \Vert^2_{\cY}
\right],
\]
is lower bounded by the minimizer in \eqref{eq:pca-obj} with $\nu = \cG_\#\mu$ the push-forward measure of $\mu$ under $\cG$. 
\end{remark}

To characterize minimizers of \eqref{eq:pca-obj}, one introduces the covariance operator $\Gamma_\nu: \cY \to \cY$, by $\Gamma_\nu := \E_{w\sim \nu}\left[w\otimes w\right]$, where $\otimes$ denotes the tensor product. By definition, $\Gamma_\nu$ satisfies the following relation
\[
\langle v', \Gamma_\nu v \rangle_{\cY} = \E_{w\sim \nu} \left[ \langle v', w \rangle_{\cY} \langle w, v \rangle_{\cY} \right], \quad \forall \, v,v' \in \cY.
\]
It is well-known that $\Gamma_\nu$ possesses a complete set of orthonormal eigenfunctions $\phi_1,\phi_2, \dots$, with corresponding eigenvalues $\lambda_1\ge \lambda_2 \ge \dots \ge 0$. We then have the following result (see e.g. \cite[Thm. 3.8]{LMK1}): 

\begin{theorem}
  A subspace $\hat{V}_p \subset \cY$ is a minimizer of \eqref{eq:pca-obj} if, and only if, $\hat{V}_p = \Span\{\phi_1,\dots,\phi_p\}$ can be written as the span of the first $p$ eigenfunctions of an orthonormal eigenbasis $\phi_1,\phi_2, \dots$ of the covariance operator $\Gamma_{\nu}$, with decreasing eigenvalues $\lambda_1 \ge \lambda_2 \ge \dots$. Furthermore, the minimum in \eqref{eq:pca-obj} is given by
  \[
  \Err_{\opt}^2 = \min_{\dim(V_p)=p} \E_{w\sim \nu} \left[ \Vert w - \Pi_{V_p} w \Vert^2_{\cY} \right]
  =
  \sum_{j>p} \lambda_j,
  \]
  in terms of the decay of the eigenvalues of $\Gamma_\nu$.
\end{theorem}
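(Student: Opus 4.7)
The plan is to reduce the minimization in \eqref{eq:pca-obj} to a Ky Fan type extremal principle for the compact self-adjoint operator $\Gamma_\nu$, and then invoke the spectral theorem. First, for any $p$-dimensional subspace $V_p\subset \cY$, pick an orthonormal basis $\{e_1,\dots,e_p\}$. By the Pythagorean identity, $\|w\|_\cY^2 = \|\Pi_{V_p}w\|_\cY^2 + \|w-\Pi_{V_p}w\|_\cY^2$, and $\|\Pi_{V_p}w\|_\cY^2 = \sum_{k=1}^p \langle w,e_k\rangle_\cY^2$. Taking expectations and using the definition of $\Gamma_\nu$, I would rewrite the objective as
\begin{equation*}
\E_{w\sim\nu}\bigl[\|w-\Pi_{V_p}w\|_\cY^2\bigr]
= \E_{w\sim\nu}\bigl[\|w\|_\cY^2\bigr] - \sum_{k=1}^p \langle e_k,\Gamma_\nu e_k\rangle_\cY,
\end{equation*}
so that minimizing the projection error over $V_p$ is equivalent to maximizing the sum of Rayleigh quotients $\sum_{k=1}^p \langle e_k,\Gamma_\nu e_k\rangle_\cY$ over orthonormal $p$-tuples in $\cY$.

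Next, I would invoke Ky Fan's maximum principle for the compact self-adjoint operator $\Gamma_\nu$: the maximum of $\sum_{k=1}^p \langle e_k,\Gamma_\nu e_k\rangle$ over orthonormal systems equals $\sum_{k=1}^p \lambda_k$, and is attained precisely when $\Span\{e_1,\dots,e_p\}$ equals the span of the first $p$ eigenvectors of $\Gamma_\nu$ (the choice among eigenvectors being unique up to degeneracies in the eigenvalues). Since $\E[\|w\|_\cY^2] = \mathrm{tr}(\Gamma_\nu) = \sum_j \lambda_j$, subtracting gives
\begin{equation*}
\Err_\opt^2 = \sum_{j\ge 1}\lambda_j - \sum_{k=1}^p \lambda_k = \sum_{j>p}\lambda_j,
\end{equation*}
which is precisely the stated value. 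The ``if'' direction of the characterization follows by plugging $V_p=\Span\{\phi_1,\dots,\phi_p\}$ into the formula; the ``only if'' direction is the equality case of Ky Fan, which forces the optimal subspace to coincide with such a span.

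The main obstacles are infinite-dimensional: I must verify that $\Gamma_\nu$ is a well-defined bounded, self-adjoint, positive, trace-class operator on $\cY$, so that the spectral theorem yields an orthonormal eigenbasis $\{\phi_j\}$ with summable eigenvalues, and so that the trace identity $\E_{w\sim\nu}[\|w\|_\cY^2]=\sum_j \lambda_j$ actually holds. This would be addressed by noting the assumption $\E_{w\sim\nu}[\|w\|_\cY^2]<\infty$ (inherited from $\E_{\bar u\sim\mu}[\|\cG(\bar u)\|_\cY^2]<\infty$ in the intended application $\nu=\cG_\#\mu$), together with a routine Fubini/Bochner-integral argument. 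A second technical point is the interchange of expectation and the (possibly infinite) sum $\sum_k \langle w,e_k\rangle^2$, which is justified by monotone convergence since each summand is nonnegative. Once these measure-theoretic details are in place, the proof reduces to the finite-dimensional Ky Fan argument applied spectrally, which is the conceptual core.
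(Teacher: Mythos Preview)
The paper does not actually supply a proof of this theorem; it is stated as a known result with a reference to \cite[Thm.~3.8]{LMK1}, so there is nothing in the paper to compare your argument against. That said, your proposal is correct and is precisely the standard route: rewrite the projection error via Parseval as $\E[\Vert w\Vert^2]-\sum_{k=1}^p\langle e_k,\Gamma_\nu e_k\rangle$, invoke the Ky~Fan maximum principle for the compact positive self-adjoint trace-class operator $\Gamma_\nu$ to identify both the optimal value $\sum_{k\le p}\lambda_k$ and the equality case, and subtract from $\mathrm{tr}(\Gamma_\nu)=\sum_j\lambda_j$. The technical caveats you flag (trace-class property of $\Gamma_\nu$ under the second-moment assumption, and monotone convergence for the interchange of sum and expectation) are exactly the right ones and are routine; this is essentially how the cited reference proceeds as well.
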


\section{Measures of complexity for (shift-)DeepONet and FNO}
\label{app:cplx}

As pointed out in the main text, there are several hyperparameters which determine the complexity of DeepONet/shift-DeepONet and FNO, respectively. Table \ref{tab:cplx} summarizes quantities of major importance for (shift-)DeepONet and their (rough) analogues for FNO. These quantities are directly relevant to the expressive power and trainability of these operator learning architectures, and are described in further detail below.

\begin{figure}[ht]
  \centering
\renewcommand{\arraystretch}{1.25}
\begin{tabular}{c|c|c}
  & (shift-)DeepONet & FNO \\ \hline
  spatial resolution & $m$ & $\sim N^d$ \\
  ``intrinsic'' function space dim. & $p$ & $\sim \kmax^d \cdot d_v$ \\
    $\#$ trainable parameters & $\size(\cN)$ & $\size(\cN)$ \\
  depth & $\depth(\cN)$ & $\depth(\cN)$ \\
  width & $\width(\cN)$ & $\sim \kmax^d \cdot d_v$
\end{tabular}
\caption{Approximate correspondence between measures of the complexity ($d$=dimension of the domain $D \subset \R^d$ of input/output functions).}
\label{tab:cplx}
\end{figure}

\textbf{(shift-)DeepONet:} Quantities of interest include the number of sensor points $m$, the number of trunk-/branch-net functions $p$ and the width, depth and size of the operator network. We first recall the definition of the width and depth for DeepONet,
\begin{align*}
  \width(\cN^\don) &:= \width(\bm{\beta}) + \width(\bm{\tau}),
  \\
  \depth(\cN^\don) &:= \max \left\{ \depth(\bm{\beta}),\depth(\bm{\tau}) \right\},
\end{align*}
where the width and depth of the conventional neural networks on the right-hand side are defined in terms of the maximum hidden layer width (number of neurons) and the number of hidden layers, respectively. To ensure a fair comparison between DeepONet, shift-DeepONet and FNO, we define the size of a DeepONet assuming a \textbf{fully connected (non-sparse) architecture}, as 
\[
\size(\cN^\don) 
:=
(m+p) \width(\cN^\don) + \width(\cN^\don)^2 \depth(\cN^\don),
\]
where the second term measures the complexity of the hidden layers, and the first term takes into account the input and output layers. Furthermore, all architectures we consider have a width which scales at least as $\width(\cN^\don) \gtrsim \min(p,m)$, implying the following natural lower size bound,
\begin{align}
\size(\cN^\don) \gtrsim (m+p)\min(p,m) + \min(p,m)^2 \depth(\cN^\don).
\end{align}
We also introduce the analogous notions for shift-DeepONet:
\begin{align*}
  \width(\cN^\sdon) &:= \width(\bm{\beta}) + \width(\bm{\tau}) + \width(\bm{\cA}) + \width(\bm{\gamma}),
  \\
  \depth(\cN^\sdon) &:= \max \left\{ \depth(\bm{\beta}),\depth(\bm{\tau}),\depth(\bm{\cA}),\depth(\bm{\gamma}) \right\},
  \\
  \size(\cN^\sdon) &:= (m+p) \width(\cN^\don) + \width(\cN^\don)^2 \depth(\cN^\don).
\end{align*}

\textbf{FNO:} Quantities of interest for FNO include the number of \define{grid points} in each direction $N$ (for a total of $O(N^d)$ grid points), the \define{Fourier cut-off} $\kmax$ (we retain a total of $O(\kmax^d)$ Fourier coefficients in the convolution operator and bias), and the \define{lifting dimension} $d_v$. We recall that the lifting dimension determines the number of components of the input/output functions of the hidden layers, and hence the ``intrinsic'' dimensionality of the corresponding function space in the hidden layers is proportional to $d_v$. The essential informational content for each of these $d_v$ components is encoded in their Fourier modes with wave numbers $|k| \le \kmax$ (a total of $O(\kmax^d)$ Fourier modes per component), and hence the total intrinsic function space dimension of the hidden layers is arguably of order $\sim \kmax^d d_v$. The \define{width} of an FNO layer is defined in analogy with conventional neural networks as the maximal width of the weight matrices and Fourier multiplier matrices, which is of order $\sim\kmax^d \cdot d_v$. The \define{depth} is defined as the number of hidden layers $L$. Finally, the \define{size} is by definition the total number of tunable parameters in the architecture. By definition, the Fourier modes of the bias function $b_\ell(x)$ are restricted to wavenumbers $|k|\le \kmax$ (giving a total number of $O(\kmax^d d_v)$ parameters), and  the Fourier multiplier matrix is restricted to wave numbers $|k|\le \kmax$ (giving $O(\kmax^d d_v^2)$ parameters). Apriori, it is easily seen that if the lifting dimension $d_v$ is larger than the number of components of the input/output functions, then \citep{kovachki2021universal}
\[
\size(\cN^\fno) \lesssim \left(d_v^2 +  d_v^2 \kmax^d + d_v N^d \right) \depth(\cN^\fno),
\]
where the first term in parentheses corresponds to $\size(W_\ell) = d_v^2$, the second term accounts for $\size(P_\ell) = O(d_v^2 \kmax^d)$ and the third term counts the degrees of freedom of the bias, $\size(b_\ell(x_j)) = O(d_v N^d)$. The additional factor $\depth(\cN^\fno)$ takes into account that there are $L = \depth(\cN^\fno)$ such layers.

If the bias $b_\ell$ is constrained to have Fourier coefficients $\hat{b}_\ell(k) \equiv 0$ for $|k|>\kmax$ (as we assumed in the main text), then the representation of $b_\ell$ only requires $\size(\hat{b}_\ell(k)) = O(d_v \kmax^d)$ degrees of freedom. This is of relevance in the regime $\kmax \ll N$, reducing the total FNO size from $O(d_v^2 N^d L)$ to
\begin{align}
  \label{eq:FNOize}
  \size(\cN^\fno) \lesssim d_v^2 \kmax^d \depth(\cN^\fno).
\end{align}
  Practically, this amounts to adding the bias in the hidden layers in Fourier $k$-space, rather than physical $x$-space.

\section{Mathematical Details}
In this section, we provide detailed proofs of the Theorems in Section \ref{sec:3}. We start with some preliminary results below,

\subsection{ReLU DNN building blocks}

In the present section we collect several basic constructions for ReLU neural networks, which will be used as building blocks in the following analysis. For the first result, we note that for any $\delta > 0$, the following approximate step-function
\[
\zeta_{\delta}(x) 
:=
\begin{cases}
0, & x<0, \\
\frac{x}{\delta}, & 0 \le x \le \delta, \\
1, & x>\delta,
\end{cases}
\]
can be represented by a neural network:
\[
\zeta_\delta(x) = \sigma\left(\frac{x}{\delta}\right) - \sigma\left(\frac{x-\delta}{\delta}\right),
\]
where $\sigma(x) = \max(x,0)$ denotes the ReLU activation function. 
Introducing an additional shift $\xi$, multiplying the output by $h$, and choosing $\delta > 0$ sufficiently small, we obtain the following result:
\begin{proposition}[Step function]
\label{prop:step}
Fix an interval $[a,b] \subset \R$, $\xi \in [a,b]$, $h\in \R$. Let $h\, 1_{[x>\xi]}$ be a step function of height $h$. For any $\epsilon > 0$ and $p\in [1,\infty)$, there exist a ReLU neural network $\Phi_{\epsilon}: \R \to \R$, such that 
\[
\depth(\Phi_{\epsilon}) = 1, \quad \width(\Phi_{\epsilon}) = 2,
\]
and 
\[
\Vert \Phi_{\epsilon} - h\, 1_{[x>\xi]} \Vert_{L^p([a,b])} \le \epsilon.
\]
\end{proposition}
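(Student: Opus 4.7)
The plan is to realize $\Phi_\epsilon$ by the explicit two-neuron construction already hinted at before the statement. Specifically, I set
\[
\Phi_\epsilon(x) \;:=\; h\cdot \zeta_\delta(x-\xi) \;=\; h\left[\sigma\!\left(\tfrac{x-\xi}{\delta}\right) - \sigma\!\left(\tfrac{x-\xi-\delta}{\delta}\right)\right],
\]
for a parameter $\delta>0$ to be chosen. This is, by inspection, a ReLU network with one hidden layer of width $2$: the affine pre-activations $x\mapsto (x-\xi)/\delta$ and $x\mapsto (x-\xi-\delta)/\delta$ form the hidden layer (width $2$, depth $1$), and the output $h[\cdot - \cdot]$ is a linear read-off, so $\depth(\Phi_\epsilon)=1$ and $\width(\Phi_\epsilon)=2$ as required.

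Next I would verify the approximation bound. By a case analysis based on the explicit definition of $\zeta_\delta$, the pointwise identity $\Phi_\epsilon(x)=h\cdot 1_{[x>\xi]}(x)$ holds for all $x\in\R\setminus[\xi,\xi+\delta]$, while on $[\xi,\xi+\delta]$ both $\Phi_\epsilon(x)$ and $h\cdot 1_{[x>\xi]}(x)$ lie in the interval between $0$ and $h$. Hence
\[
|\Phi_\epsilon(x)-h\cdot 1_{[x>\xi]}(x)|\le |h|\cdot 1_{[\xi,\xi+\delta]}(x),\qquad x\in\R.
\]
Restricting to $[a,b]$ and integrating to the $p$-th power gives
\[
\|\Phi_\epsilon-h\cdot 1_{[x>\xi]}\|_{L^p([a,b])} \le |h|\cdot \delta^{1/p}.
\]

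Finally, given $\epsilon>0$, I pick $\delta \le (\epsilon/\max(|h|,1))^{p}$ (with the convention $\delta\le \epsilon$ if $h=0$) to ensure the right-hand side is at most $\epsilon$. This completes the construction.

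There is essentially no obstacle here: the proposition is a convenient lemma packaging a well-known two-neuron ReLU step-function gadget, and the only small care needed is the boundary case $\xi+\delta>b$ (handled because we integrate only over $[a,b]$, which can only shrink the error) and the trivial case $h=0$ (where $\Phi_\epsilon\equiv 0$ works). The result will be invoked repeatedly as a building block in the subsequent proofs to encode shocks and jump discontinuities.
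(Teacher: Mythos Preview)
Your proof is correct and follows exactly the paper's intended approach: the paper explicitly constructs $\zeta_\delta$ as a two-neuron ReLU network just before the proposition and remarks that shifting by $\xi$, scaling by $h$, and taking $\delta$ small enough yields the result. Your pointwise error bound and choice of $\delta$ merely fill in the routine details the paper omits.
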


The following proposition is an immediate consequence of the previous one, by considering the linear combination $\Phi_\delta(x-a) - \Phi_\delta(x-b)$ with a suitable choice of $\delta > 0$.

\begin{proposition}[Indicator function]
\label{prop:indicator}
Fix an interval $[a,b] \subset \R$. Let $1_{[a,b]}(x)$ be the indicator function of $[a,b]$. For any $\epsilon > 0$ and $p\in [1,\infty)$, there exist a ReLU neural network $\Phi_\epsilon: \R \to \R$, such that 
\[
\depth(\Phi_\epsilon) = 1, \quad \width(\Phi_\epsilon) = 4,
\]
and 
\[
\Vert \Phi_\epsilon - 1_{[a,b]} \Vert_{L^p([a,b])} \le \epsilon.
\]
\end{proposition}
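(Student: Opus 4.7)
The plan is to reduce directly to Proposition~\ref{prop:step} via the a.e.\ identity $1_{[a,b]}(x) = 1_{[x>a]}(x) - 1_{[x>b]}(x)$; the two sides disagree only at the single point $x=a$, which is irrelevant for the $L^p$ estimate since $p<\infty$. Concretely, for a parameter $\delta>0$ to be fixed at the end, I would invoke Proposition~\ref{prop:step} twice, with unit height $h=1$ and shifts $\xi\in\{a,b\}$, to obtain two depth-$1$, width-$2$ ReLU networks of the explicit form
\[
\Phi^{(\xi)}_\delta(x) = \sigma\!\left(\tfrac{x-\xi}{\delta}\right) - \sigma\!\left(\tfrac{x-\xi-\delta}{\delta}\right), \qquad \xi\in\{a,b\},
\]
each a piecewise-linear ramp approximation to the corresponding unit step. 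I then set $\Phi_\epsilon(x) := \Phi^{(a)}_\delta(x) - \Phi^{(b)}_\delta(x)$.

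By construction, $\Phi_\epsilon$ has a single hidden layer obtained by placing the two hidden units of $\Phi^{(a)}_\delta$ and the two hidden units of $\Phi^{(b)}_\delta$ in parallel, and then combining them at the output layer with coefficients $+1$ and $-1$ respectively. This immediately gives $\depth(\Phi_\epsilon)=1$ and $\width(\Phi_\epsilon)\le 4$, matching the claimed architecture.

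For the error bound, I would compute the pointwise residual on $[a,b]$. Assuming $\delta \le b-a$, the argument of $\Phi^{(b)}_\delta$ stays nonpositive throughout $[a,b]$, so $\Phi^{(b)}_\delta \equiv 0$ there; hence the residual equals $1-\Phi^{(a)}_\delta(x)$, which is $1-(x-a)/\delta$ on $[a,a+\delta]$ and $0$ on $[a+\delta,b]$. A direct integration then yields
\[
\Vert \Phi_\epsilon - 1_{[a,b]} \Vert_{L^p([a,b])}^p \;\le\; \int_0^\delta (1-u/\delta)^p\,du \;=\; \frac{\delta}{p+1},
\]
so that $\Vert \Phi_\epsilon - 1_{[a,b]}\Vert_{L^p([a,b])} \le (\delta/(p+1))^{1/p}$. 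Choosing $\delta$ sufficiently small (say $\delta \le \min\{b-a, (p+1)\epsilon^p\}$) makes the right-hand side at most $\epsilon$, concluding the argument.

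No substantive obstacle arises: the identity for $1_{[a,b]}$ reduces the proposition to the previous one, the size bookkeeping is immediate from parallel composition of two single-hidden-layer networks, and the $L^p$ bound follows from integrating a single linear ramp. The only subtlety worth flagging is the endpoint mismatch at $x=a$, which is harmless because it concerns a set of measure zero.
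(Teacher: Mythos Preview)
Your proposal is correct and follows exactly the approach the paper indicates: form the linear combination $\Phi_\delta(x-a)-\Phi_\delta(x-b)$ of two step-function networks from Proposition~\ref{prop:step} and choose $\delta$ small. The paper gives only that one-line sketch, so your explicit width/depth bookkeeping and $L^p$ computation simply fill in the details it omits.
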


\begin{figure}[h]
\centering
\includegraphics[width=.5\textwidth]{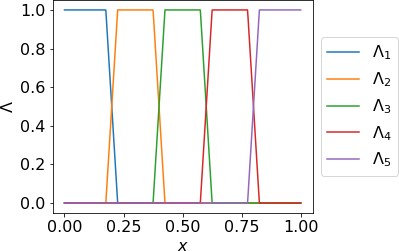}
\caption{Illustration of partition of unity network for $J = 5$, $[a,b] = [0,1]$.}
\label{fig:POU}
\end{figure}

A useful mathematical technique to glue together local approximations of a given function rests on the use of a ``partition of unity''. In the following proposition we recall that partitions of unity can be constructed with ReLU neural networks (this construction has previously been used by \cite{yarotsky2017}; cp. Figure \ref{fig:POU}):

\begin{proposition}[Partition of unity]
\label{prop:partition}
Fix an interval $[a,b]\subset \R$. For $J\in \N$, let $\Delta x := (b-a)/J$, and let $x_j := a+j\Delta x$, $j=0,\dots, J$ be an equidistant grid on $[a,b]$. Then for any $\epsilon \in (0,\Delta x / 2]$, there exists a ReLU neural network $\Lambda: \R \to \R^{J}$, $x \mapsto (\Lambda_1(x), \dots, \Lambda_J(x))$, such that 
\[
\width(\Lambda) = 4J, 
\quad
\depth(\Lambda) = 1,
\]
each $\Lambda_j$ is piecewise linear, satisfies 
\[
\Lambda_j(x)
=
\begin{cases}
0, & (x \le x_{j-1} - \epsilon), \\
1, & (x_{j-1}+\epsilon \le x \le x_{j}-\epsilon), \\
0, & (x \ge x_{j}+\epsilon),
\end{cases}
\]
and interpolates linearly between the values $0$ and $1$ on the intervals $[x_{j-1}-\epsilon,x_{j-1}+\epsilon]$ and $[x_{j}-\epsilon,x_{j}+\epsilon]$. In particular, this implies that
\begin{itemize}
\item $\supp(\Lambda_j) \subset [x_{j-1}-\epsilon,x_{j}+\epsilon]$, for all $j=1,\dots, J$,
\item $\Lambda_j(x) \ge 0$ for all $x\in \R$, 
\item The $\{\Lambda_j \}_{j=1,\dots, J}$ form a partition of unity, i.e.
\[
\sum_{j=1}^J \Lambda_j(x) = 1, \quad \forall \, x\in [a,b].
\]
\end{itemize}
\end{proposition}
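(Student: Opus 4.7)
The plan is to realize each $\Lambda_j$ explicitly as a trapezoidal hat built from four ReLU units, using the same cancellation idea as in the indicator-function construction of Proposition~\ref{prop:indicator} but scaled so the two ramps have slopes $\pm 1/(2\epsilon)$. Concretely, I would set
\[
\Lambda_j(x) \;=\; \tfrac{1}{2\epsilon}\Bigl[\sigma(x-x_{j-1}+\epsilon) - \sigma(x-x_{j-1}-\epsilon) - \sigma(x-x_j+\epsilon) + \sigma(x-x_j-\epsilon)\Bigr].
\]
Since $\epsilon \le \Delta x/2$ guarantees the ordering $x_{j-1}-\epsilon < x_{j-1}+\epsilon \le x_j-\epsilon < x_j+\epsilon$, a five-case analysis on the resulting regions verifies that $\Lambda_j$ has exactly the prescribed piecewise-linear shape: zero outside $[x_{j-1}-\epsilon,x_j+\epsilon]$, equal to $1$ on the plateau $[x_{j-1}+\epsilon, x_j-\epsilon]$, and affine on the two edges, from which non-negativity and the support statement are immediate.

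The complexity count is then direct: each $\Lambda_j$ uses $4$ ReLU neurons in a single hidden layer with no further layers, so stacking the $J$ scalar outputs in parallel yields a single vector-valued layer with $\width(\Lambda)=4J$ and $\depth(\Lambda)=1$. For the partition-of-unity identity I would proceed by a case split on $x\in[a,b]$: on a plateau $[x_{j-1}+\epsilon,x_j-\epsilon]$ only $\Lambda_j$ contributes and equals $1$, while on an interior overlap $[x_j-\epsilon, x_j+\epsilon]$ with $1\le j\le J-1$ only $\Lambda_j$ and $\Lambda_{j+1}$ are nonzero, and their opposite-slope ramps add to $\frac{(x_j+\epsilon-x)+(x-x_j+\epsilon)}{2\epsilon}=1$.

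The one point that requires extra care, and is in that sense the main obstacle, is the behaviour on the two boundary pieces $[a,x_0+\epsilon]$ and $[x_{J-1}+\epsilon,b]$: with the symmetric trapezoid, $\Lambda_1$ and $\Lambda_J$ only reach the value $\tfrac12$ at $x=a$ and $x=b$, so the raw construction gives $\sum_j \Lambda_j = \tfrac12$ at the endpoints. To obtain $\sum_j\Lambda_j\equiv 1$ on all of $[a,b]$ as stated, I would replace these two boundary hats by their flat-topped variants (constant $1$ on the outer half of their support), each still expressible with $2$ ReLU units plus an affine bias so that the width/depth bounds are preserved. (Equivalently, reinterpreting the construction on the periodic torus renders the symmetric trapezoids themselves a partition of unity without modification, which is the version actually needed for the downstream FNO application.)
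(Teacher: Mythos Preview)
Your construction is exactly the standard one the paper has in mind: the proposition is stated without proof and simply attributed to \cite{yarotsky2017} together with Figure~\ref{fig:POU}, and your explicit trapezoid formula
\[
\Lambda_j(x)=\tfrac{1}{2\epsilon}\bigl[\sigma(x-x_{j-1}+\epsilon)-\sigma(x-x_{j-1}-\epsilon)-\sigma(x-x_j+\epsilon)+\sigma(x-x_j-\epsilon)\bigr]
\]
is precisely that construction, with the five-case check and the width/depth count carried out correctly.

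You are also right to flag the endpoint issue: as literally stated, the piecewise description forces $\Lambda_1(a)=\Lambda_J(b)=\tfrac12$, which is incompatible with $\sum_j\Lambda_j\equiv 1$ on all of $[a,b]$. This is a minor inconsistency in the proposition itself rather than in your argument. Your proposed resolution (flatten the outer ramp of $\Lambda_1$ and $\Lambda_J$, or work on the torus) is the standard fix and preserves the stated width and depth bounds; in the paper's applications (Theorem~\ref{thm:analytic} and the FNO constructions) only the interior behaviour and the periodic variant are actually used, so the discrepancy is harmless there.
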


We also recall the well-known fact that the multiplication operator $(x,y) \mapsto xy$ can be efficiently approximated by ReLU neural networks (cp. \cite{yarotsky2017}):

\begin{proposition}[{Multiplication, \cite[Prop. 3]{yarotsky2017}}]
\label{prop:mult}
There exists a constant $C>0$, such that for any $\epsilon \in (0,\frac12]$, $M\ge 2$, there exists a neural network $\hat{\times}_{\epsilon,M}: [-M,M]\times [-M,M] \to \R$, such that 
\[
\width(\hat{\times}_{\epsilon,M}) \le C,
\quad
\depth(\hat{\times}_{\epsilon,M}) \le C\log(M\epsilon^{-1}),
\quad
\size(\hat{\times}_{\epsilon,M}) \le C \log(M\epsilon^{-1}),
\]
and
\[
\sup_{x,y \in [-M,M]} |\hat{\times}_{\epsilon,M}(x,y) - xy| \le \epsilon.
\]
\end{proposition}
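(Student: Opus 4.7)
The plan is to follow Yarotsky's classical ReLU construction. The key reduction is the polarization identity
\[
xy = \tfrac{1}{2}\bigl( (x+y)^2 - x^2 - y^2\bigr),
\]
which shows that it suffices to approximate the squaring map $x \mapsto x^2$ with a ReLU network of controlled size, since linear combinations and shifts are implemented at no extra depth or width cost. After this reduction, I would further rescale: writing $\tilde{x} = x/M \in [-1,1]$, one has $x^2 = M^2 \tilde{x}^2$, and since $x^2 = |x|^2$ with $|x|=\sigma(x)+\sigma(-x)$ a width-$2$ ReLU subnet, the task collapses to approximating $t \mapsto t^2$ on $[0,1]$ to accuracy $\epsilon/(3M^2)$.

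The core construction on $[0,1]$ is the sawtooth telescoping. Define the "tent" map $g(t) = 2\sigma(t) - 4\sigma(t-\tfrac12) + 2\sigma(t-1)$, which equals $2t$ on $[0,\tfrac12]$ and $2(1-t)$ on $[\tfrac12,1]$. The $s$-fold iterate $g_s := g \circ \cdots \circ g$ is a sawtooth with $2^{s-1}$ identical triangular teeth, implemented by a ReLU network of constant width and depth $s$. A direct computation (or a dyadic expansion of $t$) shows that
\[
f_s(t) := t - \sum_{k=1}^{s} \frac{g_k(t)}{4^k}
\]
is the piecewise-linear interpolant of $t^2$ at the $2^s+1$ dyadic nodes $\{j/2^s\}$, and one gets the uniform bound $\sup_{t\in[0,1]} |f_s(t)-t^2| \le 2^{-2s-2}$. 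Because the sum can be accumulated along a single trunk of depth $s$ using residual-style additions within a constant-width network, $f_s$ is realized by a ReLU network with $\width \lesssim 1$ and $\depth,\size \lesssim s$.

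Choosing $s = \lceil C \log(M\epsilon^{-1}) \rceil$ guarantees $2^{-2s-2} \le \epsilon/(3M^2)$, and hence the composed network $\widehat{\mathrm{sq}}_{\epsilon,M}(x) := M^2 f_s(|x|/M)$ approximates $x^2$ to accuracy $\epsilon/3$ on $[-M,M]$ with width $O(1)$ and depth $O(\log(M\epsilon^{-1}))$. Finally, setting
\[
\hat{\times}_{\epsilon,M}(x,y) := \tfrac{1}{2}\bigl( \widehat{\mathrm{sq}}_{\epsilon,M}(x+y) - \widehat{\mathrm{sq}}_{\epsilon,M}(x) - \widehat{\mathrm{sq}}_{\epsilon,M}(y) \bigr),
\]
where the three squaring subnetworks are run in parallel and combined linearly (absorbing the factor $2$ blow-up on $[-2M,2M]$ into the constant $C$), gives a network with the advertised width, depth and size, and uniform error $\le \epsilon$ by the triangle inequality.

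The only mildly delicate step is verifying the telescoping identity $f_s = $ the dyadic interpolant of $x^2$ and the resulting $2^{-2s-2}$ rate; this is a routine induction on $s$ that exploits the self-similarity $g_{s+1} = g \circ g_s$. Everything else is bookkeeping of widths and depths, with $\log M$ and $\log\epsilon^{-1}$ combining into a single $\log(M\epsilon^{-1})$ because $M\ge 2$ and $\epsilon \le \tfrac12$ ensure $M\epsilon^{-1}\ge 4$.
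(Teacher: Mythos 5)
Your proposal is correct and is precisely the standard Yarotsky construction (sawtooth telescoping for the square function, polarization, and rescaling by $M$) that the paper invokes by citation rather than proving; the bookkeeping of the error $2^{-2s-2}$, the choice $s\sim\log(M\epsilon^{-1})$, and the absorption of the $[-2M,2M]$ range for $x+y$ into the constant are all handled correctly. No discrepancy with the paper's (cited) argument.
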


We next state a general approximation result for the approximation of analytic functions by ReLU neural networks. To this end, we first recall 
\begin{definition}[Analytic function and extension]
A function $F: (\alpha,\beta) \to \R$ is \define{analytic}, if for any $x_0\in (\alpha,\beta)$ there exists a radius $r>0$, and a sequence $(a_k)_{k\in \N_0}$ such that $\sum_{k=0}^\infty |a_k|r^k < \infty$, and
\[
F(x) = \sum_{k=0}^\infty a_k (x-x_0)^k, \quad \forall \, |x-x_0|< r.
\]
If $f: [a,b] \to \R$ is a function, then we will say that $f$ has an \define{analytic extension}, if there exists $F: (\alpha,\beta) \to \R$, with $[a,b]\subset (\alpha,\beta)$, with $F(x) = f(x)$ for all $x\in [a,b]$ and such that $F$ is analytic.
\end{definition}

We then have the following approximation bound, which extends the main result of \cite{wang2018exponential}. In contrast to \cite{wang2018exponential}, the following theorem applies to analytic functions without a globally convergent series expansion.
\begin{theorem} \label{thm:analytic}
Assume that $f: [a,b] \to \R$ has an analytic extension. Then there exist constants $C, \gamma > 0$, depending only on $f$, such that for any $L\in \N$, there exists a ReLU neural network $\Phi_L: \R \to \R$, with
\[
\sup_{x\in [a,b]} |f(x) - \Phi_L(x)| \le C \exp(-\gamma L^{1/2}),
\]
and such that
\[
\depth(\Phi_L) \le C L,
\quad
\width(\Phi_L) \le C.
\]
\end{theorem}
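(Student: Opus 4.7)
The plan is to combine analyticity with a partition-of-unity construction. Locally, $f$ admits a convergent power-series representation whose truncation gives a polynomial of controlled degree; globally, these local polynomial approximants can be glued using the partition-of-unity network of Proposition \ref{prop:partition} and the multiplication gadget of Proposition \ref{prop:mult}. The target rate $\exp(-\gamma L^{1/2})$ arises because approximating a degree-$K$ polynomial by a ReLU network via iterated multiplication gadgets costs depth $O(K\log(K/\epsilon))$, and balancing this against the Taylor truncation error $\sim q^K$ yields depth $L \sim \log(1/\epsilon)^2$, i.e.\ $\epsilon\sim\exp(-\gamma L^{1/2})$.

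First I would use analyticity plus compactness to produce centers $x_1,\dots,x_J\in[a,b]$ and radii $r_j<\rho_j$ (all depending only on $f$) such that the intervals $(x_j-r_j,x_j+r_j)$ cover $[a,b]$, and such that the Taylor coefficients of $f$ at $x_j$ satisfy a geometric bound $|a_k^{(j)}|\le M\rho_j^{-k}$ (a standard Cauchy-type estimate after passing to a complex neighborhood). Setting $q:=\max_j r_j/\rho_j < 1$ and defining
\[
P_j^K(x) := \sum_{k=0}^K a_k^{(j)}(x-x_j)^k,
\]
I obtain the bound $\sup_{|x-x_j|\le r_j}|f(x)-P_j^K(x)|\le C\,q^{K+1}$ by comparison with a geometric tail. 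Hence choosing $K\le C\log(\epsilon^{-1})$ suffices to achieve local accuracy $\epsilon/3$.

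Next, each $P_j^K$ is realized approximately by a ReLU network via Horner's scheme: starting from $a_K^{(j)}$, iteratively update $y\leftarrow a_{k}^{(j)} + \hat{\times}_{\delta,M}(y,\,x-x_j)$ using Proposition \ref{prop:mult} at tolerance $\delta\sim\epsilon/K$. Since $|y|$ stays bounded uniformly by $\sum_k|a_k^{(j)}|r_j^k\le C$, the bound of Proposition \ref{prop:mult} applies at each step, and a standard error-propagation argument shows that the $K$ additive errors accumulate to $O(K\delta) = O(\epsilon)$. The resulting network $\tilde P_j^K$ has width $O(1)$ and depth $O(K\log(K/\epsilon)) = O(\log(\epsilon^{-1})^2)$. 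Let $\{\Lambda_j\}_{j=1}^J$ be the partition of unity adapted to the cover provided by Proposition \ref{prop:partition} (with a sufficiently small overlap parameter so that $\supp(\Lambda_j)\subset(x_j-r_j,x_j+r_j)$), and define
\[
\Phi_L(x) := \sum_{j=1}^J \hat{\times}_{\delta,M}\bigl(\Lambda_j(x),\,\tilde P_j^K(x)\bigr),
\]
assembling the $J$ branches in parallel. Since $J$ depends only on $f$, the width remains $O(1)$; the depth is dominated by the polynomial construction and hence is $O(\log(\epsilon^{-1})^2)$.

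For the error bound, on each $x\in[a,b]$ one combines $\sum_j\Lambda_j(x)=1$ with $0\le\Lambda_j\le1$ and adds the three contributions---Taylor truncation, Horner implementation, and the final multiplication against $\Lambda_j$---to obtain $|\Phi_L(x)-f(x)|\le\epsilon$. Setting $L = \lceil c\log(\epsilon^{-1})^2\rceil$ and inverting gives $\epsilon\le C\exp(-\gamma L^{1/2})$ as claimed. The main obstacle I anticipate is the careful bookkeeping for error propagation through the nested multiplication gadgets: one must verify uniform a priori bounds on the Horner accumulator so that Proposition \ref{prop:mult} applies at each recursive step, and choose the per-step tolerance $\delta\sim\epsilon/K$ so that the $K$-fold additive and multiplicative errors telescope to $O(\epsilon)$. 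This is routine but tedious, and it is the only place where one has to go beyond the results already proved in the preceding building-block lemmas.
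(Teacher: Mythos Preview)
Your proposal is correct and structurally identical to the paper's proof: both cover $[a,b]$ by finitely many intervals on which $f$ has a convergent Taylor expansion, build a local ReLU approximant on each piece with depth $O(L)$, width $O(1)$ and error $\exp(-\gamma L^{1/2})$, and then glue the pieces using the partition-of-unity network of Proposition~\ref{prop:partition} together with the multiplication gadget of Proposition~\ref{prop:mult}.

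The one substantive difference is in the local step. The paper simply invokes \cite[Thm.~6]{wang2018exponential} as a black box to obtain the local approximant with the required depth/width/error scaling, whereas you construct it from scratch via Taylor truncation at degree $K\sim\log(\epsilon^{-1})$ followed by a Horner-scheme implementation with $K$ stacked multiplication gadgets. Your route is more self-contained and makes the origin of the $L^{1/2}$ exponent completely transparent (depth $\sim K\cdot\log(\epsilon^{-1})\sim\log(\epsilon^{-1})^2$); the paper's route is shorter but hides this mechanism inside the cited reference. One minor technical point to watch: Proposition~\ref{prop:partition} as stated produces an \emph{equidistant} partition of unity, not one subordinate to an arbitrary cover $\{(x_j-r_j,x_j+r_j)\}$. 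The paper handles this by first passing to an equidistant grid with mesh size $\Delta x$ small enough that each subinterval $[x_{j-1},x_j]$ lies inside some element of the original cover; you should do the same, or note that the construction in Proposition~\ref{prop:partition} trivially extends to non-equidistant nodes.
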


\begin{proof}
Since $f$ has an analytic extension, for any $x\in [a,b]$, there exists a radius $r_x>0$, and an analytic function $F_x: [x-r_x,x+r_x] \to \R$, which extends $f$ locally. By the main result of \cite[Thm. 6]{wang2018exponential}, there are constants $C_x, \gamma_x > 0$ depending only on $x\in [a,b]$, such that for any $L\in \N$, there exists a ReLU neural network $\Phi_{x,L}: \R \to \R$, such that 
\[
\sup_{|\xi-x|\le r_x} |F(\xi) - \Phi_{x,L}(\xi)|\le C_x \exp(-\gamma_x L^{1/2}),
\]
and $\depth(\Phi_x) \le C_x L$, $\width(\Phi_x) \le C_x$. For $J\in \N$, set $\Delta x = (b-a)/J$ and consider the equidistant partition $x_j := a+j\Delta x$ of $[a,b]$. Since the compact interval $[a,b]$ can be covered by finitely many of the intervals $(x-r_{x},x+r_{x})$, then by choosing $\Delta x$ sufficiently small, we can find $x^{(j)}\in [a,b]$, such that  $[x_{j-1},x_{j}] \subset (x^{(j)} -r_{x^{(j)}}, x^{(j)}+r_{x^{(j)}})$ for each $j=1,\dots, J$. 

By construction, this implies that for $\bar{C} := \max_{j=1,\dots, J} C_{x^{(j)}}$, and $\bar{\gamma} := \min_{j=1,\dots, J} \gamma_{x^{(j)}}$, we have that for any $L\in \N$, there exist neural networks $\Phi_{j,L} (= \Phi_{x^{(j)},L}): \R\to \R$, such that
\[
\sup_{x\in [x_{j-1},x_{j}]} |f(x) - \Phi_{j,L}(x)| \le \bar{C} \exp(-\bar{\gamma} L^{1/2}),
\]
and such that $\depth(\Phi_{j,L})\le \bar{C} L$, $\width(\Phi_{j,L})\le \bar{C}$.

Let now $\Lambda: \R \to \R^{J}$ be the partition of unity network from Proposition \ref{prop:partition}, and define 
\[
\Phi_L(x) 
:=
\sum_{j=1}^{J} \tilde{\times}_{M,\epsilon}\left(\Lambda_j(x),\Phi_{j,L}(x)\right),
\]
where $\tilde{\times}_{M,\epsilon}$ denotes the multiplication network from Proposition \ref{prop:mult}, with $M := 1 + \bar{C}\exp(-\bar{\gamma}) + \sup_{x\in [a,b]} |f(x)|$, and $\epsilon := J^{-1}\bar{C}\exp(-\bar{\gamma} L^{1/2})$. Then we have 
\begin{align*}
\depth(\Phi_L) 
&\le 
\depth(\tilde{\times}_{M,\epsilon}) + \depth(\Lambda) + \max_{j=1,\dots, J} \depth(\Phi_{j,L})
\\ 
&\le
C' \log(M\epsilon^{-1}) + 1 + \bar{C} L
\\
&\le
C (1+L),
\end{align*}
where the constant $C>0$ on the last line depends on $\sup_{x\in [a,b]} |f(x)|$, $\bar{\gamma}$ and on $\bar{C}$, but is independent of $L$. Similarly, we find that
\[
\width(\Phi_L)
\le 
\width(\Lambda) + \max_{j=1,\dots, J} \width(\Phi_{j,L})
\le
4J + \bar{C},
\]
is bounded independently of $L$. After potentially enlarging the constant $C>0$, we can thus ensure that 
\[
\depth(\Phi_L) \le C L, \quad \width(\Phi_L) \le C,
\]
with a constant $C>0$ that depends only on $f$, but is independent of $L$. Finally, we note that 
\begin{align*}
|\Phi_L(x) - f(x)|
&\le
\sum_{j=1}^{J}
\left|
\tilde{\times}_{M,\epsilon}\left(\Lambda_j(x),\Phi_{j,L}(x)\right)
-
\Lambda_j(x) f(x) 
\right|
\\
&\le
\sum_{j=1}^{J}
\left|
\tilde{\times}_{M,\epsilon}\left(\Lambda_j(x),\Phi_{j,L}(x)\right)
-
\Lambda_j(x) \Phi_{j,L}(x) 
\right|
\\
&\qquad
+
\sum_{j=1}^{J}
\Lambda_j(x)
\left|
 \Phi_{j,L}(x) - f(x)
\right|.
\end{align*}
By construction of $\tilde{\times}_{M,\epsilon}$, and since $|\Phi_{j,L}(x)| \le M$, $\Lambda_j \le M$, the first sum can be bounded by $J \epsilon = \bar{C}\exp(-\bar{\gamma}L^{1/2})$. Furthermore, each term in the second sum is bounded by $\Lambda_j(x) \bar{C}\exp(-\bar{\gamma}L^{1/2})$, and hence
\[
\sum_{j=1}^{J} \Lambda_j(x) |\Phi_{j,L}(x) - f(x)|
\le
\left(\sum_{j=1}^{J} \Lambda_j(x) \right)\bar{C}\exp(-\bar{\gamma}L^{1/2})
=
\bar{C}\exp(-\bar{\gamma}L^{1/2}),
\]
for all $x\in [a,b]$. We conclude that $\sup_{x\in [a,b]} |\Phi_L(x) - f(x)| \le 2\bar{C}\exp(-\bar{\gamma}L^{1/2})$, with constants $\bar{C}, \bar{\gamma}>0$ independent of $L$. Setting $\gamma := \bar{\gamma}$ and after potentially enlarging the constant $C>0$ further, we thus conclude: there exist $C,\gamma>0$, such that for any $L\in \N$, there exists a neural network $\Phi_L:\R \to \R$ with $\depth(\Phi_L) \le C L$, $\width(\Phi_L) \le C$, such that 
\[
\sup_{x\in [a,b]} |\Phi_L(x) - f(x)| \le C \exp(-\gamma L^{1/2}).
\] 
This conclude the proof of Theorem \ref{thm:analytic}
\end{proof}

By combining a suitable ReLU neural network approximation of division $a \mapsto 1/a$ based on Theorem \ref{thm:analytic} (division is an analytic function away from $0$), and the approximation of multiplication by \cite{yarotsky2017} (cp. Proposition \ref{prop:mult}, above), we can also state the following result:
\begin{proposition}[Division]
\label{prop:division}
Let $0<a\le b$ be given. Then there exists $C = C(a,b)>0$, such that for any $\epsilon \in (0,\frac12]$, there exists a ReLU network $\divnet_{a,b,\epsilon}:\R\times \R \to \R$, with 
\[
\depth(\divnet_{a,b,\epsilon}) \le C\log\left(\epsilon^{-1}\right)^2,
\quad
\width(\divnet_{a,b,\epsilon}) \le C,
\quad
\size(\divnet_{a,b,\epsilon}) \le C \log\left( \epsilon^{-1} \right)^2,
\]
satisfying 
\[
\sup_{x,y\in [a,b]}
\left|\divnet_{a,b,\epsilon}(x;y) - \frac{x}{y}\right|
\le
\epsilon.
\]
\end{proposition}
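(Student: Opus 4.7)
The plan is to combine an analytic approximation of the reciprocal function with the multiplication network, as hinted in the text. Since $y \mapsto 1/y$ is analytic on any closed interval avoiding $0$, it admits an analytic extension to an open neighbourhood of $[a,b]$ (for instance to $(a/2, 2b)$ if we wish), so Theorem~\ref{thm:analytic} applies with $f(y) = 1/y$.

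First, I would invoke Theorem~\ref{thm:analytic} on $f(y) = 1/y$, $y \in [a,b]$, to obtain constants $C_0, \gamma > 0$ (depending only on $a,b$) and, for every $L\in\N$, a ReLU network $\Phi_L: \R \to \R$ with $\depth(\Phi_L) \le C_0 L$, $\width(\Phi_L) \le C_0$ and $\sup_{y\in[a,b]} |\Phi_L(y) - 1/y| \le C_0\exp(-\gamma L^{1/2})$. I then choose $L = L(\epsilon) := \lceil (\gamma^{-1}\log(2bC_0 \epsilon^{-1}))^2 \rceil$, so that $b\cdot C_0\exp(-\gamma L^{1/2}) \le \epsilon/2$ and $L \le C_1 \log(\epsilon^{-1})^2$.

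Second, I would choose a uniform bound $M := \max(b, 1/a + 1)$ so that $|x|\le M$ and $|\Phi_L(y)| \le |1/y| + \epsilon/(2b) \le 1/a + 1 \le M$ for all $x,y \in [a,b]$ and all $\epsilon \le 1/2$. Applying Proposition~\ref{prop:mult} with this $M$ and accuracy $\epsilon/2$ gives a multiplication network $\hat{\times}_{\epsilon/2, M}$ of depth and size $\le C_2 \log(M\epsilon^{-1}) \le C_3 \log(\epsilon^{-1})$ and width $\le C_2$. I then define
\[
\divnet_{a,b,\epsilon}(x; y) := \hat{\times}_{\epsilon/2, M}\bigl(x,\, \Phi_L(y)\bigr).
\]

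The error estimate is the triangle inequality
\[
\bigl|\divnet_{a,b,\epsilon}(x;y) - x/y\bigr| \;\le\; \bigl|\hat{\times}_{\epsilon/2, M}(x,\Phi_L(y)) - x\Phi_L(y)\bigr| + |x|\bigl|\Phi_L(y) - 1/y\bigr| \;\le\; \tfrac{\epsilon}{2} + b\cdot C_0 e^{-\gamma L^{1/2}} \;\le\; \epsilon,
\]
uniform over $x,y \in [a,b]$. For complexity, concatenating the two subnetworks (with an identity channel carrying $x$ through the depth of $\Phi_L$) yields $\depth \le \depth(\Phi_L) + \depth(\hat{\times}_{\epsilon/2,M}) \le C_0 L + C_3 \log(\epsilon^{-1}) \le C \log(\epsilon^{-1})^2$, $\width \le \width(\Phi_L) + \width(\hat{\times}_{\epsilon/2,M}) + 1 \le C$, and hence $\size \le \depth\cdot\width^2 \le C\log(\epsilon^{-1})^2$, for a constant $C$ depending only on $a,b$. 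The main (mildly) non-trivial step is only the careful choice of $L$ and of the multiplication tolerance so that the two sources of error each contribute at most $\epsilon/2$ while the dominant depth scaling $\log(\epsilon^{-1})^2$ comes from the analytic approximation of the reciprocal rather than from the multiplication network.
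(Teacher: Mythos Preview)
Your proof is correct and follows exactly the approach sketched in the paper, which merely indicates that one should combine the analytic approximation of $y\mapsto 1/y$ from Theorem~\ref{thm:analytic} with the multiplication network of Proposition~\ref{prop:mult}; you have supplied the details of that combination faithfully. One very minor quibble: the bound $|\Phi_L(y)|\le 1/a+1$ tacitly assumes $\epsilon/(2b)\le 1$, which may fail for small $b$, and Proposition~\ref{prop:mult} also requires $M\ge 2$; both are trivially repaired by taking, say, $M:=\max\{2,\,b,\,1/a+1/(2b)\}$, which still depends only on $a,b$ and leaves all complexity estimates unchanged.
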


We end this section with the following result.

\begin{lemma} \label{lem:Xi}
There exists a constant $C>0$, such that for any $\epsilon > 0$, there exists a neural network $\Xi_\epsilon: \R \to \R$, such that 
\[
\sup_{\xi \in [0,2\pi-\epsilon]}
|\xi - \Xi_\epsilon(\cos(\xi),\sin(\xi))| \le \epsilon,
\]
with $\Xi_\epsilon(\cos(\xi),\sin(\xi)) \in [0,2\pi]$ for all $\xi\in [0,2\pi]$, and such that 
\[
\depth(\Xi_\epsilon) \le C \log(\epsilon^{-1})^2, 
\quad
\width(\Xi_\epsilon) \le C.
\]
\end{lemma}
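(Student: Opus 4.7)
The plan is to invert the parametrization $\xi \mapsto (\cos\xi, \sin\xi)$ by covering $[0, 2\pi - \epsilon]$ with three overlapping arcs $A_1 = [0, 3\pi/4]$, $A_2 = [\pi/2, 3\pi/2]$, $A_3 = [5\pi/4, 2\pi - \epsilon]$, on each of which $\xi$ admits an explicit analytic expression in $(c, s) := (\cos\xi, \sin\xi)$ via tangent half-angle identities: $g_1(c, s) := 2\arctan(s/(1+c))$ on $A_1$, $g_3(c, s) := 2\pi + 2\arctan(s/(1+c))$ on $A_3$, and $g_2(c, s) := \pi + 2\arctan(-s/(1-c))$ on $A_2$. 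On the respective arcs the relevant denominator ($1+c$ on $A_1, A_3$; $1-c$ on $A_2$) is bounded below by the absolute constant $1 - 1/\sqrt{2}$, so the argument of $\arctan$ ranges over a fixed compact interval, independent of $\epsilon$.

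Each $g_i$ is approximated by a ReLU network $\tilde g_i$ of depth $O(\log(\epsilon^{-1})^2)$ and width $O(1)$, obtained by composing the division network of Proposition~\ref{prop:division} applied to $(s, 1 \pm c)$ with a ReLU approximation of $\arctan$ on the resulting bounded interval. Since $\arctan$ admits a real-analytic extension to any such interval, Theorem~\ref{thm:analytic} furnishes the latter with width $O(1)$, depth $O(L)$, and uniform error $C\exp(-\gamma L^{1/2})$; taking $L \asymp \log(\epsilon^{-1})^2$ pushes the error below $\epsilon$.

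The charts are glued with a ReLU partition of unity $(\chi_1, \chi_2, \chi_3)$ subordinate to the cover, built from piecewise-linear soft indicators $\sigma(c) \approx \mathbf{1}_{\{c \le 0\}}$ (fixed transition width) and $\tau(s) \approx \mathbf{1}_{\{s \ge 0\}}$ (transition width $\delta_s := \tfrac12\sin\epsilon$), via $\chi_2 := \sigma(c)$, $\chi_1 := (1-\sigma(c))\tau(s)$, $\chi_3 := (1-\sigma(c))(1-\tau(s))$; these satisfy $\chi_1 + \chi_2 + \chi_3 \equiv 1$ with $\chi_i \in [0,1]$ and are implementable in $O(1)$ depth and width. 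The hard part is disambiguating $\xi \approx 0$ from $\xi \approx 2\pi - \epsilon$, since both map to $(c, s)$ near $(1, 0)$ with only the sign of $s \approx \pm\sin\epsilon$ distinguishing them; the $\epsilon$-dependent sharpness $\delta_s \sim \sin\epsilon$ is chosen precisely so that the $\tau$-transition window is pushed outside the image arc $\{(\cos\xi,\sin\xi):\xi \in [0,2\pi-\epsilon]\}$, guaranteeing that $\chi_1$ and $\chi_3$ cannot both be nonzero on legitimate inputs, and this sharpness only appears in weight magnitudes rather than in depth or width.

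The full network is then $\Xi_\epsilon := \mathrm{clip}_{[0,2\pi]}\!\bigl(\sum_{i=1}^3 \hat{\times}(\chi_i, \tilde g_i)\bigr)$, with $\hat{\times}$ the multiplication network of Proposition~\ref{prop:mult} (choosing $M = 2\pi + 1$) and the two-layer clipping $\mathrm{clip}_{[0,2\pi]}(y) := 2\pi - \sigma(2\pi - \sigma(y))$ enforcing $\Xi_\epsilon(\cos\xi, \sin\xi) \in [0,2\pi]$ also for $\xi \in (2\pi - \epsilon, 2\pi]$. A triangle inequality using $\sum_i \chi_i \equiv 1$, together with the fact that $g_i = \xi$ on $\supp(\chi_i)$ intersected with the image arc, yields the $\epsilon$-approximation bound, and the depth/width bounds follow by additivity across the $O(1)$-many composed subnetworks.
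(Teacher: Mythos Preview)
Your proof is correct and follows the same high-level strategy as the paper's sketch: cover the circle by overlapping arcs on which the angle $\xi$ is an analytic function of $(\cos\xi,\sin\xi)$, approximate each local inverse via Theorem~\ref{thm:analytic}, and glue the pieces with a partition of unity. The implementation details differ: the paper uses five charts based on the direct local inverses $c\mapsto \arccos(c)+\mathrm{const}$ or $s\mapsto \arcsin(s)+\mathrm{const}$, and invokes Proposition~\ref{prop:partition} for the gluing without spelling it out; you instead use three charts via the tangent half-angle identities $g_i = \mathrm{const} + 2\arctan(\pm s/(1\pm c))$, build the partition of unity explicitly from soft indicators in $(c,s)$-space, and treat the disambiguation of $\xi\approx 0$ from $\xi\approx 2\pi-\epsilon$ more carefully (via the $\epsilon$-dependent transition width $\delta_s$) than the paper, which relegates this to a footnote. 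Your route costs an extra division network per chart but buys a fully explicit partition of unity and a cleaner handling of the $\epsilon$-gap.

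Two minor points worth tightening: Proposition~\ref{prop:division} as stated applies only for $x,y\in[a,b]$ with $a>0$, whereas your numerator $s$ can be negative; this is repaired by approximating the analytic map $y\mapsto 1/y$ on the relevant interval and composing with $\hat\times$ from Proposition~\ref{prop:mult}. Also, your $\chi_1,\chi_3$ involve the product $(1-\sigma(c))\tau(s)$, which is not exactly ReLU-representable in $O(1)$ depth; one further call to $\hat\times$ is needed, but this remains within the $O(\log(\epsilon^{-1})^2)$ depth budget and leaves the partition-of-unity identity $\chi_1+\chi_2+\chi_3=1$ satisfied up to $O(\epsilon)$, which suffices.
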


\begin{proof}[Sketch of proof]
We can divide up the unit circle $\set{(\cos(\xi),\sin(\xi))}{\xi \in [0,2\pi]}$ into 5 subsets, where 
\begin{align*}
\begin{cases}
\xi \in [0,\pi/4], &\iff x \ge 1/\sqrt{2}, \; y \ge 0, \\
\xi \in (\pi/4,3\pi/4], &\iff y > 1/\sqrt{2}, \\
\xi \in [3\pi/4,5\pi/4], &\iff x \le -1/\sqrt{2}, \\
\xi \in (5\pi/4,7\pi/4), &\iff y < -1/\sqrt{2}, \\
\xi \in [7\pi/4,2\pi), &\iff x \le -1/\sqrt{2}, \; y < 0,
\end{cases}
\end{align*}
On each of these subsets, one of the mappings
\begin{gather*}
x\in \left[-\frac1{\sqrt{2}},\frac1{\sqrt{2}}\right] \to \R, \quad 
x = \cos(\xi) \mapsto \xi, \\
\text{or} \\ 
y\in \left[-\frac1{\sqrt{2}},\frac1{\sqrt{2}}\right] \to \R, \quad 
y = \sin(\xi) \mapsto \xi,
\end{gather*}
is well-defined and possesses an analytic, invertible extension to the open interval $(-1,1)$ (with analytic inverse). By Theorem \ref{thm:analytic}, it follows that for any $\epsilon>0$, we can find neural networks $\Phi_1,\dots, \Phi_5$, such that $|\Phi_j(\cos(\xi),\sin(\xi)) - \xi| \le \epsilon$ on an open set containing the corresponding domain, and 
\[
\depth(\Phi_j) \le C \log(\epsilon^{-1})^2, \quad \width(\Phi_j) \le C.
\]
By a straight-forward partition of unity argument based on Proposition \ref{prop:partition}, we can combine these mappings to a global map,\footnote{this step is where the $\epsilon$-gap at the right boundary $2\pi-\epsilon$ is needed, as the points at angles $\xi=0$ and $\xi=2\pi$ are identical on the circle.} which is represented by a neural network $\Xi_\epsilon: \R^2 \to [0,2\pi]$, such that
\[
\sup_{\xi\in [0,2\pi-\epsilon]}
\left|
\Xi_\epsilon(\cos(\xi),\sin(\xi)) - \xi
\right|
\le \epsilon,
\]
and such that
\begin{align*}
\depth(\Xi_\epsilon) \le C \log(\epsilon^{-1})^2, 
\quad
\width(\Xi_\epsilon) \le C.
\end{align*}

\end{proof}

\subsection{Proof of Theorem \ref{thm:adv-rough1}}
\label{app:adv1pf}
The proof of this theorem follows from the following two propositions, 

\begin{proposition}[Lower bound in $p$]
\label{prop:lower-p}
Consider the solution operator $\cG_\adv: L^1\cap L^\infty(\T)\to L^1\cap L^\infty(\T)$ of the linear advection equation, with input measure $\mu$ given as the random law of box functions of height $h\in [\underline{h},\overline{h}]$, width $w \in [\underline{w},\overline{w}]$ and shift $\xi \in [0,2\pi]$. Let $M>0$. There exists a constant $C = C(M, \mu)>0$, depending only on $\mu$ and $M$, with the following property: If $\cN(\bar{u}) = \sum_{k=1}^p \beta_k(\bar{u})\tau_k$ is any operator approximation with linear reconstruction dimension $p$, such that $\sup_{\bar{u}\sim \mu}\Vert \cN(\bar{u}) \Vert_{L^\infty} \le M < \infty$, then
\[
\E_{\bar{u}\sim\mu}[
 \Vert \cN(\bar{u}) - \cG_\adv(u) \Vert_{L^1}
]
\ge \frac{C}{p}.
\]
\end{proposition}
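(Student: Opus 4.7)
The plan is to reduce the stated $L^1$ lower bound to the $L^2$-based lower bound \eqref{eq:donlb} from Proposition~\ref{thm:lowerbd}, and then explicitly estimate the decay of the eigenvalues of the covariance operator $\Gamma_{(\cG_{\adv})_\#\mu}$. First I would use the uniform $L^\infty$-bound $\|\cN(\bar u)\|_{L^\infty}\le M$ together with $\|\cG_\adv(\bar u)\|_{L^\infty}\le \overline{h}$ to write, for each $\bar u$,
\[
\|\cN(\bar u)-\cG_\adv(\bar u)\|_{L^2}^2
\le \|\cN(\bar u)-\cG_\adv(\bar u)\|_{L^\infty}\,\|\cN(\bar u)-\cG_\adv(\bar u)\|_{L^1}
\le (M+\overline{h})\,\|\cN(\bar u)-\cG_\adv(\bar u)\|_{L^1},
\]
and thus, after taking expectations and applying Jensen's inequality,
\[
\E_{\bar u\sim\mu}\bigl[\|\cN(\bar u)-\cG_\adv(\bar u)\|_{L^1}\bigr]
\ge \frac{1}{M+\overline h}\,\E_{\bar u\sim\mu}\bigl[\|\cN(\bar u)-\cG_\adv(\bar u)\|_{L^2}^2\bigr]
\ge \frac{1}{M+\overline h}\sum_{j>p}\lambda_j,
\]
where the last inequality is \eqref{eq:donlb} (applied to the general linear-reconstruction operator $\cN$, for which the same lower bound holds as noted after Proposition~\ref{thm:lowerbd}).

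Next I would exploit translation invariance to make the eigenvalues $\lambda_j$ computable. Because the shift $\xi\sim\Unif([0,2\pi])$ is independent of $(h,w)$ and because $\cG_\adv$ is itself a translation, the push-forward measure $\nu:=(\cG_\adv)_\#\mu$ is translation-invariant on the torus $\T$. Hence its covariance operator $\Gamma_\nu$ commutes with all translations and is therefore diagonalized by the Fourier basis $e_k(x)=(2\pi)^{-1/2}e^{ikx}$, with eigenvalues $\lambda_k=\E_{u\sim\nu}[|\hat u(k)|^2]$. For the box-wave law, a direct Fourier computation gives
\[
\hat u(k)=e^{-ik(\xi+aT)}\,h\,\frac{2\sin(kw/2)}{k\sqrt{2\pi}},\qquad k\neq 0,
\]
so that
\[
\lambda_k = \E_{h,w}\!\left[\frac{2h^2\sin^2(kw/2)}{\pi k^2}\right]
= \frac{2\,\E[h^2]}{\pi k^2}\cdot\frac{1}{\bar w-\underline w}\int_{\underline w}^{\overline w}\sin^2(kw/2)\,dw.
\]

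The third step is to show that the remaining $w$-integral is bounded below by a positive constant independent of $k$, so that $\lambda_k\ge c/k^2$. This follows from $\int_{\underline w}^{\overline w}\sin^2(kw/2)\,dw=\tfrac{\bar w-\underline w}{2}-\tfrac{1}{2k}[\sin(k\bar w)-\sin(k\underline w)]$, which for all sufficiently large $|k|$ lies in $[(\bar w-\underline w)/4,\,(\bar w-\underline w)]$; for the finitely many small non-zero $k$ one verifies $\lambda_k>0$ directly. Ordering the eigenvalues in decreasing order and comparing indices with $|k|$, this yields $\sum_{j>p}\lambda_j\ge c\sum_{|k|>p}k^{-2}\ge C/p$ with $C=C(\mu)>0$. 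Combining with the first display gives the claimed bound with constant $C/(M+\overline h)$.

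The main obstacle is the third step: one has to verify carefully that the oscillatory factor $\sin^2(kw/2)$ does not allow $\lambda_k$ to degenerate for any single $k$ (small or resonant), and then convert the lower bound $\lambda_k\gtrsim k^{-2}$ on the Fourier eigenvalues into a tail bound on the \emph{ordered} eigenvalue sequence $\lambda_1\ge\lambda_2\ge\dots$ entering Proposition~\ref{thm:lowerbd}. The averaging over the width $w$ over an interval of positive length $\bar w-\underline w>0$ is what makes this possible, since it prevents any cancellation of the Fourier coefficient uniform in $k$.
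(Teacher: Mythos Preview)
Your proposal is correct and follows essentially the same route as the paper's proof: use translation invariance to diagonalize the covariance operator in the Fourier basis, exploit the jump discontinuity of the box function to get $\tilde\lambda_k \gtrsim |k|^{-2}$, sum the tail to obtain $\sum_{j>p}\lambda_j \gtrsim p^{-1}$, invoke the $L^2$ lower bound \eqref{eq:donlb}, and then pass to $L^1$ via the interpolation $\|f\|_{L^2}^2 \le \|f\|_{L^\infty}\|f\|_{L^1}$ together with the uniform bound $\|\cN(\bar u)-\cG_\adv(\bar u)\|_{L^\infty}\le M+\overline h$. The paper presents exactly these steps (in a slightly different order and citing the $|k|^{-2}$ decay as ``basic Fourier analysis'' rather than writing out the sinc computation), so your argument matches it closely.

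One minor remark on what you flag as the ``main obstacle'': you do not actually need $\lambda_k>0$ for every small $k$. Since $\sum_{j>p}\lambda_j = \min_{|S|=p}\sum_{k\notin S}\tilde\lambda_k$, a lower bound $\tilde\lambda_k \ge c|k|^{-2}$ valid only for $|k|\ge K_0$ already yields $\sum_{j>p}\lambda_j \ge \sum_{|k|\ge K_0+p/2} c|k|^{-2} \gtrsim p^{-1}$, because any set $S$ of size $p$ can remove at most $p$ of the indices with $|k|\ge K_0$. So the averaging over $w$ only needs to produce the asymptotic lower bound, not positivity for every single mode.
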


\begin{proof}[Sketch of proof]
The argument is an almost exact repetition of the lower bound derived in \cite{LMK1}, and therefore we will only outline the main steps of the argument, here: Since the measure $\mu$ is translation-invariant, it can be shown that the optimal PCA eigenbasis with respect to the $L^2(\T)$-norm (cp. \textbf{SM} \ref{app:pca}) is the Fourier basis. Consider the (complex) Fourier basis $\{e^{ikx}\}_{k\in \Z}$, and denote the corresponding eigenvalues $\{\tilde{\lambda}_{k}\}_{k\in \Z}$. The $k$-th eigenvalue $\tilde{\lambda}_k$ of the covariance-operator $\Gamma_{\mu} = \E_{u\sim\mu}[(u \otimes u)]$ satisfies
\[
\Gamma_{\mu}\left(e^{-ikx}\right) = \tilde{\lambda}_k e^{-ikx}.
\]
A short calculation, as in \cite[Proof of Lemma 4.14]{LMK1}, then shows that 
\[
\tilde{\lambda}_k  = \int_{\underline{h}}^{\overline{h}}  \int_{\underline{w}}^{\overline{w}} h^2|\hat{\psi}_w(k)|^2 
\,  \frac{dw}{\Delta w} 
\, \frac{dh}{\Delta h}
\ge
\underline{h}^2 \int_{\underline{w}}^{\overline{w}} |\hat{\psi}_w(k)|^2 
\,  \frac{dw}{\Delta w},
\]
where $\hat{\psi}_w(k)$ denotes the $k$-th Fourier coefficient of $\psi_w(x) := 1_{[-w/2,w/2]}(x)$. Since $\psi_w(x)$ has a jump discontinuity of size $1$ for any $w>0$, it follows from basic Fourier analysis, that the asymptotic decay of $|\hat{\psi}_w(k)| \sim C/|k|$, and hence, there exists a constant $C = C(\underline{h},\underline{w},\overline{w})> 0$, such that
\[
\tilde{\lambda}_k \ge C |k|^{-2}.
\]
Re-ordering these eigenvalues $\tilde{\lambda}_k$ in descending order (and renaming), $\lambda_1 \ge \lambda_2 \ge \dots$, it follows that for some constant $C = C(\underline{h},\underline{w},\overline{w})> 0$, we have
\[
\sum_{j>p} \lambda_j 
\ge
C\sum_{j>p} j^{-2}
\ge
Cp^{-1}.
\]
By Theorem \ref{thm:lowerbd}, this implies that (independently of the choice of the functionals $\beta_k(\bar{u})$!), in the Hilbert space $L^2(\T)$, we have
\[
\E_{\bar{u}\sim \mu}
\left[
\Vert \cN(\bar{u}) - \cG(\bar{u}) \Vert_{L^2}^2
\right]
\ge \sum_{j>p} \lambda_j
\ge 
\frac{C}{p},
\]
for a constant $C>0$ that depends only on $\mu$, but is independent of $p$. To obtain a corresponding estimate with respect to the $L^1$-norm, we simply observe that the above lower bound on the $L^2$-norm together with the a priori bound $\sup_{\bar{u}\sim \mu}\Vert \cG_\adv(\bar{u}) \Vert_{L^\infty} \le \overline{h}$ on the underlying operator and the assumed $L^\infty$-bound $\sup_{\bar{u}\sim \mu}\Vert \cN(\bar{u}) \Vert_{L^\infty} \le M$, imply
\begin{align*}
\frac{C}{p}
&\le
\E_{\bar{u}\sim \mu}
\left[
\Vert \cN(\bar{u}) - \cG(\bar{u}) \Vert_{L^2}^2
\right]
\\
&\le
\E_{\bar{u}\sim \mu}
\Big[
\Vert \cN(\bar{u}) - \cG(\bar{u}) \Vert_{L^\infty} 
\Vert \cN(\bar{u}) - \cG(\bar{u}) \Vert_{L^1}
\Big]
\\
&\le
(M+\overline{h}) \E_{\bar{u}\sim \mu} \left[
\Vert \cN(\bar{u}) - \cG(\bar{u}) \Vert_{L^1}
\right].
\end{align*}
This immediately implies the claimed lower bound.
\end{proof}

\begin{proposition}[Lower bound in $m$]
\label{prop:lower-m}
Consider the solution operator $\cG_\adv: L^1(\T)\cap L^\infty(\T) \to L^1(\T) \cap L^\infty(\T)$ of the linear advection equation, with input measure $\mu$ given as the law of random box functions of height $h\in [\underline{h},\overline{h}]$, width $w\in [\underline{w},\overline{w}]$ and shift $\xi \in [0,2\pi]$. There exists an absolute constant $C>0$ with the following property: If $\cN^\don$ is a DeepONet approximation with $m$ sensor points, then
\[
\E_{\bar{u}\sim \mu}\left[
\Vert \cN(\bar{u}) - \cG_\adv(\bar{u}) \Vert_{L^1}
\right]
\ge \frac{C}{m}.
\]
\end{proposition}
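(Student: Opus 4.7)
The plan is to exploit the observation that a DeepONet depends on the input $\bar u$ only through its encoding $\cE(\bar u)=(\bar u(x_1),\dots,\bar u(x_m))\in\R^m$, so inputs with identical encodings yield identical DeepONet outputs. Conditioning first on the height $h$ and width $w$ and varying only the shift $\xi\in[0,2\pi)$, the encoded vector $\cE(\bar u)=h\bigl(\mathbf{1}[x_j\in[\xi-w/2,\xi+w/2]]\bigr)_{j=1}^m$ is piecewise constant in $\xi$: each sensor enters and exits the sliding window exactly once as $\xi$ traverses the circle, producing at most $2m$ jump points. Hence $[0,2\pi)$ partitions into arcs $I_1,\dots,I_K$ with $K\le 2m$ and $\sum_k\ell_k=2\pi$, on each of which $\cN^\don(\bar u)$ equals some fixed function $f_k\in L^1(\T)$. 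Meanwhile $\cG_\adv(\bar u)=h\mathbf{1}_{[-w/2,w/2]}(\slot-\xi-aT)$ still varies continuously with $\xi$, leaving an irreducible residual variation within each arc that no DeepONet can track.

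To quantify this residual variation, I would apply the reverse triangle inequality on each arc $I_k$ to obtain
\[
\E_{\xi\sim U(I_k)}\|\cG_\adv(\bar u)-f_k\|_{L^1}\ge \tfrac12\E_{\xi_1,\xi_2\sim U(I_k)}\|\cG_\adv(\bar u(\xi_1))-\cG_\adv(\bar u(\xi_2))\|_{L^1},
\]
whose right-hand side equals $h\,\E\min(w,|\xi_1-\xi_2|)$ once one subdivides arcs to ensure $\ell_k\le\pi$ (which at most doubles $K$, so still $K\le 4m$) so that the circle distance coincides with the line distance. A direct second-moment computation, using the triangular density $2(\ell_k-y)/\ell_k^2$ on $[0,\ell_k]$, yields $\E\min(w,|\xi_1-\xi_2|)\ge\min(w,\ell_k)/3$. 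Aggregating the per-arc bounds against the weights $\ell_k/(2\pi)$ gives
\[
\E_{\xi\sim U([0,2\pi))}\|\cN^\don(\bar u)-\cG_\adv(\bar u)\|_{L^1}\ge\frac{h}{6\pi}\sum_{k=1}^K\ell_k\min(w,\ell_k).
\]

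The remaining task — and where I expect the main obstacle — is to lower-bound $S:=\sum_k\ell_k\min(w,\ell_k)$ uniformly over all partitions with $K\le 4m$ summing to $2\pi$. The naive Cauchy--Schwarz estimate $\sum_k\ell_k^2\ge(2\pi)^2/K$ is not by itself sharp once some arcs exceed the box width $w$, since there $\min(w,\ell_k)=w$ instead of $\ell_k$. I would therefore split $S=\sum_{\ell_k\le w}\ell_k^2+w\sum_{\ell_k>w}\ell_k$, apply Cauchy--Schwarz only to the short arcs (of which there are $\le 4m$), and minimize the resulting bound $(2\pi-L)^2/(4m)+wL$ over $L:=\sum_{\ell_k>w}\ell_k\in[0,2\pi]$. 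An elementary one-variable calculus yields $S\ge\min(\pi w,\,c/m)$ for an absolute constant $c>0$. In the regime $mw>$ const.\ the second term dominates, giving $S\gtrsim 1/m$; in the complementary regime $mw\le$ const.\ the bound $S\ge\pi w\ge\pi\underline w$ is constant in $m$, but there $m$ itself is bounded above in terms of $\underline w$, so $S\gtrsim 1/m$ automatically. Averaging over $h\in[\underline h,\overline h]$ and $w\in[\underline w,\overline w]$ finally delivers $\E_{\bar u\sim\mu}\|\cN^\don-\cG_\adv\|_{L^1}\ge C/m$ with $C$ depending only on $\mu$.
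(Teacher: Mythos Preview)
Your proposal is correct and follows essentially the same strategy as the paper: fix $h,w$, partition $[0,2\pi)$ into $O(m)$ arcs on which the encoding is constant, lower-bound the per-arc error via the triangle inequality by the intrinsic $L^1$-variation of $\cG_\adv$, and sum the arc contributions using a convexity argument. The only difference is in execution: the paper pairs points symmetrically about each arc's midpoint and restricts to small displacements $\xi'\le \pi/C_0$ (so that $2\xi'\le w$ always), which yields the cleaner quadratic sum $\sum|A_{ij}|^2\ge(2\pi)^2/(2m)$ directly and avoids your case split on $\sum_k\ell_k\min(w,\ell_k)$.
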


\begin{proof}
We recall that the initial data $\bar{u}$ is a randomly shifted box function, of the form
\[
\bar{u}(x) = h 1_{[-w/2,+w/2]}(x-\xi),
\]
where $\xi \in [0,2\pi]$, $h\in [\underline{h},\bar{h}]$ and $w \in [\underline{w},\bar{w}]$ are independent, uniformly distributed random variables.

Let $x_1,\dots, x_m \in (0,2\pi]$ be an arbitrary choice of $m$ sensor points. In the following, we denote $\bar{u}(\bm{X}) := (\bar{u}(x_1),\dots, \bar{u}(x_m)) \in \R^{m}$. Let now $(x,\bar{u}(\bm{X})) \mapsto \Phi(x;\bar{u}(\bm{X}))$ be \emph{any} mapping, such that $x \mapsto \Phi(x;\bar{u}(\bm{X})) \in L^1(\T)$ for all possible random choices of $\bar{u}$ (i.e. for all $\bar{u} \sim \mu$). Then we claim that
\begin{align}
\label{eq:adv1}
\E_{\bar{u}\sim \mu}[
\Vert \cG_\adv(\bar{u}) - \Phi(\slot;\bar{u}(\bm{X})) \Vert_{L^1}
]
\ge 
\frac{C}{m},
\end{align}
for a constant $C = C(\underline{h}, \overline{w})>0$,
holds for all $m\in \N$. Clearly, the lower bound \eqref{eq:adv1} immediately implies the statement of Proposition \ref{prop:lower-m}, upon making the particular choice 
\[
\Phi(x;\bar{u}(\bm{X})) = \cN(\bar{u})(x) \equiv \sum_{k=1}^p \beta_k(\bar{u}(x_1),\dots, \bar{u}(x_m)) \tau_k(x).
\]

To prove \eqref{eq:adv1}, we first recall that $\bar{u}(x) = \bar{u}(x;h,w,\xi)$ depends on three parameters $h$, $w$ and $\xi$, and the expectation over $\bar{u} \sim \mu$ in \eqref{eq:adv1} amounts to averaging over $h\in [\underline{h},\overline{h}]$, $w\in [\underline{w},\overline{w}]$ and $\xi \in [0,2\pi)$. In the following, we fix $w$ and $h$, and only consider the average over $\xi$. Suppressing the dependence on the fixed parameters, we will prove that
\begin{align} 
\label{eq:adv2}
\frac{1}{2\pi} \int_0^{2\pi} 
\Vert \bar{u}(x;\xi) - \Phi(x;\bar{u}(\bm{X};\xi)) \Vert_{L^1}
\, d\xi
\ge \frac{C}{m},
\end{align}
with a constant that only depends on $\underline{h}$, $\overline{w}$. This clearly implies \eqref{eq:adv1}.

To prove \eqref{eq:adv2}, we first introduce two mappings $\xi \mapsto I(\xi)$ and $\xi \mapsto J(\xi)$, by 
\begin{align*}
I(\xi) =i \Leftrightarrow \xi - \frac{w}{2} \in [x_i,x_{i+1}),
\qquad
J(\xi) = j \Leftrightarrow \xi + \frac{w}{2} \in [x_j,x_{j+1}),
\end{align*}
where we make the natural identifications on the periodic torus (e.g. $x_{m+1}$ is identified with $x_1$ and $\xi \pm w/2$ is evaluated modulo $2\pi$). We observe that both mappings $\xi \mapsto I(\xi),J(\xi)$ cycle exactly once through the entire index set $\{1,\dots, m\}$ as $\xi$ varies from $0$ to $2\pi$. Next, we introduce
\[
A_{ij} := \set{\xi\in [0,2\pi)}{I(\xi) = i, J(\xi) = j},
\quad
\forall\, i,j\in \{1,\dots, m\}.
\]
Clearly, each $\xi \in [0,2\pi)$ belongs to only one of these sets $A_{ij}$. Since $\xi \mapsto I(\xi)$ and $\xi \mapsto J(\xi)$ have $m$ jumps on $[0,2\pi)$, it follows that the mapping $\xi \mapsto (I(\xi),J(\xi))$ can have at most $2m$ jumps. In particular, this implies that there are at most $2m$ non-empty sets $A_{ij}\ne \emptyset$ (these are all sets of the form $A_{I(\xi),J(\xi)}$, $\xi \in [0,2\pi)$), i.e.
\begin{align}
\# \set{A_{ij}\ne \emptyset}{i,j\in \{1,\dots, m\}} \le 2m.
\end{align}
Since $\bar{u}(x;\xi) = h\, 1_{[-w/2,w/2]}(x-\xi)$, one readily sees that when $\xi$ varies in the interior of $A_{ij}$, then all sensor point values $\bar{u}(\bm{X};\xi)$ remain constant, i.e. the mapping 
\[
\mathrm{interior}(A_{ij}) \mapsto \R^m, 
\quad 
\xi \mapsto \bar{u}(\bm{X};\xi) = \mathrm{const.}
\]
We also note that $A_{ij} = [x_i+w/2,x_{i+1}+w/2) \cap [x_j-w/2,x_{j+1}-w/2)$ is in fact an interval. Fix $i,j$ such that $A_{ij}\ne \emptyset$ for the moment. We can write $A_{ij} = [a-\Delta,a+\Delta)$ for some $a,\Delta\in \T$, and there exists a constant $\bar{U}\in \R^m$ such that $\bar{U} \equiv \bar{u}(\bm{X};\xi)$ for $\xi \in [a-\Delta,a+\Delta)$. It follows from the triangle inequality that
\begin{align*}
\int_{A_{ij}} \Vert \bar{u}(x;\xi) - \Phi(x;\bar{u}(\bm{X};\xi)) \Vert_{L^1} \, d\xi
&=
\int_{a-\Delta}^{a+\Delta} \Vert \bar{u}(x;\xi) - \Phi(x;\bar{U}) \Vert_{L^1} \, d\xi
\\
&=
\int_{0}^{\Delta} 
\Vert \bar{u}(x;a-\xi') - \Phi(x;\bar{U}) \Vert_{L^1} \, d\xi'
\\
&\qquad 
+
\int_{0}^{\Delta} 
\Vert \bar{u}(x;a+\xi') - \Phi(x;\bar{U}) \Vert_{L^1}
\, d\xi'
\\
&\ge 
\int_{0}^{\Delta} 
\Vert \bar{u}(x;a+\xi') - \bar{u}(x;a-\xi') \Vert_{L^1}
\, d\xi'.
\end{align*}
Since $\bar{u}(x;\xi) = h\, 1_{[-w/2,w/2]}(x-\xi)$, we have, by a simple change of variables
\begin{align*}
\Vert \bar{u}(x;a+\xi') - \bar{u}(x;a-\xi') \Vert_{L^1}
&= 
h\int_{\T}
|
1_{[-w/2,w/2]}(x) - 1_{[-w/2,w/2]}(x+2\xi')
|
\, dx.
\end{align*}
The last expression is of order $\xi'$, provided that $\xi'$ is small enough to avoid overlap with a periodic shift (recall that we are on working on the torus, and $1_{[-w/2,w/2]}(x)$ is identified with its periodic extension). To avoid such issues related to periodicity, we first note that $\xi' \le \Delta \le \pi$, and then we choose a (large) constant $C_0 = C_0(\overline{w})$, such that for any $\xi'\le \pi/C_0$ and $w\le \overline{w}$, we have
\[
\int_{\T}
|1_{[-w/2,w/2]}(x) - 1_{[-w/2,w/2]}(x+2\xi')| 
\, dx
= \int_{-w/2-2\xi'}^{-w/2} 1 \, dx + \int_{w/2-2\xi'}^{w/2} 1 \, dx
=
4\xi'.
\]
From the above, we can now estimate
\begin{align*}
\int_{A_{ij}} \Vert \bar{u}(x;\xi) - \Phi(x;\bar{u}(\bm{X};\xi)) \Vert_{L^1} \, d\xi
&\ge 
\int_{0}^{\Delta} 
\Vert \bar{u}(x;a+\xi') - \bar{u}(x;a-\xi') \Vert_{L^1}
\, d\xi' 
\\
&\ge 
\int_{0}^{\Delta/C_0} 
\Vert \bar{u}(x;a+\xi') - \bar{u}(x;a-\xi') \Vert_{L^1}
\, d\xi'  
\\
&\ge 
\underline{h}  \int_0^{\Delta/C_0} 4\xi' \, d\xi'
\\
&= 
2\underline{h} \frac{\Delta^2}{C_0^2}
\ge 
C |A_{ij}|^2,
\end{align*}
where $C = C(\underline{h},\overline{w})$ is a constant only depending on the fixed parameters $\underline{h}, \overline{w}$. 

Summing over all $A_{ij}\ne\emptyset$, we obtain the lower bound
\begin{align*}
\int_{0}^{2\pi}
\Vert \bar{u}(x;\xi) - \Phi(x;\bar{u}(\bm{X};\xi)) \Vert_{L^1} \, d\xi
&=
\sum_{A_{ij}\ne \emptyset} \int_{A_{ij}}
\Vert \bar{u}(x;\xi) - \Phi(x;\bar{u}(\bm{X};\xi)) \Vert_{L^1} \, d\xi
\\
&\ge 
C \sum_{A_{ij}\ne \emptyset} |A_{ij}|^2.
\end{align*}
We observe that $[0,2\pi) = \bigcup A_{ij}$ is a disjoint union, and hence $\sum_{A_{ij}\ne \emptyset} |A_{ij}| = 2\pi$. Furthermore, as observed above, there are at most $2m$ non-zero summands $|A_{ij}|\ne 0$. To finish the proof, we claim that the functional
$\sum_{k=1}^{2m} |\alpha_k|^2$ is minimized among all $\alpha_1,\dots, \alpha_{2m}$ satisfying the constraint $\sum_{k=1}^{2m} |\alpha_k| = 2\pi$ if, and only if, $|\alpha_1| = \dots = |\alpha_{2m}| = \pi/m$. Given this fact, it then immediately follows from the above estimate that
\[
\frac{1}{2\pi}\int_{0}^{2\pi}
\Vert \bar{u}(x;\xi) - \Phi(x;\bar{u}(\bm{X};\xi)) \Vert_{L^1} \, d\xi
\ge C \sum_{A_{ij}\ne \emptyset} |A_{ij}|^2
\ge \frac{2C \pi^2}{m}.
\]
where $C = C(\underline{h},\overline{w})>0$ is independent of the values of $w\in [\underline{w},\overline{w}]$ and $h \in [\underline{h},\overline{h}]$.
This suffices to conclude the claim of Proposition \ref{prop:lower-m}.

It remains to prove the claim: We argue by contradiction. Let $\alpha_1,\dots, \alpha_{2m}$ be a minimizer of $\sum_k |\alpha_k|^2$ under the constraint $\sum_k |\alpha_k| = 2\pi$. Clearly, we can wlog assume that $0 \le \alpha_1 \le \dots \le \alpha_{2m}$ are non-negative numbers. If the claim does not hold, then there exists a minimizer, such that $\alpha_1 < \alpha_{2m}$. Given $\delta > 0$ to be determined below, we define $\beta_k$ by 
\[
\beta_1 = \alpha_1 + \delta, 
\quad 
\beta_{2m} = \alpha_{2m}-\delta,
\]
and $\beta_k = \alpha_k$, for all other indices. Then, by a simple computation, we observe that
\[
\sum_k \alpha_k^2 - \sum_k \beta_k^2
=
2\delta (\alpha_{2m} - \alpha_1 - \delta).
\]
Choosing $\delta > 0$ sufficiently small, we can ensure that the last quantity is $>0$, while keeping $\beta_{k} \ge 0$ for all $k$. In particular, it follows that $\sum_k |\beta_k| = \sum_k |\alpha_k| = 2\pi$, but
\[
\sum_k \alpha_k^2 > \sum_k \beta_k^2,
\]
in contradiction to the assumption that $\alpha_1,\dots, \alpha_{2m}$ minimize the last expression. Hence, any minimizer must satisfy $|\alpha_1| = \dots = |\alpha_{2m}| = \pi/m$.
\end{proof}

\subsection{Proof of Theorem \ref{thm:adv-rough2}}
\label{app:adv2pf}
\begin{proof}
  We choose equidistant grid points $x_1,\dots, x_m$ for the construction of a shift-DeepONet approximation to $\cG_\adv$. We may wlog assume that the grid distance $\Delta x = x_2 - x_1 < \underline{w}$, as the statement is asymptotic in $m\to \infty$. We note the following points:

  \textbf{Step 1:} We show that $h$ can be efficiently determined by max-pooling.  First, we observe that for any two numbers $a,b$, the mapping
  \[
  \begin{pmatrix}
    a \\ b
  \end{pmatrix}
  \mapsto
  \begin{pmatrix}
    \max(0,a-b) \\ \max(0,b) \\ \max(0,-b)
  \end{pmatrix}
  \mapsto \max(0,a-b) + \max(0,b) - \max(0,-b) \equiv \max(a,b),
  \]
  is exactly represented by a ReLU neural network $\tilde{\max}(a,b)$ of width $3$, with a single hidden layer. Given $k$ inputs $a_1,\dots, a_k$, we can parallelize $O(k/2)$ copies of $\tilde{\max}$, to obtain a ReLU network of width $\le 3k$ and with a single hidden layer, which maps
  \[
  \begin{pmatrix}
    a_1 \\
    a_2 \\
    \vdots \\
    a_{k-1} \\
    a_k
  \end{pmatrix}
  \mapsto
  \begin{pmatrix}
    \max(a_1,a_2) \\
    \vdots \\
    \max(a_{k-1},a_k)
  \end{pmatrix}.
  \]
  Concatenation of $O(\log_2(k))$ such ReLU layers with decreasing input sizes $k$, $\lceil k/2 \rceil$, $\lceil k/4 \rceil$, \dots, $1$, provides a ReLU representation of max-pooling
  \[
  \begin{pmatrix}
    a_1 \\
    \vdots \\
    a_k
  \end{pmatrix}
  \mapsto
  \begin{pmatrix}
    \max(a_1,a_2) \\
    \vdots \\
    \max(a_{k-1},a_k)
  \end{pmatrix}
  \mapsto
    \begin{pmatrix}
    \max(a_1,a_2,a_3,a_4) \\
    \vdots \\
    \max(a_{k-3},a_{k-2},a_{k-1},a_k)
  \end{pmatrix}
\mapsto
  \dots
  \mapsto
  \max(a_1,\dots, a_k).
  \]
  This concatenated ReLU network $\text{maxpool}: \R^k \to \R$ has width $\le 3k$, depth $O(\log(k))$, and size $O(k \log(k))$.

  Our goal is to apply the above network $\text{maxpool}$ to the shift-DeepONet input $u(x_1),\dots, u(x_m)$ to determine the height $h$. To this end, we first choose $\ell_1,\dots, \ell_k \in \{1,\dots, m\}$, such that $x_{\ell_{j+1}} - x_{\ell_j} \le \underline{w}$, with $k \in \N$ minimal. Note that $k$ is uniformly bounded, with a bound that only depends on $\underline{w}$ (not on $m$). Applying the $\text{maxpool}$ construction above the $u(x_{\ell_1}),\dots, u(x_{\ell_k})$, we obtain a mapping
\[
\begin{pmatrix}
u(x_1) \\ \vdots \\ u(x_m)
\end{pmatrix}
\mapsto 
\begin{pmatrix}
u(x_{\ell_1}) \\ \vdots \\ u(x_{\ell_k})
\end{pmatrix}
\mapsto \text{maxpool}(u(x_{\ell_1}), \dots, u(x_{\ell_k})) = h.
\]
This mapping can be represented by $O(\log(k))$ ReLU layers, with width $\le 3k$ and total (fully connected) size $O(k^2\log(k))$. In particular, since $k$ only depends on $\underline{w}$, we conclude that there exists $C = C(\underline{w})>0$ and a neural network $\tilde{h}: \R^m \to \R$ with 
\begin{gather}
\depth(\tilde{h}) \le C, \quad \width(\tilde{h}) \le C, \quad \size(\tilde{h}) \le C,
\end{gather}
such that 
\begin{gather}
\tilde{h}(\bar{u}(\bm{X})) = h,
\end{gather}
for any initial data of the form $\bar{u}(x) = h 1_{[-w/2,w/2]}(x-\xi)$, where $h\in [\underline{h},\overline{h}]$, $w\in [\underline{w},\overline{w}]$, and $\xi \in [0,2\pi]$.

\textbf{Step 2:} To determine the width $w$, we can consider a linear layer (of size $m$), followed by an approximation of division, $\divnet(a;b) \approx a/b$ (cp. Proposition \ref{prop:division}):
\[
\bar{u}(\bm{X})
\mapsto \Delta x \sum_{j=1}^m \bar{u}(x_j)
\mapsto \divnet\left(\Delta x \sum_{j=1}^m \bar{u}(x_j); \tilde{h}(\bar{u}(\bm{X}))\right)
\]
Denote this by $\tilde{w}(\bar{u}(\bm{X}))$. Then
\begin{align*}
|w - \tilde{w}|
&=
\left|
\frac{1}{h}\int_{0}^{2\pi} \bar{u}(x) \, dx 
-
\frac{\Delta x}{h} \sum_{j} \bar{u}(x_j)
\right|
\\
&\qquad
+
\left|
\frac{\Delta x}{h} \sum_{j} \bar{u}(x_j)
-
\divnet_{\underline{h},\overline{h},\epsilon}\left(\Delta x \sum_{j=1}^m \bar{u}(x_j); h\right)
\right|
\\
&\le \frac{2\pi}{m} + \epsilon.
\end{align*}
And we have $\depth(\tilde{w}) \le C\log(\epsilon^{-1})^2$, $\width(\tilde{w}) \le C$, $\size(\tilde{w}) \le C \left(m + \log(\epsilon^{-1})^2\right)$, by the complexity estimate of Proposition \ref{prop:division}.

\textbf{Step 3:} To determine the shift $\xi \in [0,2\pi]$, we note that 
\begin{align*}
\Delta x \sum_{j=1}^m \bar{u}(x_j) e^{-ix_j}
&= 
\int_0^{2\pi} \bar{u}(x) e^{-ix} \, dx + O\left(\frac{1}{m}\right)
\\
&=
2\sin(w/2) e^{-i\xi} + O\left(\frac{1}{m}\right).
\end{align*}
Using the result of Lemma \ref{lem:Xi}, combined with the approximation of division of Proposition \ref{prop:division}, and the observation that $w\in (\underline{w},\pi)$ implies that $\sin(w/2) \ge \sin(\underline{w}/2)>0$ is uniformly bounded from below for all $w \in [\underline{w},\overline{w}]$, it follows that for all $\epsilon \in (0,\frac12]$, there exists a neural network $\tilde{\xi}: \R^m \to [0,2\pi]$, of the form
\[
\tilde{\xi}(\bar{u}(\bm{X}))
=
\Xi_\epsilon
\left[
\divnet\left(
\Delta x \sum_{j=1}^m u(x_j) e^{-i x_j}
;
\tilde{\times}_{M,\epsilon}\left(\tilde{h},2\tilde{\sin}(\tilde{w}/2)\right)
\right)
\right]
\]
such that
\[
\left|
\xi - \tilde{\xi}(\bar{u}(\bm{X}))
\right|
\le \epsilon,
\]
for all $\xi \in [0,2\pi-\epsilon)$, and 
\[
\depth(\tilde{\xi}) \le C \log(\epsilon^{-1})^2, 
\quad
\width(\tilde{\xi}) \le C,
\quad
\size(\tilde{\xi}) \le C \left( m + \log(\epsilon^{-1})^2\right).
\]

\textbf{Step 4:}
Combining the above three ingredients (Steps 1--3), and given the fixed advection velocity $a\in \R$ and fixed time $t$, we define a shift-DeepONet with $p=6$, scale-net $\cA_k \equiv 1$, and shift-net $\bm{\gamma}$ with output $\gamma_k(\bar{u}) \equiv \tilde{\xi} + at$, as follows:
\begin{align*}
\cN^\sdon(\bar{u}) 
&=
\sum_{k=1}^p \beta_k(\bar{u}) \tau_k(x - \gamma_k(\bar{u}))
\\
&:=
\sum_{j=-1}^1
\tilde{h}
\tilde{1}_{[0,\infty)}^\epsilon\left(
x - \tilde{\xi} - at + \tilde{w}/2 + 2\pi j
\right)
\\
&\qquad -
\sum_{j=-1}^1
\tilde{h}
\tilde{1}_{[0,\infty)}^\epsilon\left(
x - \tilde{\xi} -at - \tilde{w}/2 + 2\pi j
\right),
\end{align*}
where $\tilde{h} = \tilde{h}(\bar{u}(\bm{X}))$, $\tilde{w} = \tilde{w}(\bar{u}(\bm{X}))$, and $\tilde{\xi} = \tilde{\xi}(\bar{u}(\bm{X}))$, and where $\tilde{1}_{[0,\infty)}^\epsilon$ is a 
  sufficiently accurate $L^1$-approximation of the indicator function $1_{[0,\infty)}(x)$ (cp. Proposition \ref{prop:indicator}). To estimate the approximation error, we denote $\tilde{1}^\epsilon_{[-\tilde{w}/2,\tilde{w}/2]}(x) := \tilde{1}^\epsilon_{[0,\infty)}(x+\tilde{w}/2) - \tilde{1}^\epsilon_{[0,\infty)}(x-\tilde{w}/2)$ and identify it with it's periodic extension to $\T$, so that we can more simply write
        \[
        \cN^\sdon(\bar{u})(x) = \tilde{h} \tilde{1}^\epsilon_{-[\tilde{w}/2,\tilde{w}/2]}(x-\tilde{\xi}-at).
        \]
        We also recall that the solution $u(x,t)$ of the linear advection equation $\partial_t u + a\partial_x u = 0$, with initial data $u(x,0) = \bar{u}(x)$ is given by $u(x,t) = \bar{u}(x-at)$, where $at$ is a fixed constant, independent of the input $\bar{u}$. Thus, we have
        \[
        \cG_\adv(\bar{u})(x) = \bar{u}(x-at) = h\, 1_{[-w/2,w/2]}(x-\xi-at).
        \]
        We can now write
        \begin{align*}
          |\cG_\adv(\bar{u})(x) - \cN^\sdon(\bar{u})(x)|
          &= \left|h\, 1_{[-w/2,w/2]}(x-\xi-at) - \tilde{h} \, \tilde{1}^\epsilon_{[-\tilde{w}/2,\tilde{w}/2]}(x-\tilde{\xi}-at) \right|.
        \end{align*}
        We next recall that by the construction of Step 1, we have $\tilde{h}(\bar{u}) \equiv h$ for all inputs $\bar{u}$. Furthermore, upon integration over $x$, we can clearly get rid of the constant shift $at$ by a change of variables. Hence, we can estimate
        \begin{align}
          \label{eq:sdoono}
        \Vert \cG_\adv(\bar{u}) - \cN^\sdon(\bar{u}) \Vert_{L^1}
        \le \overline{h} \int_{\T} \left|1_{[-w/2,w/2]}(x-\xi) - \tilde{1}^\epsilon_{[-\tilde{w}/2,\tilde{w}/2]}(x-\tilde{\xi})\right| \, dx.
        \end{align}
        Using the straight-forward bound
        \begin{align*}
        \left|1_{[-w/2,w/2]}(x-\xi) - \tilde{1}^\epsilon_{[-\tilde{w}/2,\tilde{w}/2]}(x-\tilde{\xi}) \right|
        &\le
        \left|1_{[-w/2,w/2]}(x-\xi) - 1_{[-w/2,w/2]}(x-\tilde{\xi}) \right|
        \\
        &\quad + \left|1_{[-w/2,w/2]}(x-\tilde{\xi}) - 1_{[-\tilde{w}/2,\tilde{w}/2]}(x-\tilde{\xi}) \right|
        \\
        &\quad + \left|1_{[-\tilde{w}/2,\tilde{w}/2]}(x-\tilde{\xi}) - \tilde{1}^\epsilon_{[-\tilde{w}/2,\tilde{w}/2]}(x-\tilde{\xi}) \right|,
        \end{align*}
        one readily checks that, by Step 3, the integral over the first term is bounded by 
        \[
        \Vert (I) \Vert_{L^1} \le C \int_0^{2\pi-\epsilon} |\xi-\tilde{\xi}| \, d\xi +  \int_{2\pi-\epsilon}^{2\pi} 2 \, d\xi
        \le (C+2) \epsilon.
        \]
        where $C = C(\underline{w},\overline{w}) > 0$.
        By Step 2, the integral over the second term can be bounded by 
        \[
        \Vert (II) \Vert_{L^1} \le C|w-\tilde{w}| \le C \left( 1/m + \epsilon\right). 
        \]
        Finally, by Proposition \ref{prop:step}, by choosing $\epsilon$ sufficiently small (recall also that the size of $\tilde{1}^\epsilon$ is independent of $\epsilon$), we can ensure that 
        \[
        \Vert (III) \Vert_{L^1} = \Vert
        \tilde{1}^\epsilon_{[-\tilde{w}/2,\tilde{w}/2]} - 1_{[-\tilde{w}/2,\tilde{w}/2]}
        \Vert_{L^1} \le \epsilon,
        \]
        holds uniformly for any $\tilde{w}$.
        Hence, the right-hand side of \eqref{eq:sdoono} obeys an upper bound of the form
        \begin{align*}
          \E_{\bar{u}\sim\mu} \left[ \Vert \cG_\adv(\bar{u}) - \cN^\sdon(\bar{u}) \Vert_{L^1} \right]
          &= \fint_{\underline{h}}^{\overline{h}} \, dh \fint_{\underline{w}}^{\overline{w}}\, dw \fint_{\T} \, d\xi \, \Vert \cG_\adv(\bar{u}) - \cN^\sdon(\bar{u}) \Vert_{L^1}
          \\
          &\le \frac{\overline{h}}{2\pi} \fint_{\underline{w}}^{\overline{w}}\, dw \int_{0}^{2\pi}\, d\xi \,\left\{ \Vert (I) \Vert_{L^1} + \Vert (II) \Vert_{L^1} + \Vert (III) \Vert_{L^1} \right\}
          \\
          &\le C \left(\epsilon + \frac1m\right),
        \end{align*}
    for a constant $C = C(\underline{w},\overline{w},\overline{h})>0$. We also recall that by our construction,
\[
\depth(\cN^\sdon) \le C \log(\epsilon^{-1})^2, 
\quad
\width(\cN^\sdon) \le C,
\quad
\size(\cN^\sdon) \le C \left( m + \log(\epsilon^{-1})^2\right).
\]
Replacing $\epsilon$ by $\epsilon/2C$ and choosing $m \sim \epsilon^{-1}$, we obtain 
\begin{align*}
\E_{\bar{u}\sim \mu}\left[
 \Vert \cG_\adv(\bar{u}) - \cN^\sdon(\bar{u}) \Vert_{L^1}
\right]
&\le  \epsilon,
\end{align*}
with 
\[
\depth(\cN^\sdon) \le C \log(\epsilon^{-1})^2, 
\quad
\width(\cN^\sdon) \le C,
\quad
\size(\cN^\sdon) \le C \epsilon^{-1},
\]
where $C$ depends only on $\mu$, and is independent of $\epsilon$.
This implies the claim of Theorem \ref{thm:adv-rough2}. 
\end{proof}

\subsection{Proof of Theorem \ref{thm:fnoadv}}
\label{app:adv3pf}
For the proof of Theorem \ref{thm:fnoadv}, we will need a few intermediate results:

\begin{lemma}
\label{lem:fno-sc}
Let $\bar{u} = h \, 1_{[-w/2,w/2]}(x-\xi)$ and fix a constant $at \in \R$. There exists a constant $C>0$, such that given $N$ grid points, there exists an FNO with 
\[
\kmax = 1, \quad d_v \le C, \quad \depth \le C, \quad \size \le C,
\]
 such that 
\[
\sup_{h,w,\xi}
\left| 
\cN^\fno(\bar{u})(x) - \sin(w/2) \cos(x-\xi-at)
\right|
\le  \frac{C}{N}.
\]
\end{lemma}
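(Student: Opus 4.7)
The target $f(x) := \sin(w/2)\cos(x-\xi-at)$ lives entirely in the first Fourier mode, matching the constraint $\kmax = 1$. The continuous Fourier coefficients of $\bar u$ are $c_0 = hw/(2\pi)$ and $\hat u(1) = \tfrac{h\sin(w/2)}{\pi}e^{-i\xi}$, so writing $A := \Re \hat u(1)$, $B := -\Im \hat u(1)$, one has the pointwise identity
\[
\sin(w/2)\cos(x-\xi-at) = \frac{\pi}{2h}\bigl[2A\cos(x-at) + 2B\sin(x-at)\bigr].
\]
Since $\bar u$ is of bounded variation with uniformly bounded total variation over the parameter range, the discrete Fourier coefficients computed from $N$ equispaced grid samples approximate $c_0$ and $\hat u(\pm 1)$ with uniform error $O(1/N)$ by a standard trapezoidal-rule estimate for BV integrands. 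The only nonlinear work is therefore the reconstruction of the scalar prefactor $\pi/(2h)$ from the Fourier data.

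The plan is to build the FNO in three stages. Stage~1 consists of a single Fourier-convolution layer, with multipliers $P(\pm 1) = e^{\mp iat}$ and $P(k)=0$ for $|k|>1$. Applied to $\bar u$ this produces, in one channel of the hidden representation, the grid function $2A\cos(x-at)+2B\sin(x-at) = \tfrac{2h\sin(w/2)}{\pi}\cos(x-\xi-at)$; in parallel channels the scalars $c_0, A, B$ are extracted by projecting on the constant and first modes and are carried forward as constant-in-$x$ features. Stage~2 computes $\pi/(2h)$ from $(c_0, A, B)$ via pointwise sub-networks embedded in a few FNO hidden layers: the key identity $\sqrt{A^2+B^2}/c_0 = 2\sin(w/2)/w =: g(w)$, together with the analyticity and strict monotonicity of $g$ on $[\underline w,\overline w]$, implies that $g^{-1}$ is analytic on a compact image and is approximable by a ReLU sub-network via Theorem~\ref{thm:analytic}. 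The width is recovered as $w \approx g^{-1}(\sqrt{A^2+B^2}/c_0)$, after which $h \approx 2\pi c_0/w$ and then $\pi/(2h)$ follow from two applications of Proposition~\ref{prop:division}; squaring and square root on the relevant compact sets (bounded away from $0$) are handled by combining Proposition~\ref{prop:mult} and Theorem~\ref{thm:analytic}. Stage~3 uses the multiplication network of Proposition~\ref{prop:mult} to multiply the grid function from Stage~1 pointwise by the scalar $\pi/(2h)$ from Stage~2.

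The total error decomposes into a DFT discretization term of size $O(1/N)$ and the ReLU-approximation errors from Stage~2 and Stage~3, the latter being fixed constants (for sub-network sizes chosen large enough independently of $N$) which are absorbed into the overall constant $C$. The main obstacle is Stage~2: with only $\kmax=1$ modes of $\bar u$ accessible, no single Fourier coefficient isolates $h$, so $h$ must be reconstructed by inverting the analytic bijection $(h,w)\mapsto (c_0, |\hat u(1)|)$. This inversion is feasible only because the parameter set $[\underline h,\overline h]\times[\underline w,\overline w]$ is a compact rectangle bounded away from the singular locus $w=0$, which is precisely where the positivity $\underline h, \underline w > 0$ from the setup enters the argument.
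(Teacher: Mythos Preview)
Your Stage~2 forces a genuine gap. The lemma demands $\depth \le C$ and $\size \le C$ \emph{uniformly in $N$}, while the error must be $\le C/N$. But the ReLU sub-networks you invoke for $g^{-1}$, square root, division and multiplication (Theorem~\ref{thm:analytic}, Propositions~\ref{prop:mult} and \ref{prop:division}) each carry a fixed approximation error $\epsilon_0>0$ once their sizes are frozen. After Stage~3 the total error is $O(1/N)+O(\epsilon_0)$, and this cannot be bounded by any $C/N$: as $N\to\infty$ the right-hand side tends to $0$ while your error floor stays at $\epsilon_0$. To drive $\epsilon_0$ down to $O(1/N)$ you would need depth $\sim\log(N)^2$ in Stage~2, contradicting the bounded-depth requirement. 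So the phrase ``absorbed into the overall constant $C$'' hides an actual obstruction.

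The paper sidesteps this entirely by eliminating $h$ \emph{exactly} before any Fourier processing: the pointwise ReLU map $\bar u(x)\mapsto \min(1,\underline{h}^{-1}\bar u(x))$ sends $h\,1_{[-w/2,w/2]}(\cdot-\xi)$ to $1_{[-w/2,w/2]}(\cdot-\xi)$ with zero error, since $\underline h^{-1}h\ge 1$ on the support. After that, a single linear Fourier layer with multiplier $P(\pm1)=\tfrac{\pi}{2}e^{\pm iat}$ (and $P(k)=0$ otherwise) produces $\sin(w/2)\cos(x-\xi-at)$ up to the DFT quadrature error $O(1/N)$, and nothing else needs to be approximated. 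This is why the paper gets away with fixed, $N$-independent depth and size.
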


\begin{proof}
We first note that there is a ReLU neural network $\Phi$ consisting of two hidden layers, such that
\[
\Phi(\bar{u}(x)) = \min(1,\underline{h}^{-1}\bar{u}(x))
= \min\left( 1, \underline{h}^{-1} h 1_{[-w/2,w/2]}(x) \right)
= 1_{[-w/2,w/2]}(x),
\]
for all $h\in [\underline{h},\overline{h}]$. Clearly, $\Phi$ can be represented by FNO layers where the convolution operator $K_\ell \equiv 0$.

Next, we note that the $k=1$ Fourier coefficient of $\tilde{u} := 1_{[-w/2,w/2]}(x-\xi)$ is given by
\begin{align*}
\cF_N \tilde{u}(k=\pm 1)
&=
\frac{1}{N}
\sum_{j=1}^N
1_{[-w/2,w/2]}(x_j - \xi) e^{\mp ix_j}
\\
&=
\frac{1}{2\pi}
\int_0^{2\pi} 
1_{[-w/2,w/2]}(x - \xi) e^{\mp ix} \, dx
+ 
O(N^{-1})
\\
&=
\frac{\sin(w/2)e^{\mp i\xi}}{\pi}
+ O(N^{-1}),
\end{align*}
where the $O(N^{-1})$ error is bounded uniformly in $\xi \in [0,2\pi]$ and $w\in [\underline{w},\overline{w}]$. It follows that the FNO $\cN^\fno$ defined by 
\[
\bar{u} \mapsto   \Phi(\bar{u}) \mapsto \sigma\left(\cF_N^{-1}P\cF_N\Phi(\bar{u})\right) - \sigma\left(\cF_N^{-1}(-P)\cF_N\Phi(\bar{u})\right),
\]
where $P$ implements a projection onto modes $|k|=1$ and multiplication by $e^{\pm iat}\pi/2$ (the complex exponential introduces a phase-shift by $at$), satisfies
\[
\sup_{x\in \T} \left| \cN^\fno(\bar{u})(x) - \sin(w/2) \cos(x-\xi-at) \right| \le \frac{C}{N},
\]
where $C$ is independent of $N$, $w$, $h$ and $\xi$.
\end{proof}

\begin{lemma}
\label{lem:fno-w}
Fix $0 < \underline{h} < \overline{h}$ and $0< \underline{w} < \overline{w}$. There exists a constant $C = C(\underline{h},\overline{h},\underline{w},\overline{w})>0$ with the following property: For any input function $\bar{u}(x) = h \, 1_{[-w/2,w/2]}(x-\xi)$ with $h\in [\underline{h},\overline{h}]$ and $w\in [\underline{w},\overline{w}]$, and given $N$ grid points, there exists an FNO with constant output function, such that 
\[
\sup_{h,w,\xi}
\left| 
\cN^\fno(\bar{u})(x) - w
\right|
\le  \frac{C}{N},
\]
and with uniformly bounded size,
\[
\kmax = 0, \quad d_v \le C, \quad \depth \le C, \quad \size \le C.
\]
\end{lemma}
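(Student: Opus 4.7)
The plan is to build the FNO in two stages. \textbf{Stage 1:} apply a pointwise ReLU transformation that normalizes the input's height, producing (exactly) the indicator $\tilde u(x) := 1_{[-w/2,w/2]}(x-\xi)$ independently of $h$. As in Lemma \ref{lem:fno-sc}, this is realized by the identity
\[
\min\!\big(1,\underline{h}^{-1}\bar u(x)\big)
=
\sigma\!\big(\underline{h}^{-1}\bar u(x)\big) - \sigma\!\big(\underline{h}^{-1}\bar u(x)-1\big),
\]
which is exact because $\underline{h}^{-1}\bar u(x)\in\{0\}\cup[1,\overline h/\underline h]$. This step uses FNO hidden layers with trivial convolution kernel ($K_\ell\equiv 0$), bounded width and depth depending only on $\underline h,\overline h$.

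\textbf{Stage 2:} extract the zeroth Fourier mode of $\tilde u$ and rescale. The discrete Fourier transform at $k=0$ reads
\[
\cF_N\tilde u(0) \;=\; \frac{1}{N}\sum_{j=1}^N 1_{[-w/2,w/2]}(x_j-\xi),
\]
which is a Riemann sum for $\tfrac{1}{2\pi}\int_0^{2\pi} 1_{[-w/2,w/2]}(x-\xi)\,dx = w/(2\pi)$. Since the integrand has bounded total variation ($\mathrm{TV}=2$, independent of $w,\xi$), standard BV quadrature estimates give
\[
\Big| \cF_N\tilde u(0) - \tfrac{w}{2\pi} \Big| \le \tfrac{C}{N},
\]
with $C$ uniform in $w\in[\underline w,\overline w]$, $\xi\in[0,2\pi]$. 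Taking the Fourier multiplier $P$ to be multiplication by $2\pi$ at $k=0$ and zero otherwise (consistent with $\kmax=0$), and setting $b_\ell\equiv 0$, $W_\ell\equiv 0$ in this layer, the inverse DFT returns the \emph{constant} function $\approx w$. A trivial output projection $Q$ (identity or scalar linear map) then yields the claimed FNO with constant output function.

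The depth, width, and size of the resulting FNO are uniformly bounded by constants depending only on $\underline h,\overline h,\underline w,\overline w$, and the approximation error is $C/N$ uniformly. The only substantive step is the uniform quadrature bound, but this is immediate from the TV estimate for the trapezoidal/Riemann sum applied to indicators, whose total variation is independent of $w$ and $\xi$; no other obstacle arises.
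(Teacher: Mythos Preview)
Your proof is correct and follows essentially the same approach as the paper: first normalize the height via the pointwise map $\bar u \mapsto \min(1,\underline{h}^{-1}\bar u)$ (represented by ReLU layers with trivial convolution), then project onto the zeroth Fourier mode and rescale by $2\pi$ to obtain a constant output approximating $w$ with $O(N^{-1})$ error. Your justification of the quadrature error via the uniform TV bound is in fact more explicit than the paper's, which simply asserts the $O(N^{-1})$ bound.
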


\begin{proof}
We can define a FNO mapping 
\[
\bar{u} 
\mapsto 
1_{[-w/2,w/2]}(x)
\mapsto 
\frac{2\pi}{N} \sum_{j=1}^N 1_{[-w/2,w/2]}(x_j)
=
w + O(N^{-1}),
\]
where we observe that the first mapping is just $\bar{u} \mapsto  \max(\underline{h}^{-1}\bar{u}(x),1)$, which is easily represented by an ordinary ReLU NN of bounded size. The second mapping above is just projection onto the $0$-th Fourier mode under the discrete Fourier transform. In particular, both of these mappings can be represented exactly by a FNO with $\kmax = 0$ and uniformly bounded $d_v, \depth$ and $\size$. To conclude the argument, we observe that the error $O(N^{-1})$ depends only on the grid size and is independent of $w \in [\underline{w},\overline{w}]$.
\end{proof}

\begin{lemma}
  \label{lem:sin2w}
  Fix $0 < \underline{w} < \overline{w}$. There exists a constant $C = C(\underline{w},\overline{w})>0$, such that for any $\epsilon > 0$, there exists a FNO such that for any constant input function $\bar{u}(x) \equiv w \in [\underline{w},\overline{w}]$, we have
  \[
  \left| \cN^\fno(\bar{u})(x) - \frac12 \sin(w) \right| \le \epsilon, \quad \forall x\in [0,2\pi],
  \]
  and
  \[
  \kmax = 0, \quad d_v \le C, \quad \depth \le C \log(\epsilon^{-1})^2, \quad \size \le C \log(\epsilon^{-1})^2.
  \]
\end{lemma}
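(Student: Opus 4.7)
The plan is to reduce the lemma to a finite-dimensional approximation problem and apply Theorem \ref{thm:analytic}. The key observation is that with Fourier cut-off $\kmax = 0$, every ingredient of an FNO hidden layer preserves constant functions: the convolution operator $K_\ell v^\ell$ retains only the $k=0$ Fourier mode (hence outputs a constant), the bias $b_\ell(x)$ has $\hat b_\ell(k) \equiv 0$ for $|k|>0$ and is therefore constant, and the residual $W_\ell v^\ell$ and ReLU activation act pointwise. Consequently, if the lifted function $v^1(x)$ is constant, every subsequent hidden state $v^\ell(x)$ will remain constant as well, and the FNO effectively collapses to a standard ReLU feedforward network operating on the scalar value.

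First I would invoke Theorem \ref{thm:analytic} for the scalar function $f(w) = \tfrac12 \sin(w)$, which is entire and in particular has an analytic extension to an open interval containing $[\underline{w},\overline{w}]$. This yields constants $C',\gamma>0$ (depending only on $\underline{w},\overline{w}$) such that for every $L \in \N$ there exists a ReLU network $\Phi_L : \R \to \R$ with
\[
\sup_{w \in [\underline{w},\overline{w}]} |\Phi_L(w) - \tfrac12 \sin(w)| \le C'\exp(-\gamma L^{1/2}),
\quad \depth(\Phi_L) \le C'L, \quad \width(\Phi_L) \le C'.
\]
Choosing $L = \lceil (\gamma^{-1}\log(C'\epsilon^{-1}))^2 \rceil$ gives an $\epsilon$-approximation with $\depth(\Phi_L), \size(\Phi_L) \le C \log(\epsilon^{-1})^2$ for some $C = C(\underline{w},\overline{w})>0$.

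Next I would realize $\Phi_L$ as an FNO $\cN^\fno$ with $\kmax = 0$. The lifting operator $R(\bar u(x), x) := (\bar u(x), 1)^\top$ ignores $x$, so for any constant input $\bar u(x)\equiv w$ the lifted function $v^1(x) \equiv (w,1)$ is constant, with lifting dimension $d_v$ chosen equal to $\width(\Phi_L)+1$ (the extra component holds the bias ``1''). For each hidden layer of $\Phi_L$ with weight matrix $A_\ell$ and bias $c_\ell$, I take the corresponding FNO layer with residual matrix $W_\ell = A_\ell$, trivial convolution $P_\ell \equiv 0$, and constant bias $b_\ell(x) \equiv c_\ell$ (implemented via the single $k=0$ Fourier coefficient). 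Because the hidden states remain constant functions at every layer, the ReLU FNO layer reproduces exactly the action of the corresponding ReLU layer of $\Phi_L$. The linear projection $Q$ realizes the final linear output of $\Phi_L$. Thus $\cN^\fno(\bar u)(x) = \Phi_L(w)$ is a constant function, and
\[
|\cN^\fno(\bar u)(x) - \tfrac12 \sin(w)| = |\Phi_L(w) - \tfrac12 \sin(w)| \le \epsilon.
\]

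The complexity bounds transfer directly: $\kmax = 0$, $d_v \le \width(\Phi_L)+1 \le C$, $\depth(\cN^\fno) \le \depth(\Phi_L) \le C\log(\epsilon^{-1})^2$, and $\size(\cN^\fno) \le C\log(\epsilon^{-1})^2$ by the FNO size formula \eqref{eq:FNOize} (note that $d_v^2\kmax^d = d_v^2$ when $\kmax=0$ in dimension $d=1$). The only delicate step is verifying the claim that FNO layers with $\kmax=0$ preserve constant functions and act as pointwise ReLU layers on them; this is a direct calculation from the definition of $K_\ell$ together with the assumption $\hat b_\ell(k)\equiv 0$ for $|k|>\kmax$, but it is the ingredient that makes the whole reduction work.
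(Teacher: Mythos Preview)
Your proposal is correct and follows essentially the same approach as the paper: invoke Theorem \ref{thm:analytic} to obtain a ReLU network approximating $w \mapsto \tfrac12\sin(w)$ with depth and size $O(\log(\epsilon^{-1})^2)$ and bounded width, then embed it as an FNO with $\kmax=0$ by copying the weight matrices into $W_\ell$, setting $P_\ell \equiv 0$, and using constant bias functions. Two minor cosmetic points: the extra ``1'' component in the lifting is unnecessary since the FNO bias $b_\ell(x)\equiv c_\ell$ already supplies the additive constants, and in the size formula \eqref{eq:FNOize} the factor $\kmax^d$ should be read as the number of retained Fourier modes (which is $1$ when $\kmax=0$), not literally $0^d$.
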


\begin{proof}
  It follows e.g. from \cite[Thm. III.9]{EPGB2020} (or also Theorem \ref{thm:analytic} above) that there exists a constant $C=C(\underline{w},\overline{w})>0$, such that for any $\epsilon >0$, there exists a ReLU neural network $S_\epsilon$ with $\size(S_\epsilon) \le C \log(\epsilon^{-1})^2$, $\depth(S_\epsilon) \le C \log(\epsilon^{-1})^2$ and $\width(S_\epsilon) \le C$, such that
  \[
  \sup_{w \in [\underline{w},\overline{w}]} \left| S_\epsilon(w) - \frac12\sin(w) \right| \le \epsilon.
  \]
  To finish the proof, we simply note that this ReLU neural network $S_\epsilon$ can be easily represented by a FNO $\cS_\epsilon$ with $\kmax = 0$, $d_v \le C$, $\depth(S_\epsilon) \le C \log(\epsilon^{-1})^2$ and $\size(\cS_\epsilon) \le C \log(\epsilon^{-1})^2$; it suffices to copy the weight matrices $W_\ell$ of $S_\epsilon$, set the entries of the Fourier multiplier matrices $P_\ell(k) \equiv 0$, and choose constant bias functions $b_\ell(x) = \text{const.}$ (with values given by the corresponding biases in the hidden layers of $S_\epsilon$).
\end{proof}

\begin{lemma}
\label{lem:fno-h}
Let $\bar{u} = h \, 1_{[-w/2,w/2]}(x-\xi)$. Assume that $2\pi/N \le \underline{w}$. For any $\epsilon > 0$, there exists an FNO with constant output function, such that 
\[
\sup_{h,w,\xi}
\left| 
\cN^\fno(\bar{u})(x) - h
\right|
\le \epsilon,
\]
and
\[
\kmax = 0, \quad d_v \le C, \quad \depth \le C \log(\epsilon^{-1})^2.
\]
\end{lemma}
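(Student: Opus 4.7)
The key observation I would exploit is that the discrete values $\bar{u}(x_1),\dots,\bar{u}(x_N)$ belong to $\{0,h\}$, so the discrete mean of $\bar{u}$ and the discrete mean of the indicator $1_{[-w/2,w/2]}(\slot-\xi)$ differ by exactly the factor $h$. Concretely, set $c := \frac{1}{N}\#\{j : x_j \in [\xi - w/2,\xi+w/2]\}$. Then $\frac{1}{N}\sum_j \bar{u}(x_j) = hc$ and $\frac{1}{N}\sum_j 1_{[-w/2,w/2]}(x_j - \xi) = c$, so the quotient is $h$ \emph{exactly}, with no quadrature error at all. The grid hypothesis $2\pi/N \le \underline{w}$ guarantees that any interval of length $w \ge \underline{w}$ contains at least one grid point, whence $c \ge 1/N > 0$ is bounded below by a positive constant (depending on the fixed $N$).

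Based on this, I would assemble the required FNO in three pointwise/spectral blocks. First, a short block using only $W_\ell, b_\ell, \sigma$ (with $K_\ell \equiv 0$) produces two channels at every grid point: a residual copy $v^{(1)}(x) = \bar{u}(x)$ and $v^{(2)}(x) = \sigma(\underline{h}^{-1}\bar{u}(x)) - \sigma(\underline{h}^{-1}\bar{u}(x) - 1) = \min(\underline{h}^{-1}\bar{u}(x), 1)$. Since $\bar{u}(x_j) \in \{0\}\cup[\underline{h},\overline{h}]$, the second channel equals exactly $1_{[-w/2,w/2]}(x_j - \xi)$ at each grid point, and this block has $O(1)$ depth and width. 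Second, a single FNO layer with $\kmax = 0$, Fourier multiplier $P_\ell(0) = I_{d_v}$ (and zero on all other modes), $W_\ell = 0$, and $b_\ell \equiv 0$ applied to both channels returns precisely the discrete means, i.e. the constant functions $v^{(1)}(x)\equiv hc$ and $v^{(2)}(x)\equiv c$ on the grid; the activation $\sigma$ acts as the identity since both means are nonnegative. Third, a pointwise ReLU subnetwork implements the approximate division $\divnet_{1/N,\overline{h},\epsilon}$ of Proposition \ref{prop:division} applied to $(hc, c)$; since $c\in[1/N,1]$ stays in a compact subset of $(0,\infty)$, this proposition produces a network of constant width and depth $\le C \log(\epsilon^{-1})^2$ whose output is within $\epsilon$ of $hc/c = h$. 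Because it acts pointwise on constant channels, the final output is itself a constant function on $\T$, as required.

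The error analysis is then immediate: the first two blocks are exact (the essential cancellation of $h$ in the ratio sidesteps the usual $O(1/N)$ Riemann error), and the entire $\epsilon$-budget is absorbed by the division block. Summing complexities, $\depth(\cN^\fno) \le C\log(\epsilon^{-1})^2$ is dominated by the division block, $d_v$ is bounded by the fixed number of active channels plus the width of $\divnet$, and $\kmax = 0$ throughout. The main subtlety is securing a uniform positive lower bound on the denominator $c$ so that Proposition \ref{prop:division} applies with a constant independent of $\epsilon$; this is precisely where the hypothesis $2\pi/N \le \underline{w}$ enters, and the resulting constant $C$ is allowed to depend on $\underline{h}, \overline{h}, \underline{w}, \overline{w}$ and on the fixed $N$. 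No other step poses a genuine obstacle.
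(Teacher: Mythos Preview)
Your proposal is correct and follows essentially the same three-step construction as the paper: lift $\bar u$ to the pair $(\bar u,\,1_{[-w/2,w/2]}(\cdot-\xi))$ via the pointwise map $\min(1,\underline{h}^{-1}\bar u)$, project onto the $k=0$ mode to obtain the discrete sums, then apply the approximate division network $\divnet_\epsilon$. Your write-up is in fact more explicit than the paper's sketch, spelling out the exact cancellation $hc/c=h$ and the role of the hypothesis $2\pi/N\le\underline{w}$ in bounding the denominator away from zero.
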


\begin{proof}
The proof follows along similar lines as the proofs of the previous lemmas. In this case, we can define a FNO mapping 
\[
\bar{u} 
\mapsto 
\begin{bmatrix}
h \, 1_{[-w/2,w/2]}(x) \\
1_{[-w/2,w/2]}(x)
\end{bmatrix}
\mapsto 
\begin{bmatrix}
h \sum_{j=1}^N 1_{[-w/2,w/2]}(x_j)
\\
\sum_{j=1}^N 1_{[-w/2,w/2]}(x_j)
\end{bmatrix}
\mapsto 
\divnet_\epsilon\left(
h \sum_{j=1}^N 1_{[-w/2,w/2]}(x_j)
,
\sum_{j=1}^N 1_{[-w/2,w/2]}(x_j)
\right).
\]
The estimate on $\kmax$, $d_v$, $\depth$ follow from the construction of $\divnet$ in Proposition  \ref{prop:division}.
\end{proof}

\begin{proof}[Proof of Theorem \ref{thm:fnoadv}]
We first note that (the $2\pi$-periodization of) $1_{[-w/2,w/2]}(x-\xi-at)$ is $=1$ if, and only if
\begin{align}
\label{eq:cosin}
\cos(x-\xi-at) \ge \cos(w/2)
\iff
\sin(w/2)\cos(x-\xi-at) \ge \frac12 \sin(w).
\end{align}
The strategy of proof is as follows: Given the input function $\bar{u}(x) = h\, 1_{[-w/2,w/2]}(x-\xi)$ with unknown $w \in [\underline{w},\overline{w}]$, $\xi \in [0,2\pi]$ and $h\in [\underline{h},\overline{h}]$, and for given $a,t\in \R$ (these are fixed for this problem), we first construct an FNO which approximates the sequence of mappings
\[
\bar{u}
\mapsto 
\begin{bmatrix}
h \\
w \\
\sin(w/2) \cos(x-\xi)
\end{bmatrix}
\mapsto 
\begin{bmatrix}
h \\
\frac12 \sin(w) \\
\sin(w/2) \cos(x-\xi-at)
\end{bmatrix}
\mapsto 
\begin{bmatrix}
h \\
\sin(w/2) \cos(x-\xi-at) - \frac12 \sin(w)
\end{bmatrix}.
\]
Then, according to \eqref{eq:cosin}, we can approximately reconstruct $1_{[-w/2,w/2]}(x-\xi-at)$ by approximating the identity
\[
1_{[-w/2,w/2]}(x-\xi-at)
=
1_{[0,\infty)}\left( \sin(w/2) \cos(x-\xi-at) - \frac12 \sin(w)\right),
\]
where $1_{[0,\infty)}$ is the indicator function of $[0,\infty)$. Finally, we obtain $\cG_\adv(\bar{u}) = h\, 1_{[-w/2,w/2]}(x-\xi-at)$ by approximately multiplying this output by $h$. We fill in the details of this construction below.

\textbf{Step 1:}
The first step is to construct approximations of the mappings above. We note that we can choose a (common) constant $C_0 = C_0(\underline{h},\overline{h},\underline{w},\overline{w})>0$, depending only on the parameters $\underline{h}$, $\overline{h}$, $\underline{w}$ and $\overline{w}$, such that for any grid size $N\in \N$ all of the following hold: 
\begin{enumerate}
\item There exists a FNO $\cH_N$ with constant output  (cp. Lemma \ref{lem:fno-h}), such that for $\bar{u}(x) = h \, 1_{[-w/2,w/2]}(x-\xi)$, 
\begin{align}
\sup_{w,h} 
\left|
\cH_N(\bar{u}) - h
\right| 
\le \frac{1}{N}.
\end{align}
and with
\[
\kmax \le 1, \quad d_v \le C_0, \quad \depth \le C_0 \log(N)^2, \quad \size \le C_0 \log(N)^2.
\]
\item Combining Lemma \ref{lem:fno-w} and \ref{lem:sin2w}, we conclude that there exists a FNO $\cS_N$ with constant output, such that for $\bar{u}(x) = h \, 1_{[-w/2,w/2]}(x-\xi)$, we have
\begin{align}
\sup_{w\in [\underline{w}-1,\overline{w}+1]} 
\left|
\cS_N(\bar{u}) - \frac12 \sin(w)
\right| 
\le \frac{C_0}{N}.
\end{align}
and with
\[
\kmax = 0, \quad
d_v \le C_0, \quad
\depth \le C_0 \log(N)^2,\quad
\size \le C_0 \log(N)^2.
\]
\item  There exists a FNO $\cC_N$ (cp. Lemma \ref{lem:fno-sc}), such that  for $\bar{u} = h \, 1_{[-w/2,w/2]}$,
\begin{align}
\sup_{x,\xi,w} 
\left|
\cC_N(\bar{u})(x) - \sin(w/2) \cos(x-\xi-at)
\right| 
\le \frac{C_0}{N},
\end{align}
where the supremum is over $x,\xi\in [0,2\pi]$ and $w\in [\underline{w},\overline{w}]$, and such that
\[
\kmax = 1, \quad d_v \le C_0, \quad \depth \le C_0, \quad \size \le C_0.
\]
\item There exists a ReLU neural network $\tilde{1}^N_{[0,\infty)}$ (cp. Proposition \ref{prop:step}), such that 
\begin{align} 
\label{eq:1Nprop}
\Vert \tilde{1}^N_{[0,\infty)} \Vert_{L^\infty} \le 1,
\quad
\tilde{1}^N_{[0,\infty)}(z) 
=
\begin{cases}
0, & (x<0), \\
1, & (x\ge \frac{1}{N}).
\end{cases}
\end{align}
with 
\[
\width(\tilde{1}^N_{[0,\infty)})
\le C_0,
\quad
\depth(\tilde{1}^N_{[0,\infty)})
\le C_0.
\]
\item there exists a ReLU neural network $\tilde{\times}_N$ (cp. Proposition \ref{prop:mult}), such that 
\begin{align} 
\label{eq:timesN}
\sup_{a,b} |\tilde{\times}_N(a,b) - ab| \le \frac{1}{N},
\end{align}
where the supremum is over all $|a|,|b| \le \overline{h}+1$, 
and
\[
\width(\tilde{\times}_N) \le C_0, 
\quad
\depth(\tilde{\times}_N) \le C_0 \log(N).
\]

\end{enumerate}
Based on the above FNO constructions, we define
\begin{align}
\cN^\fno (\bar{u}) := 
\tilde{\times}_N\left(
\cH_N(\bar{u}), \tilde{1}^N_{[0,\infty)}\Big(\cC_N(\bar{u}) - \cS_N(\bar{u}) \Big)
\right).
\end{align}
Taking into account the size estimates from points 1--5 above, as well as the general FNO size estimate \eqref{eq:FNOize}, it follows that $\cN^\fno$ can be represented by a FNO with 
\begin{align} \label{eq:fnono}
\kmax = 1, \quad d_v \le C, \quad \depth \le C \log(N)^2, \quad \size \le C \log(N)^2.
\end{align}
To finish the proof of Theorem \ref{thm:fnoadv}, it suffices to show that $\cN^\fno$ satisfies an estimate 
\[
\sup_{\bar{u}\sim \mu}
\Vert 
\cN^\fno(\bar{u}) - \cG_\adv(\bar{u}) 
\Vert_{L^2}
\le
\frac{C}{N},
\]
with $C>0$ independent of $N$.

\textbf{Step 2:} We claim that if $x\in [0,2\pi]$ is such that
\[
\left|\sin(w/2)\cos(x-\xi) - \frac12 \sin(w) \right|
\ge 
\frac{2C_0+1}{N},
\]
with $C_0$ the constant of Step 1, then 
\[
\tilde{1}^N_{[0,\infty)}(\cC_N(\bar{u})(x) - \cS_N(\bar{u}))
=
1_{[-w/2,w/2]}(x-\xi).
\]
To see this, we first assume that 
\[
\sin(w/2)\cos(x-\xi) - \frac12 \sin(w)
\ge 
\frac{2C_0+1}{N}
>0.
\]
Then 
\begin{align*}
\cC_N(\bar{u})(x) - \cS_N(\bar{u})
&\ge 
\sin(w/2)\cos(x-\xi) - \frac12 \sin(w)
\\
&\qquad - \left|\cC_N(\bar{u})(x) - \sin(w/2) \cos(x-\xi) \right| - \left|\cS_N(\bar{u})- \frac12 \sin(w)
 \right|
 \\
 &\ge 
\frac{2C_0+1}{N}  - \frac{C_0}{N} - \frac{C_0}{N} = \frac{1}{N}  > 0.
\end{align*}
Hence, it follows from \eqref{eq:1Nprop} that
\begin{align*}
\tilde{1}^N_{[0,\infty)} (\cC_N(\bar{u})(x) - \cS_N(\bar{u}))
&=
1
\\
&=
1_{[0,\infty)} \left(\sin(w/2)\cos(x-\xi) - \frac12 \sin(w)\right)
\\
&=
1_{[-w/2,w/2]} (x).
\end{align*}
The other case, 
\[
\sin(w/2)\cos(x-\xi) - \frac12 \sin(w)
\le 
-\frac{2C_0+1}{N},
\]
is shown similarly.

\textbf{Step 3:} We note that there exists $C = C(\underline{w},\overline{w})>0$, such that for any $\delta > 0$, the Lebesgue measure 
\[
\meas
\set{x\in [0,2\pi]}{\left|\sin(w/2)\cos(x-\xi) - \frac12 \sin(w)
\right| < \delta} 
\le C \delta.
\]

\textbf{Step 4:} 
Given the previous steps, we now write
\begin{align*}
\cG_\adv(\bar{u}) - \cN^\fno(\bar{u})
&=
h\, 1_{[-w/2,w/2]}(x)
-
\tilde{\times}_N\left(
\cH_N, \tilde{1}^N_{[0,\infty)}\Big(\cC_N - \cS_N(\bar{u}) \Big)
\right)
\\
&=
\left[
h\, 1_{[-w/2,w/2]}(x) - h \tilde{1}^N_{[0,\infty)}(\cC_N - \cS_N)
\right]
\\
&\qquad 
+ (h - \cH_N) \tilde{1}^N_{[0,\infty)}(\cC_N - \cS_N)
\\
&\qquad
+
\cH_N \tilde{1}^N_{[0,\infty)}\Big(\cC_N - \cS_N(\bar{u}) \Big)
-
\tilde{\times}_N\left(
\cH_N, \tilde{1}^N_{[0,\infty)}\Big(\cC_N - \cS_N(\bar{u}) \Big)
\right)
\\
&=: (I) + (II) + (III).
\end{align*}
The second $(II)$ and third $(III)$ terms are uniformly bounded by $N^{-1}$, by the construction of $\tilde{\times}_N$ and $\cH_N$. By Steps 2 and 3 (with $\delta = (2C_0+1)/N$), we can estimate the $L^1$-norm of the first term as
\[
\Vert (I)\Vert_{L^1}
\le
2\overline{h} \, 
\meas
\{\left|\sin(w/2)\cos(x-\xi) - 2^{-1} \sin(w)
\right| < \delta\}
\le C/N,
\]
where the constant $C$ is independent of $N$, and only depends on the parameters $\underline{h}$, $\overline{h}$, $\underline{w}$ and $\overline{w}$. Hence, $\cN^\fno$ satisfies
\[
\sup_{\bar{u} \sim \mu} \Vert \cN^\fno(\bar{u}) - \cG_\adv(\bar{u}) \Vert_{L^1} \le \frac{C}{N},
\]
for a constant $C>0$ independent of $N$, and where we recall (cp. \eqref{eq:fnono} above):
\[
\kmax = 1, \quad d_v \le C, \quad \depth(\cN^\fno) \le C \log(N)^2, \quad \size(\cN^\fno) \le C \log(N)^2
.
\]
The claimed error and complexity bounds of Theorem \ref{thm:fnoadv} are now immediate upon choosing $N \sim \epsilon^{-1}$.
\end{proof}
\subsection{Proof of Theorem \ref{thm:shock}}
\label{app:burg1pf}
To motivate the proof, we first consider the Burgers' equation with the particular initial data $\bar{u}(x) = -\sin(x)$, with periodic boundary conditions on the interval $x\in [0,2\pi]$. The solution for this initial datum can be constructed via the well-known method of characteristics; we observe that the solution $u(x,t)$ with initial data $\bar{u}(x)$ is smooth for time $t\in [0,1)$, develops a shock discontinuity at $x=0$ (and $x=2\pi$) for $t \ge 1$, but remains otherwise smooth on the interval $x\in (0,2\pi)$ for all times. In fact, fixing a time $t\ge 0$, the solution $u(x,t)$ can be written down explicitly in terms of the bijective mapping (cp. Figure \ref{fig:Phit})
\[
\Psi_t: [x_t,2\pi-x_t] \to [0,2\pi], \quad \Psi_t(x_0) =  x_0 - t\sin(x_0),
\]
where
\begin{align} 
\label{eq:xt}
\begin{cases}
 x_t =0, &\text{for } t \le 1, \\
 x_t > 0 \text{ is the unique solution of } x_t = t \sin(x_t), & \text{for } t>1.
\end{cases}
\end{align}
We note that for given $x_0$, the curve $t\mapsto \Psi_t(x_0)$ traces out the characteristic curve for the Burgers' equation, starting at $x_0$ (and until it collides with the shock). Following the method of characteristics, the solution $u(x,t)$ is then given in terms of $\Psi_t$, by
\begin{align} \label{eq:burgers-sol0}
u(x,t) = -\sin\left( \Psi_t^{-1}(x) \right), \quad
\text{for } x\in [0,2\pi].
\end{align}

We are ultimately interested in solutions for more general periodic initial data of the form $\bar{u}(x) = -\sin(x-\xi)$; these can easily be obtained from the particular solution \eqref{eq:burgers-sol0} via a shift. We summarize this observation in the following lemma:
\begin{lemma}
\label{lem:exact-sol}
Let $\xi \in [0,2\pi)$ be given, fix a time $t\ge 0$. Consider the initial data $\bar{u}(x) = -\sin(x-\xi)$. Then the entropy solution $u(x,t)$  of the Burgers' equations with initial data $\bar{u}$ is given by
\begin{align}
\label{eq:burgers-sol}
u(x,t)
=
\begin{cases}
-\sin(\Psi_t^{-1}(x-\xi+2\pi)), & (x < \xi), \\
-\sin(\Psi_t^{-1}(x-\xi)), & (x \ge \xi),
\end{cases}
\end{align}
for $x \in [0,2\pi]$, $t\ge 0$.
\end{lemma}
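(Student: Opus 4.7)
The plan is to reduce the general $\xi \in [0, 2\pi)$ case to the special case $\xi = 0$ already treated via the method of characteristics, using translation invariance of the Burgers' equation on the periodic torus $\T$. Since the equation $\partial_t u + \partial_x(\tfrac12 u^2) = 0$ is invariant under the spatial translation $x \mapsto x - \xi$, and both the Rankine--Hugoniot jump condition and the Lax entropy criterion on shocks are preserved by rigid translation, if $v(x,t)$ denotes the $2\pi$-periodic entropy solution with $v(x,0) = -\sin(x)$, then $u(x,t) := v(x - \xi, t)$ (with $v$ extended $2\pi$-periodically to all of $\R$) is the entropy solution with initial data $u(x, 0) = -\sin(x - \xi) = \bar{u}(x)$. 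Uniqueness of entropy solutions then identifies $u$ as $\cG_\burgers(\bar{u})$ evaluated at time $t$.

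By the representation \eqref{eq:burgers-sol0}, we have $v(y, t) = -\sin(\Psi_t^{-1}(y))$ for $y \in [0, 2\pi]$. Extending by $2\pi$-periodicity, we obtain $u(x,t) = v(x-\xi, t)$ where the argument is reduced modulo $2\pi$ to land in $[0, 2\pi]$. The explicit case split in \eqref{eq:burgers-sol} then follows from straightforward bookkeeping: for $x \in [\xi, 2\pi]$, the argument $x - \xi$ already lies in $[0, 2\pi - \xi] \subset [0, 2\pi]$, giving $u(x,t) = -\sin(\Psi_t^{-1}(x - \xi))$. For $x \in [0, \xi)$, we have $x - \xi \in [-\xi, 0)$, and periodicity gives $u(x,t) = v(x - \xi + 2\pi, t) = -\sin(\Psi_t^{-1}(x - \xi + 2\pi))$, where $x - \xi + 2\pi \in [2\pi - \xi, 2\pi) \subset [0, 2\pi]$.

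The only real subtlety is confirming that the translation-invariance argument transfers entropy admissibility correctly: the shock in $v$ located at $y \equiv 0 \pmod{2\pi}$ (present for $t \ge 1$) is translated to a shock in $u$ located at $x = \xi$, and since the jump and entropy conditions are purely algebraic conditions on one-sided limit values of the solution, they are preserved under this rigid spatial shift. The formula \eqref{eq:burgers-sol} is therefore uniform in $t \ge 0$: for $t \in [0,1)$ one has $x_t = 0$ so $\Psi_t$ is defined on all of $[0, 2\pi]$ and $u(\cdot, t)$ is continuous at $x=\xi$, while for $t > 1$ the map $\Psi_t$ is defined only on $[x_t, 2\pi - x_t]$, and the case split around $x=\xi$ captures precisely the shock discontinuity arising from the compressing characteristics meeting at the translated shock location.
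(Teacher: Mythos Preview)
Your proof is correct and follows exactly the approach the paper intends: the paper does not give a formal proof of this lemma but simply remarks that solutions for general $\xi$ ``can easily be obtained from the particular solution \eqref{eq:burgers-sol0} via a shift,'' and your argument carefully fills in precisely that translation-invariance reduction together with the periodic-bookkeeping case split.
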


\begin{lemma} \label{lem:analytic-ext}
Let $t>1$, and define $U: [0,2\pi] \to \R$ by $U(x) := -\sin(\Psi_t^{-1}(x))$. There exists $\Delta_t > 0$ (depending on $t$), such that $x\mapsto U(x)$ can be extended to an analytic function
$\bar{U}:  (-\Delta_t, 2\pi + \Delta_t) \to \R$, $x\mapsto \bar{U}(x)$; i.e., such that $\bar{U}(x) = U(x)$ for all $x \in [0,2\pi]$.
\end{lemma}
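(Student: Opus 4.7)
The plan is to apply the real analytic inverse function theorem to $\Psi_t(x_0) = x_0 - t\sin(x_0)$, which is the restriction to $\R$ of an entire function on $\C$. Once I establish that $\Psi_t'$ is strictly positive on a neighborhood of the closed interval $[x_t, 2\pi - x_t]$, the inverse $\Psi_t^{-1}$ will be real analytic on a neighborhood of $\Psi_t([x_t, 2\pi - x_t]) = [0, 2\pi]$, and by strict monotonicity of $\Psi_t$ this image neighborhood will be of the required form $(-\Delta_t, 2\pi + \Delta_t)$. The function $U(x) = -\sin(\Psi_t^{-1}(x))$ is then analytic as a composition of $\sin$ (entire) with the analytic extension of $\Psi_t^{-1}$.

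The main step (and main obstacle) is verifying $\Psi_t'(x_t) > 0$ at the left endpoint; the right endpoint follows by the symmetry $\Psi_t'(2\pi - x_t) = \Psi_t'(x_t)$, and the strict positivity on the open interior will fall out along the way. Here $\Psi_t'(x_0) = 1 - t\cos(x_0)$, so the issue is to show $\cos(x_t) < 1/t$. The key observation is that $x_t$ is not a critical point of $\Psi_t$, but the second root of $\Psi_t = 0$ on $[0, \pi]$ (the first being $0$). Setting $f(x) := t\sin(x) - x = -\Psi_t(x)$, I note $f(0)=0$, $f'(0) = t-1 > 0$, and $f'(x) = t\cos(x) - 1$ vanishes on $(0, \pi)$ only at the unique point $x^\ast := \arccos(1/t) \in (0, \pi/2)$. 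Hence $f$ increases on $[0, x^\ast]$, attains its maximum there, then decreases; since $f(\pi) = -\pi < 0$, the unique positive root $x_t$ of $f$ lies in $(x^\ast, \pi)$. In particular $x_t > x^\ast$, so $\cos(x_t) < \cos(x^\ast) = 1/t$, giving $\Psi_t'(x_t) = 1 - t\cos(x_t) > 0$.

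With strict positivity at both endpoints in hand, I next verify that $\Psi_t'$ does not vanish anywhere on $(x_t, 2\pi - x_t)$: the zeros of $\Psi_t'$ in $[0, 2\pi]$ are exactly $x^\ast$ and $2\pi - x^\ast$, both excluded from $(x_t, 2\pi - x_t)$ by the inequality $x_t > x^\ast$. By continuity, $\Psi_t'$ remains strictly positive on an open real neighborhood $(x_t - \delta, 2\pi - x_t + \delta)$ of the closed interval, where $\Psi_t$ is strictly increasing and real analytic. By the analytic inverse function theorem, $\Psi_t$ admits a real analytic inverse on the image
\[
\Psi_t\bigl((x_t - \delta, 2\pi - x_t + \delta)\bigr) = (-\Delta_t, 2\pi + \Delta_t),
\]
for some $\Delta_t > 0$, and this inverse agrees with $\Psi_t^{-1}$ on $[0, 2\pi]$. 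Composing with the entire function $-\sin(\,\cdot\,)$ yields the analytic extension $\bar U: (-\Delta_t, 2\pi + \Delta_t) \to \R$ of $U$, completing the proof.
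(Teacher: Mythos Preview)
Your proof is correct. The paper actually states this lemma without proof, relying only on Figure~\ref{fig:Phit} (whose caption alludes to the ``larger domain \dots\ allowing for bijective analytic continuation''), so your argument supplies the missing details. The approach you take---showing $\Psi_t'(x_t)=1-t\cos(x_t)>0$ by locating $x_t$ strictly to the right of the unique critical point $x^\ast=\arccos(1/t)$ of $f(x)=t\sin(x)-x$ on $(0,\pi)$, then invoking the real analytic inverse function theorem on a slightly enlarged interval---is precisely the natural one the figure is suggesting. One minor remark: the symmetry $\Psi_t(2\pi - y) = 2\pi - \Psi_t(y)$ is what guarantees the image interval is genuinely symmetric about $[0,2\pi]$, i.e.\ of the form $(-\Delta_t,2\pi+\Delta_t)$; you use this implicitly, and it follows immediately from $\sin(2\pi-y)=-\sin(y)$.
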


\begin{figure}[H]
\begin{subfigure}{.32\textwidth}
\includegraphics[width=\textwidth]{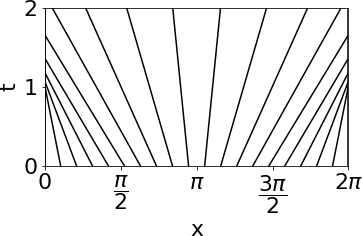}
\caption{$t\mapsto \Psi_t(x_0)$}
\end{subfigure}
\begin{subfigure}{.32\textwidth}
\includegraphics[width=\textwidth]{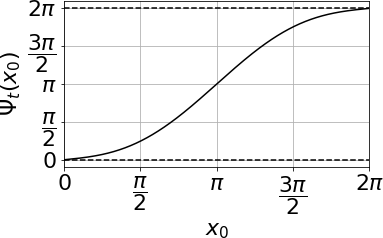}
\caption{$\Psi_t(x_0)$ at $t=0.8$}
\end{subfigure}
\begin{subfigure}{.32\textwidth}
\includegraphics[width=\textwidth]{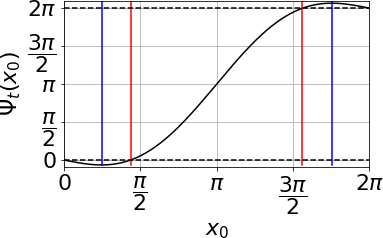}
\caption{$\Psi_t(x_0)$ at $t=1.4$}
\end{subfigure}
\caption{Illustration of $\Psi_t(x_0)$: (a) characteristics traced out by $t\mapsto \Psi_t(x_0)$ (until collision with shock), (b) $\Psi_t(x_0)$ before shock formation, (c) $\Psi_t(x_0)$ after shock formation, including the interval $[x_t,2\pi-x_t]$ (red limits) and the larger domain (blue limits) allowing for bijective analytic continuation, $(\Delta_t,2\pi-\Delta_t)$.}
\label{fig:Phit}
\end{figure}

\begin{corollary}
\label{cor:analytic-ext}
Let $U: [0,2\pi] \to \R$ be defined as in Lemma \ref{lem:analytic-ext}. There exists a constant $C>0$, such that for any $\epsilon > 0$, there exists a ReLU neural network $\Phi_\epsilon : \R \to \R$, such that 
\[
\sup_{x\in [0,2\pi]} |\Phi_\epsilon(x) - U(x) | \le \epsilon,
\]
and
\begin{gather*}
\depth(\Phi_\epsilon) \le C\log(\epsilon^{-1})^2, \quad \width(\Phi_\epsilon) \le C,
\end{gather*}
\end{corollary}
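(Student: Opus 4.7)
\medskip

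The plan is to deduce this corollary almost directly from the two results that immediately precede it, namely Lemma \ref{lem:analytic-ext} and Theorem \ref{thm:analytic}. The work is essentially just to match up the hypotheses of the latter with the output of the former and then to convert the exponential rate into the claimed $\log(\epsilon^{-1})^2$ depth bound.

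First, I would invoke Lemma \ref{lem:analytic-ext} to obtain the analytic extension $\bar U:(-\Delta_t,2\pi+\Delta_t)\to\R$ of $U$. Since $[0,2\pi]\subset(-\Delta_t,2\pi+\Delta_t)$ with $\bar U|_{[0,2\pi]} = U$, the function $U:[0,2\pi]\to\R$ satisfies, by definition, the hypothesis of Theorem \ref{thm:analytic} that it admits an analytic extension. Applying Theorem \ref{thm:analytic} to $U$ yields constants $C_0,\gamma>0$, depending only on $U$ (and hence only on the fixed time $t>1$), such that for every $L\in\N$ there is a ReLU network $\Phi_L:\R\to\R$ with
\[
\sup_{x\in[0,2\pi]}|U(x)-\Phi_L(x)|\le C_0\exp(-\gamma L^{1/2}),
\quad \depth(\Phi_L)\le C_0 L,\quad \width(\Phi_L)\le C_0.
\]

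Given $\epsilon>0$, I would then choose $L=L(\epsilon)\in\N$ to be the smallest integer satisfying $C_0\exp(-\gamma L^{1/2})\le\epsilon$, which amounts to $L\ge \gamma^{-2}\bigl(\log(C_0/\epsilon)\bigr)^2$. Setting $\Phi_\epsilon:=\Phi_{L(\epsilon)}$, we obtain the desired accuracy $\sup_{x\in[0,2\pi]}|\Phi_\epsilon(x)-U(x)|\le\epsilon$, while $\depth(\Phi_\epsilon)\le C_0 L(\epsilon)\le C\log(\epsilon^{-1})^2$ and $\width(\Phi_\epsilon)\le C_0\le C$ for a suitable constant $C=C(t)>0$ independent of $\epsilon$, as claimed.

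There is no substantive obstacle at this stage: all of the technical content is absorbed into Lemma \ref{lem:analytic-ext} (existence of the analytic extension past the boundary points $0$ and $2\pi$, which is the delicate step because one must continue $\Psi_t^{-1}$ analytically across the shock location) and Theorem \ref{thm:analytic} (the exponential ReLU approximation rate for analytic functions on compact intervals). The corollary is genuinely a one-line consequence of these two results together with the elementary inversion $\exp(-\gamma L^{1/2})\le\epsilon\iff L\gtrsim\log(\epsilon^{-1})^2$.
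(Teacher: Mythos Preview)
Your proposal is correct and essentially identical to the paper's own proof: both invoke Lemma \ref{lem:analytic-ext} to get the analytic extension, apply Theorem \ref{thm:analytic} to obtain the exponential rate $C\exp(-\gamma L^{1/2})$, and then choose $L\gtrsim\gamma^{-2}\log(\epsilon^{-1})^2$ to convert this into the stated depth bound. There is nothing to add.
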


\begin{proof}
By Lemma \ref{lem:analytic-ext}, $U$ can be extended to an analytic function $\bar{U}: (-\Delta_t,2\pi+\Delta_t)\to \R$. Thus, by Theorem \ref{thm:analytic}, there exist constants $C, \gamma>0$, such that for any $L\in \mathbb{N}$, there exists a neural network
\[
\tilde{\Phi}_L: \R\to \R,
\]
such that 
\begin{gather*}
\sup_{x\in [0,2\pi]} |U(x) - \tilde{\Phi}_L(x)| \le C \exp(-\gamma L^{1/2}), \\
\depth(\tilde{\Phi}_L) \le CL, \quad \width(\tilde{\Phi}_L) \le C.
\end{gather*}
Given $\epsilon > 0$, we can thus choose $L \ge \gamma^{-2}\log(\epsilon^{-1})^{2}$, to obtain a neural network $\Phi_\epsilon := \tilde{\Phi}_L$, such that $\sup_{x\in [0,2\pi]} |U(x) - \Phi_\epsilon(x)| \le \epsilon$, and
\begin{gather*}
\depth(\Phi_\epsilon) \le C\log(\epsilon^{-1})^2, \quad \width(\Phi_\epsilon) \le C,
\end{gather*}
for a constant $C>0$, independent of $\epsilon$.
\end{proof}

\begin{lemma}
\label{lem:Uextend}
Let $t>1$, and let $U: [-2\pi,2\pi] \to \R$ be given by
\[
U(x) :=
\begin{cases}
-\sin(\Psi^{-1}_t(x+2\pi)), & (x<0), \\
-\sin(\Psi^{-1}_t(x)), & (x\ge 0).
\end{cases}
\]
There exists a constant $C = C(t) > 0$, depending only on $t$, such that for any $\epsilon \in (0,\frac12]$, there exists a neural network $\Phi_\epsilon: \R \to \R$, such that 
\[
\depth(\Phi_\epsilon) \le C \log(\epsilon^{-1})^2, 
\quad
\width(\Phi_\epsilon) \le C,
\]
and such that 
\[
\Vert \Phi_\epsilon - U \Vert_{L^1([-2\pi,2\pi])} \le \epsilon.
\]
\end{lemma}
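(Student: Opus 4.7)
The function $U$ is obtained by gluing together two analytic pieces that meet at the shock location $x=0$. Specifically, set $V(y) := -\sin(\Psi_t^{-1}(y))$ on $[0,2\pi]$; by Lemma \ref{lem:analytic-ext}, $V$ has an analytic extension to an open interval containing $[0,2\pi]$. We then have $U(x) = V(x+2\pi)$ on $[-2\pi,0)$ and $U(x) = V(x)$ on $[0,2\pi]$. The obstruction to a single analytic approximation is the jump of $U$ at $x=0$, but since $|U| \le 1$ this jump contributes only $O(\eta)$ to the $L^1$ norm over a transition window of width $\eta$, which we exploit.

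My plan is as follows. First, apply Corollary \ref{cor:analytic-ext} with tolerance $\epsilon/(8\pi)$ to obtain a neural network $\Phi: \R \to \R$ with $\sup_{y\in[0,2\pi]}|\Phi(y)-V(y)| \le \epsilon/(8\pi)$ and
\[
\depth(\Phi) \le C\log(\epsilon^{-1})^2, \qquad \width(\Phi) \le C.
\]
From this, define the two candidate approximants $\Phi^-(x) := \Phi(x+2\pi)$ and $\Phi^+(x) := \Phi(x)$, each realized as a neural network of the same depth and width by absorbing the translation into the first affine layer. Next, introduce a transition width $\eta := \epsilon / 8$ and let $\zeta_\eta$ denote the ReLU step approximation of $1_{[0,\infty)}$ from Proposition \ref{prop:step}, so that $\zeta_\eta(x) = 0$ for $x \le -\eta$, $\zeta_\eta(x) = 1$ for $x \ge 0$, and $\zeta_\eta \in [0,1]$ with $\width, \depth = O(1)$. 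The two functions $\Lambda_1 := 1-\zeta_\eta$ and $\Lambda_2 := \zeta_\eta$ form an (approximate) partition of unity adapted to the split.

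Define the final network via approximate multiplication (Proposition \ref{prop:mult}) with tolerance $\delta := \epsilon/(16\pi)$ and bound $M = 2$:
\[
\Phi_\epsilon(x) := \tilde{\times}_{\delta,M}\!\left(\Lambda_1(x), \Phi^-(x)\right) + \tilde{\times}_{\delta,M}\!\left(\Lambda_2(x), \Phi^+(x)\right).
\]
To bound the $L^1$ error, split $[-2\pi,2\pi] = [-2\pi,-\eta] \cup [-\eta,0] \cup [0,2\pi]$. On $[-2\pi,-\eta]$ one has $\Lambda_1 \equiv 1$, $\Lambda_2 \equiv 0$, so $\Phi_\epsilon(x)$ differs from $\Phi^-(x)$ only by the multiplication error $2\delta$, and $\Phi^-$ differs from $U$ by at most $\epsilon/(8\pi)$; the $L^1$ contribution is $\le 2\pi(\epsilon/(8\pi) + 2\delta) \le \epsilon/4 + \epsilon/4$. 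The same bound holds on $[0,2\pi]$ by the symmetric argument. On the transition window $[-\eta,0]$ of length $\eta = \epsilon/8$, using $|\Phi^\pm|, |U| \le 1 + \epsilon/(8\pi) \le 2$ and $\Lambda_1+\Lambda_2 \equiv 1$, a triangle-inequality estimate gives a pointwise bound of order $1$, hence an $L^1$ contribution bounded by $C\eta \le \epsilon/4$. Summing, $\Vert \Phi_\epsilon - U\Vert_{L^1([-2\pi,2\pi])} \le \epsilon$ after adjusting constants.

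The main (minor) obstacle is the bookkeeping around the jump: choosing $\eta$ small enough that the transition region contributes $O(\epsilon)$, while making sure the approximate multiplication and the indicator approximation add only depth $O(\log(\epsilon^{-1}))$, which is absorbed into the $\log(\epsilon^{-1})^2$ bound inherited from Corollary \ref{cor:analytic-ext}. Since the two pieces, partition of unity, and two multiplication blocks are combined in parallel followed by a linear sum, the resulting network has depth $\le C\log(\epsilon^{-1})^2$ and width $\le C$, as required.
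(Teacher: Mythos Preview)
Your approach is essentially the same as the paper's: approximate the two analytic pieces via Corollary \ref{cor:analytic-ext}, glue them with an approximate indicator/partition of unity, and combine through the ReLU multiplication network of Proposition \ref{prop:mult}. The only cosmetic differences are that the paper invokes Proposition \ref{prop:indicator} to get two $L^1$-approximate indicators $\chi_\epsilon^\pm$ (which do not sum to $1$ exactly), whereas you build an exact partition of unity $\Lambda_1+\Lambda_2\equiv 1$ from a single step function and then split the error estimate into the three regions $[-2\pi,-\eta]$, $[-\eta,0]$, $[0,2\pi]$.

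There is one small technical gap you should patch. On $[-2\pi,-\eta]$ you claim that $\tilde{\times}_{\delta,M}(\Lambda_2(x),\Phi^+(x))$ differs from $0$ by at most $\delta$, using $\Lambda_2\equiv 0$. But Proposition \ref{prop:mult} only guarantees the error bound when \emph{both} inputs lie in $[-M,M]$; here $\Phi^+(x)=\Phi(x)$ is evaluated at $x\in[-2\pi,-\eta]$, far outside the interval $[0,2\pi]$ on which Corollary \ref{cor:analytic-ext} gives any control, so there is no a priori bound $|\Phi^+(x)|\le M$. The symmetric issue arises for $\Phi^-$ on $[0,2\pi]$. The fix is immediate: compose each $\Phi^\pm$ with the constant-size ReLU clipping $t\mapsto \min(\max(t,-2),2)$ before feeding it into $\tilde{\times}_{\delta,M}$; this adds $O(1)$ depth and width and does not affect the approximation on the relevant half-interval. (The paper's proof glosses over the same point.)
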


\begin{proof}
By Corollary \ref{cor:analytic-ext}, there exists a constant $C>0$, such that for any $\epsilon > 0$, there exist neural networks $\Phi^{-}, \Phi^{+}: \R \to \R$, such that 
\[
\depth(\Phi^{\pm}) \le C\log(\epsilon^{-1})^2, \quad \width(\Phi^{\pm}) \le C,
\]
and
\[
\Vert \Phi^{-}(x) - U(x) \Vert_{L^\infty([-2\pi,0])} \le \epsilon, 
\quad
\Vert \Phi^{+}(x) - U(x) \Vert_{L^\infty([0,2\pi])} \le \epsilon. 
\]
This implies that 
\[
\left\Vert
U - 
\left[
1_{[-2\pi,0)} \Phi^{-} + 1_{[0,2\pi]} \Phi^{+}
\right]
\right\Vert_{L^\infty([-2\pi,2\pi])}
\le \epsilon.
\]
By Proposition \ref{prop:indicator} (approximation of indicator functions), there exist neural networks $\chi_\epsilon^\pm: \R \to \R$ with uniformly bounded width and depth, such that
\[
\left\Vert \chi_\epsilon^- - 1_{[-2\pi,0]} \right\Vert_{L^1} \le \epsilon, 
\quad
\left\Vert \chi_\epsilon^+ - 1_{[0,2\pi]} \right\Vert_{L^1} \le \epsilon.
\]
and 
$\Vert \chi_\epsilon^{\pm}(x)\Vert_{L^\infty} \le 1$. Combining this with Proposition \ref{prop:mult} (approximation of multiplication), it follows that there exists a neural network
\[
\Phi_\epsilon: \R \to \R, \quad \Phi_\epsilon(x) = \tilde{\times}_{\epsilon}( \chi_\epsilon^+, \Phi^+) + \tilde{\times}_\epsilon( \chi_\epsilon^-, \Phi^-),
\]
such that 
\begin{align*}
\Big\Vert \Phi_\epsilon - &\left[
1_{[-2\pi,0)} \Phi^{-} + 1_{[0,2\pi]} \Phi^{+}
\right]
\Big\Vert_{L^1([-2\pi,2\pi])}
\\
&\le
\Big\Vert 
\tilde{\times}_{M,\epsilon}(\chi^{+},\Phi^+) - 1_{[0,2\pi]} \Phi^{+}
\Big\Vert_{L^1([-2\pi,2\pi])}
\\
&\qquad
+ 
\Big\Vert 
\tilde{\times}_{M,\epsilon}(\chi^{-},\Phi^-) - 1_{[-2\pi,0)} \Phi^{-}
\Big\Vert_{L^1([-2\pi,2\pi])}
\end{align*}
By construction of $\tilde{\times}_{M,\epsilon}$, we have 
\begin{align*}
\Big\Vert 
\tilde{\times}_{M,\epsilon}(\chi^{+},\Phi^+) - 1_{[0,2\pi]} \Phi^{+}
\Big\Vert_{L^1}
&\le
\Big\Vert 
\tilde{\times}_{M,\epsilon}(\chi^{+},\Phi^+) 
- \chi^{+} \Phi^+
\Big\Vert_{L^1}
\\
&\qquad 
+
\Big\Vert
\left( \chi^+ - 1_{[0,2\pi]}\right) \Phi^{+}
\Big\Vert_{L^1}
\\
&\le 
4\pi \epsilon 
+
\Vert
\chi^+ - 1_{[0,2\pi]}
\Vert_{L^1}
\Vert
\Phi^{+}
\Vert_{L^\infty}
\\
&\le
\left( 4\pi + 2 \right) \epsilon.
\end{align*}
And similarly for the other term. Thus, it follows that
\begin{align*}
\Big\Vert \Phi_\epsilon - \left[
1_{[-2\pi,0)} \Phi^{-} + 1_{[0,2\pi]} \Phi^{+}
\right]
\Big\Vert_{L^1([-2\pi,2\pi])}
\le
2(4\pi + 2) \epsilon,
\end{align*}
and finally,
\begin{align*}
\left\Vert
U - \Phi_\epsilon
\right\Vert_{L^1([-2\pi,2\pi])}
&\le 
4\pi
\left\Vert
U - \left[
1_{[-2\pi,0)} \Phi^{-} + 1_{[0,2\pi]} \Phi^{+}
\right]
\right\Vert_{L^\infty([-2\pi,2\pi])}
\\
&\qquad +
\left\Vert
\left[
1_{[-2\pi,0)} \Phi^{-} + 1_{[0,2\pi]} \Phi^{+}
\right] - \Phi_\epsilon
\right\Vert_{L^1([-2\pi,2\pi])}
\\
&\le
4\pi\epsilon + 2(4\pi + 2) \epsilon 
=
(12\pi + 4) \epsilon.
\end{align*}
for a neural network $\Phi_\epsilon: \R \to \R$ of size:
\[
\depth(\Phi_\epsilon) \le C \log(\epsilon^{-1})^2, 
\quad
\width(\Phi_\epsilon) \le C.
\]
Replacing $\epsilon$ with $\tilde{\epsilon} = \epsilon / (12\pi+4)$ yields the claimed estimate for $\Phi_{\tilde{\epsilon}}$.
\end{proof}

Based on the above results, we can now prove the claimed error and complexity estimate for the shift-DeepONet approximation of the Burgers' equation, Theorem \ref{thm:shock}.

\begin{proof}[Proof of Theorem \ref{thm:shock}]
\label{pf:sdon-shock}
Fix $t>1$. Let $U: [-2\pi,2\pi] \to \R$ be the function from Lemma \ref{lem:Uextend}. By Lemma \ref{lem:exact-sol}, the exact solution of the Burgers' equation with initial data $\bar{u}(x) = -\sin(x-\xi)$ at time $t$, is given by
\[
u(x,t) = U(x-\xi), \quad \forall \, x\in [0,2\pi].
\]
From Lemma \ref{lem:Uextend} (note that $x-\xi\in [-2\pi,2\pi]$), it follows that there exists a constant $C>0$, such that for any $\epsilon > 0$, there exists a neural network $\Phi_\epsilon: \R \to \R$, such that 
\[
\Vert u(\slot,t) - \Phi_\epsilon(\slot - \xi) \Vert_{L^1([0,2\pi])} \le \epsilon,
\]
and 
\[
\depth(\Phi_\epsilon) \le C \log(\epsilon^{-1})^2, 
\quad
\width(\Phi_\epsilon) \le C.
\]
We finally observe that for equidistant sensor points $x_0,x_1,x_2\in [0,2\pi)$, $x_j = 2\pi j/3$, there exists a matrix $A \in \mathbb{R}^{2\times 3}$, which for any function of the form $g(x) = \alpha_0 + \alpha_1 \sin(x) + \alpha_2\cos(x)$ maps
\[
[g(x_0),g(x_1),g(x_2)]^T \mapsto A\cdot [g(x_0),g(x_1),g(x_2)]^T = [-\alpha_1,\alpha_2]^T.
\]
Clearly, the considered initial data $\bar{u}(x) = -\sin(x-\xi)$ is of this form, for any $\xi\in [0,2\pi)$, or more precisely, we have
\[
\bar{u}(x) 
=
-\sin(x-\xi) 
=
-\cos(\xi)\sin(x) -\sin(\xi)\cos(x),
\]
so that 
\[
A\cdot [\bar{u}(x_0),\bar{u}(x_1),\bar{u}(x_2)]^T
=
[
\cos(\xi),\sin(\xi)
].
\]
As a next step, we recall that there exists $C>0$, such that for any $\epsilon > 0$, there exists a neural network $\Xi_\epsilon: \R^2 \to [0,2\pi]$ (cp. Lemma \ref{lem:Xi}), such that 
\[
\sup_{\xi\in [0,2\pi-\epsilon)}
\left|
\xi - \Xi_\epsilon(\cos(\xi),\sin(\xi))
\right| < \epsilon,
\]
such that
$\Xi_\epsilon(\cos(\xi),\sin(\xi)) \in [0,2\pi]$ for all $\xi$, and 
\[
\depth(\Xi_\epsilon) \le C \log(\epsilon^{-1})^2, 
\quad 
\width(\Xi_\epsilon) \le C.
\]
Based on this network $\Xi_\epsilon$, we can now define a shift-DeepONet approximation of $\cN^{\sdon}(\bar{u}) \approx \cG_\Burgers(\bar{u})$
of size:
\[
\depth(\cN^{\sdon}) \le C \log(\epsilon^{-1})^2, 
\quad
\width(\cN^{\sdon}) \le C,
\]
by the composition
\begin{align}
\label{eq:BsDON}
\cN^{\sdon}(\bar{u})(x) := \Phi_\epsilon(x - \Xi_\epsilon(A\cdot \bar{u}(\bm{X}))), 
\end{align}
where $\bar{u}(\bm{X}):= [\bar{u}(x_0),\bar{u}(x_1),\bar{u}(x_2)]^T$, and we note that (denoting $\Xi_\epsilon := \Xi_\epsilon(A\cdot \bar{u}(\bm{X}))$), we have for $\xi \in [0,2\pi-\epsilon]$:
\begin{align*}
\Vert \cG_\Burgers(\bar{u}) - \cN^{\sdon}(\bar{u}) \Vert_{L^1}
&=
\Vert U(\slot - \xi) - \Phi_\epsilon(\slot-\Xi_\epsilon) \Vert_{L^1}
\\
&\le
\Vert U(\slot - \xi) -  U(\slot - \Xi_\epsilon) \Vert_{L^1}
\\
&\qquad
+
\Vert U(\slot - \Xi_\epsilon) - \Phi_\epsilon(x-\Xi_\epsilon) \Vert_{L^1}
\\
&\le
C |\xi - \Xi_\epsilon| + \epsilon
\\
&\le
(C+1) \epsilon,
\end{align*}
where $C$ only depends on $U$, and is independent of $\epsilon > 0$. On the other hand, for $\xi >2\pi - \epsilon$, we have
\begin{align*}
\Vert \cG_\Burgers(\bar{u}) - \cN^{\sdon}(\bar{u}) \Vert_{L^1}
&=
\Vert U(\slot - \xi) - \Phi_\epsilon(\slot-\Xi_\epsilon) \Vert_{L^1}
\\
&\le
\Vert U(\slot - \xi) 
\Vert_{L^1} + 
\Vert \Phi_\epsilon(\slot-\Xi_\epsilon) \Vert_{L^1}
\\
&\le
{2\pi}\left( 
\Vert U(\slot - \xi) 
\Vert_{L^\infty} + 
\Vert \Phi_\epsilon(\slot-\Xi_\epsilon) \Vert_{L^\infty}
\right)
\\
&\le {6\pi},
\end{align*}
is uniformly bounded. It follows that 
\begin{align*}
\E_{\bar{u}\sim\mu}\left[
 \Vert \cG_\Burgers(\bar{u}) - \cN^{\sdon}(\bar{u}) \Vert_{L^1} \right]
&=
\int_{0}^{2\pi-\epsilon}+\int_{2\pi-\epsilon}^{2\pi} \Vert \cG_\Burgers(\bar{u}) - \cN^{\sdon}(\bar{u}) \Vert_{L^1} \, d\xi
\\
&\le 
2\pi (C+1) \epsilon + 6\pi \epsilon, 
\end{align*}
with a constant $C> 0$, independent of $\epsilon>0$. Replacing $\epsilon$ by $\tilde{\epsilon} = \epsilon / C'$ for a sufficiently large constant $C'>0$ (depending only on the constants in the last estimate above), one readily sees that there exists a shift-DeepONet $\cN^\sdon$, such that 
\[
\Err(\cN^{\sdon})
=
\E_{\bar{u}\sim \mu}\left[
  \Vert 
\cG_\Burgers(\bar{u}) - \cN^{\sdon}(\bar{u}) \Vert_{L^1}
\right]
\le
\epsilon,
\]
and such that 
\[
\width(\cN^{\sdon}) \le C, \quad \depth(\cN^{\sdon}) \le C \log(\epsilon^{-1})^2,
\]
and $\size(\cN^{\sdon}) \le C \log(\epsilon^{-1})^2$, for a constant $C>0$, independent of $\epsilon > 0$. This concludes our proof.
\end{proof}

\subsection{Proof of Theorem \ref{thm:shock-FNO}}
\label{app:burg2pf}

\begin{proof}
\textbf{Step 1:}
Assume that the grid size is $N\ge 3$. Then there exists an FNO $\cN_1$, such that
\[
\cN_1(\bar{u}) = [\cos(\xi), \sin(\xi)],
\]
and $\depth(\cN_1) \le C$, $d_v \le C$, $k_{\mathrm{max}} =1$.

To see this, we note that for any $\xi \in [0,2\pi]$, the input function $\bar{u}(x) = -\sin(x-\xi) = -\cos(\xi)\sin(x) - \sin(\xi)\cos(x)$ can be written in terms of a sine/cosine expansion with coefficients $\cos(\xi)$ and $\sin(\xi)$. For $N\ge 3$ grid points, these coefficients can be retrieved exactly by a discrete Fourier transform. Therefore, combining a suitable lifting to $d_v = 2$ with a Fourier multiplier matrix $P$, we can exactly represent the mapping
\[
\bar{u} \mapsto \cF_N^{-1}(P\cdot \cF_N(R\cdot \bar{u})) = \begin{bmatrix} \cos(\xi)\sin(x) \\ \sin(\xi)\cos(x) \end{bmatrix},
\]
by a linear Fourier layer. Adding a suitable bias function $b(x) = [\sin(x),\cos(x)]^T$, and composing with an additional non-linear layer, it is then straightforward to check that there exists a (ReLU-)FNO, such that
\begin{align*}
\bar{u} 
&\mapsto 
\begin{bmatrix}
\cos(\xi) \sin(x) + \sin(x) \\
\sin(\xi) \cos(x) + \cos(x)
\end{bmatrix}
=
\begin{bmatrix}
(\cos(\xi)+1) \sin(x) \\
(\sin(\xi)+1) \cos(x)
\end{bmatrix}
\\
&\mapsto 
\begin{bmatrix}
|\cos(\xi)+1| |\sin(x)| \\
|\sin(\xi)+1| |\cos(x)|
\end{bmatrix}
\\
&\mapsto 
\begin{bmatrix}
|\cos(\xi)+1| \sum_{j=1}^N |\sin(x_j)| \\
|\sin(\xi)+1| \sum_{j=1}^N |\cos(x_j)|
\end{bmatrix}
\\
&\mapsto 
\begin{bmatrix}
|\cos(\xi)+1| - 1 \\
|\sin(\xi)+1| - 1
\end{bmatrix}
=
\begin{bmatrix}
\cos(\xi) \\
\sin(\xi)
\end{bmatrix}.
\end{align*}

\textbf{Step 2:} Given this construction of $\cN_1$, the remainder of the proof follows essentially the same argument as in the proof \ref{pf:sdon-shock} of Theorem \ref{thm:shock}: We again observe that the solution $u(x,t)$ with initial data $\bar{u}(x) = -\sin(x-\xi)$ is well approximated by the composition
\[
\cN^\fno(\bar{u})(x) := \Phi_\epsilon(x - \Xi_\epsilon(\cN_1(\bar{u}))),
\]
such that (by verbatim repetition of the calculations after \eqref{eq:BsDON} for shift-DeepONets)
\[
\Err(\cN^\fno) = \E_{\bar{u}\sim \mu} \left[ \Vert \cG_\Burgers(\bar{u}) - \cN^\fno(\bar{u}) \Vert_{L^1}\right] \le \epsilon,
\]
and where $\Phi_\epsilon: \R \to \R$ is a ReLU neural network of width
\[
\depth(\Phi_\epsilon) \le C \log(\epsilon^{-1})^2, \quad \width(\Phi_\epsilon) \le C,
\]
and $\Xi_\epsilon: \R^2 \to [0,2\pi]$ is an ReLU network with
\[
\depth(\Xi_\epsilon) \le C \log(\epsilon^{-1})^2, \quad \width(\Xi_\epsilon) \le C.
\]
Being the composition of a FNO $\cN_1$ satisfying $\kmax = 1$, $d_v \le C$, $\depth(\cN_1)\le C$ with the two ordinary neural networks $\Phi_\epsilon$ and $\Xi_\epsilon$, it follows that $\cN^\fno$ can itself be represented by a FNO with $\kmax = 1$, $d_v \le C$ and $\depth(\cN^\fno) \le C \log(\epsilon^{-1})^2$. By the general complexity estimate \eqref{eq:FNOize}, 
\[
\size(\cN^\fno) \lesssim d_v^2 \kmax^d \depth(\cN^\fno),
\]
we also obtain the claimed an upper complexity bound $\size(\cN^\fno) \le C \log(\epsilon^{-1})^2$.
\end{proof}
\section{Details of Numerical Experiments and Further Experimental Results.}
\subsection{Training and Architecture Details}
Below, details concerning the model architectures and training are discussed.
\label{app:numex}
\subsubsection{Feed Forward Dense Neural Networks}
Given an input $y \in \R^m$, a feedforward neural network (also termed as a multi-layer perceptron), transforms it to an output, through a layer of units (neurons) which compose of either affine-linear maps between units (in successive layers) or scalar non-linear activation functions within units \cite{DLbook}, resulting in the representation,
\begin{equation}
\label{eq:ann1}
u_{\theta}(y) = C_L \circ\sigma \circ C_{L-1}\ldots\circ\sigma \circ C_2 \circ \sigma \circ C_1(y).
\end{equation} 
Here, $\circ$ refers to the composition of functions and $\sigma$ is a scalar (non-linear) activation function. 
For any $1 \leq \ell \leq L$, we define
\begin{equation}
\label{eq:C}
C_\ell z_\ell = W_\ell z_\ell + b_\ell, ~ {\rm for} ~ W_\ell \in \R^{d_{\ell+1} \times d_\ell}, z_\ell \in \R^{d_\ell}, b_\ell \in \R^{d_{\ell+1}}.,\end{equation}
and denote, 
\begin{equation}
\label{eq:theta}
\theta = \{W_\ell, b_\ell\}_{\ell=1}^L,
\end{equation} 
to be the concatenated set of (tunable) weights for the network. 
Thus in the terminology of machine learning, a feed forward neural network \eqref{eq:ann1} consists of an input layer, an output layer, and $L$ hidden layers with $d_\ell$ neurons, $1<\ell<L$. In all numerical experiments, we consider a uniform number of neurons across all the layer of the network
$d_\ell = d_{\ell-1} = d $, $ 1<\ell<L$.
The number of layers $L$, neurons $d$ and the activation function $\sigma$ are chosen though cross-validation.

\subsubsection{ResNet}
A residual neural network consists of \textit{residual blocks} which use \textit{skip or shortcut connections} to facilitate the training procedure of deep networks \cite{he2016deep}.
A residual block spanning $k$ layers is defined as follows,
\begin{equation}
    r(z_\ell, z_{\ell-k}) = \sigma (W_\ell z_\ell + b_\ell)  + z_{\ell-k}.
\end{equation}
In all numerical experiments we set $k=2$. 

The residual network takes as input a sample function $\bar{u}\in \cX$, encoded at the \textit{Cartesian grid} points $(x_1,\dots,x_m)$, $\cE(\bar{u}) = (\bar{u}(x_1),\dots,\bar{u}(x_m)) \in \R^m$, and outputs the output sample $\cG(\bar{u})\in \cY$ encoded at the same set of points, $\cE(\cG(\bar{u})) = (\cG(\bar{u})(x_1),\dots,\cG(\bar{u})(x_m)) \in \R^m $. 
For the compressible Euler equations the encoded input is defined as
\begin{equation}
\begin{aligned}
    \cE(\bar{u}) = &\Big(\rho_0(x_1),\dots,\rho_0(x_m), \rho_0(x_1)u_0(x_1),\dots,\rho_0(x_m)u_0(x_m), E_0(x_1),\dots,E_0(x_m)\Big) \in \R^{3m}\\
    \cE(\bar{u}) = &\Big(\rho_0(x_1),\dots,\rho_0(x_{m^2}), \rho_0(x_1)u_0(x_1),\dots,\rho_0(x_{m^2})u_0(x_{m^2}),\\
    & \rho_0(x_1)v_0(x_1),\dots,\rho_0(x_{m^2})v_0(x_{m^2}) E_0(x_1),\dots,E_0(x_{m^2})\Big) \in \R^{4m^2}
\end{aligned}
\end{equation} 
for the 1d and 2d problem, respectively.

\subsubsection{Fully Convolutional Neural Network}
Fully convolutional neural networks are a special class of convolutional networks which are independent of the input resolution. The networks consist of an \textit{encoder} and \textit{decoder}, both defined by a composition of linear and non-linear transformations:
\begin{equation}
\begin{aligned}
    E_{\theta_e}(y) &= C^e_L \circ\sigma \circ C^e_{L-1}\ldots\circ\sigma \circ C^e_2 \circ \sigma \circ C^e_1(y),\\
    D_{\theta_d}(z) &= C^d_L \circ\sigma \circ C^d_{L-1}\ldots\circ\sigma \circ C^d_2 \circ \sigma \circ C^d_1(z),\\
    u_\theta(y) &= D_{\theta_d} \circ  E_{\theta_e}(y).
\end{aligned}
\end{equation}
The affine transformation $C_\ell$ commonly corresponds to a \textit{convolution} operation in the encoder, and  \textit{transposed convolution} (also know as \textit{deconvolution}), in the decoder. The latter  can also be performed with a simple \textit{linear} (or \textit{bilinear}) \textit{upsampling} and a convolution operation, similar to the encoder. 

The (de)convolution is performed with a kernel $W_\ell\in \R^{k_\ell}$ (for 1d-problems, and $W_\ell\in \R^{k_\ell \times k_\ell}$ for 2d-problems), stride $s$ and padding $p$. It takes as input a tensor $z_\ell \in \R^{w_\ell\times c_\ell }$ (for 1d-problems, and $z_\ell \in \R^{w_\ell\times h_\ell\times c_\ell }$ for 2d-problems), with $c_\ell$ being the number of input channels, and computes $z_{\ell+1} \in \R^{w_{\ell+1}\times c_{\ell+1} }$ (for 1d-problems, and $z_{\ell+1} \in \R^{w_{\ell+1}\times h_{\ell+1}\times c_{\ell+1} }$ for 2d-problems). Therefore, a (de)convolutional  affine transformation can   be uniquely identified with the tuple $(k_\ell, s, p, c_\ell, c_{\ell+1})$.

The main difference between the encoder's and decoder's transformation is that, for the encoder $h_{\ell+1}<h_{\ell}$, $w_{\ell+1}<w_{\ell}$, $c_{\ell+1}>c_{\ell}$ and for the decoder $h_{\ell+1}>h_{\ell}$, $w_{\ell+1}>w_{\ell}$, $c_{\ell+1}<c_{\ell}$.

For the linear advection equation and the Burgers' equation we employ the same variable encoding of the input and output samples as ResNet. On the other hand, for the compressible Euler equations, each input variable is embedded in an individual channel. More precisely, $\cE(\bar{u}) \in \R^{m \times 3}$ for the shock-tube problem, and $\cE(\bar{u}) \in \R^{m \times m \times 4}$ for the 2d Riemann problem.
The architectures used in the benchmarks examples are shown in  figures \ref{fig:linconv}, \ref{fig:burgconv}, \ref{fig:riemannconv}. 

In the experiments, the number of channel $c$ (see figures \ref{fig:linconv}, \ref{fig:burgconv}, \ref{fig:riemannconv} for an explanation of its meaning) and the activation function $\sigma$ are selected with cross-validation.

\begin{figure}[t!]
\centering
    \includegraphics[width=1\textwidth]{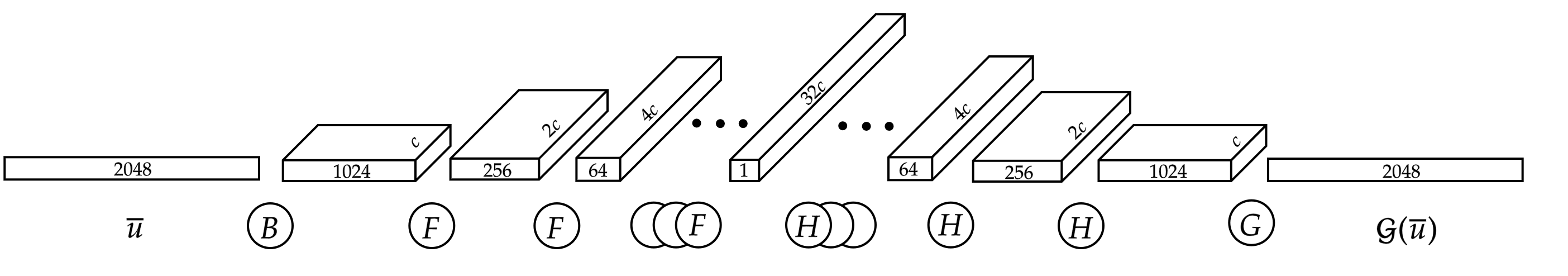} 
\caption{Fully convolutional neural network architecture for the linear advection equation and shock tube benchmarks. $B(z)= BN\circ\sigma\circ C^e(z)$, with $BN$ denoting a batch normalization and $C^e$ a convolution defined by the tuple $(3, 2, 1,c_{in}, c)$, with $c_{in}=1$ for the advection equation and $c_{in}=3$ shocktube benchmarks.  $F(z)=BN\circ\sigma\circ C^e_4 \circ BN\circ\sigma\circ C^e_3 \circ BN\circ\sigma\circ C^e_2 \circ BN\circ\sigma\circ C^e_1(z)$, with $C^e_1$, $C^e_2$, $C^e_3$, $C^e_4$ identified with $(3, 2, 1, c_{in}, 2c_{in})$, $(1, 1, 0,2c_{in}, 2c_{in})$, $(1, 1, 0,2c_{in}, 2c_{in})$, $(3, 2, 1,2c_{in}, 2c_{in})$. $H(z)=BN\circ\sigma\circ C^d_4 \circ BN\circ\sigma\circ C^d_3 \circ BN\circ\sigma\circ C^d_2 \circ BN\circ\sigma\circ C^d_1(z)$, with $C^d_1$, $C^d_2$, $C^d_3$, $C^d_4$ transposed convolutions defined by  $(3, 2, 1, c_{in}, c_{in})$, $(1, 1, 0,c_{in}, c_{in})$, $(1, 1, 0,c_{in}, c_{in})$, $(3, 2, 1,c_{in}, 0.5c_{in})$. $G$ is a transposed convolution defined by the tuple $(3, 2, 1, c, 1)$.}
\label{fig:linconv}
\end{figure}

\begin{figure}[t!]
\centering
    \includegraphics[width=1\textwidth]{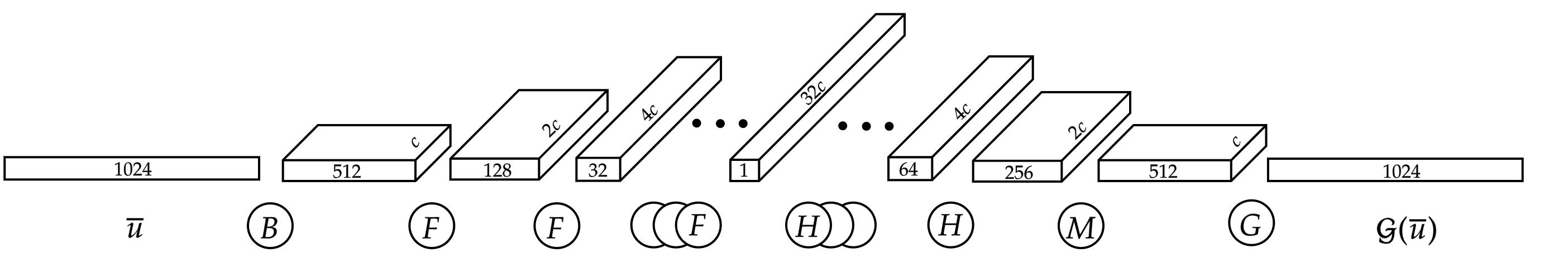} 
\caption{Fully convolutional neural network architecture for the Burgers' equation benchmark: $B$, $F$, $G$ follow the same definition as in the caption of figure \ref{fig:linconv}. $H(z)=\sigma\circ BN  \circ C^e_4 \circ UP \circ \sigma\circ BN  \circ C^e_3 \circ UP \circ \sigma\circ BN \circ C^e_2 \circ UP  \circ \sigma\circ BN  \circ UP  \circ C^e_1(z)$, with $C^e_1$, $C^e_2$, $C^e_3$, $C^e_4$ being standard convolutions defied by the tuples  $(3, 2, 1, c_{in}, c_{in})$, $(1, 1, 0,c_{in}, c_{in})$, $(1, 1, 0,c_{in}, c_{in})$, $(3, 2, 1,c_{in}, 0.5c_{in})$, and $UP$ denoting up-sampling operation with scaling factor $2$.}
\label{fig:burgconv}
\end{figure}

\begin{figure}[t!]
\centering
    \includegraphics[width=1\textwidth]{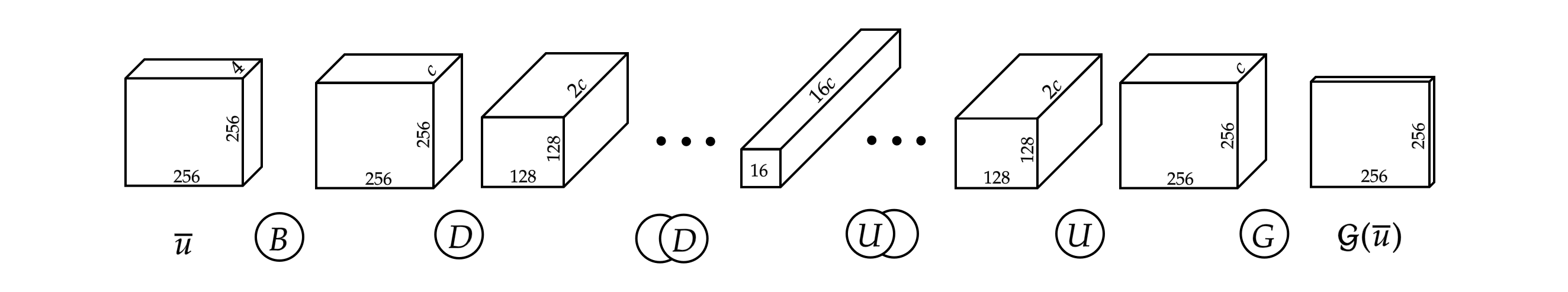} 
\caption{Fully convolutional neural network architecture for the 2-dimensional Riemann problem. $B(z)= \sigma\circ BN\circ C^e_2 \circ \sigma\circ BN\circ C^e_1 $, with $C^e_1$ and $C^e_2$ being standard convolutions defined by the tuples  $(3, 1, 0, c_{in}, 2c_{in})$, $(3, 1, 0,2c_{in}, 2c_{in})$. $D(z)= B(z) \circ MP(z) $, with MP being a max pool with kernel size $2$. $U(Z)= B(z) \circ UP(z) $, where $UP$ denotes an up-sampling with scale factor $2$. $G$ is a convolution defined by $(1, 1, 0,c, 1)$}
\label{fig:riemannconv}
\end{figure}

\subsubsection{DeepONet and shift-DeepONet}
The architectures of branch and trunk are chosen according to the benchmark addressed. 
In particular, for the first two numerical experiments, we employ standard feed-forward neural networks for both branch and trunk-net, with a skip connection between the first and the last hidden layer in the branch. 

On the other hand, for the compressible Euler equation we use a convolutional network obtained as a composition of $L$ blocks, each defined as:
\begin{equation}
    B(z_{\ell}) = BN\circ\sigma \circ C_\ell(z_\ell),\quad 1<\ell<L,
\end{equation}
with $BN$ denoting a batch normalization. The convolution operation is instead defined by $k_\ell=3$, $s=2$, $p=1$, $c_\ell=c_{\ell + 1} = d$, for all $1<\ell<L-1$. The output is then flattened and forwarded through a multi layer perceptron with 2 layer with $256$ neurons and activation function $\sigma$.

For the \textit{shit} and \textit{scale}-nets of shift-DeepONet, we use the same architecture as the branch.

Differently from the rest of the models, the training samples for DeepONet and  shift-DeepONet are encoded at $m$ and $n$ \textit{uniformly distributed random} points, respectively. Specifically, the encoding points represent a randomly picked subset of the grid points used for the other models. The number of encoding points $m$ and $n$, together with the number of layers $L$, units $d$ and activation function of trunk and branch-nets, are chosen through cross-validation.

\subsubsection{Fourier Neural Operator}
We use the implementation of the FNO model provided by the authors of \cite{FNO}.
Specifically, the lifting $R$ is defined by a linear transformation from  $\R^{d_u \times m }$ to $\R^{d_v \times m}$, where $d_u$ is the number of inputs, and the
projection $Q$ to the target space performed by a neural network with a single hidden layer with $128$ neurons and $GeLU$ activation function. The same activation function is used for all the Fourier layers, as well. 
Moreover, the weight matrix $W_\ell$ used in the residual connection derives from a convolutional layer defined by $(k_\ell=1, s=1, p=0, c_{\ell}=d_v, c_{\ell+1}=d_v)$, for all $1<\ell<L-1$.
We use the same samples encoding employed for the fully convolutional models.
The lifting dimension $d_v$, the number of Fourier layers $L$ and $k_{max}$, defined in \ref{sec:2}, are the only objectives of cross-validation.

\subsubsection{Training Details}
For all the benchmarks, a training set with 1024 samples, and a validation and test set each with 128 samples, are used. 
The training is performed with the ADAM optimizer, with learning rate $5\cdot 10^{-4}$ for $10000$ epochs and minimizing the $L^1$-loss function. We use the  learning rate schedulers defined in table \ref{tab:scheduler}.   We train the models in mini-batches of size 10. A weight decay of $10^{-6}$ is used for ResNet (all numerical experiments), DON and sDON (linear advection equation, Burgers' equation, and shock-tube problem). On the other hand, no weight decay is employed for remaining experiments and models.
At every epoch the relative $L^1$-error is computed on the validation set, and the set of trainable parameters resulting in the lowest error during the entire process saved for testing. Therefore, no early stopping is used. The models hyperparameters are selected by running grid searches over a range of hyperparameter values and selecting the configuration realizing the lowest relative $L^1$-error on the validation set. For instance, the model size (minimum and maximum number of trainable parameters) that are covered in this grid search are reported in Table \ref{tab:params}.

The results of the grid search i.e., the best performing hyperparameter configurations for each model and each benchmark, are reported in tables \ref{tab:resnet_params}, \ref{tab:fcc_params}, \ref{tab:don_params}, \ref{tab:sdon_params} and \ref{tab:fno_params}.

\begin{table}[]
\centering
    \renewcommand{\arraystretch}{1.1} 
    \footnotesize{
        \begin{tabular}{ c c c c c c} 
        \toprule
           & \bfseries  ResNet &\bfseries  FCNN  & \bfseries DeepONet  & \bfseries  Shift - DeepONet  & \bfseries  FNO   \\ 
            \midrule
            \midrule
            \bfseries\makecell{ Advection \\ Equation}  & \makecell{Step-wise decay \\ 100 steps, \\$\gamma=0.999$} & \makecell{Step-wise Decay \\ 50 steps, \\ $\gamma=0.99$}& \makecell{Step-wise decay \\ 100 steps, \\ $\gamma=0.999$} & \makecell{Exponential decay \\ $\gamma=0.999$} & None \\ 
            \midrule
            \bfseries\makecell{Burgers' \\ Equation}& \makecell{Step-wise decay \\ 100 steps, \\$\gamma=0.999$}  & \makecell{Step-wise Decay \\ 50 steps, \\ $\gamma=0.99$} & \makecell{Step-wise decay \\ 100 steps, \\ $\gamma=0.999$} & \makecell{Exponential decay \\ $\gamma=0.999$} & None\\ 
            \midrule
            \bfseries\makecell{Lax-Sod \\ Shock Tube}   &  \makecell{Step-wise decay \\ 100 steps, \\$\gamma=0.999$} & \makecell{Step-wise Decay \\ 50 steps, \\ $\gamma=0.99$}  &  \makecell{Step-wise decay \\ 100 steps, \\ $\gamma=0.999$}  &   \makecell{Exponential decay \\ $\gamma=0.999$}  & None  \\
             \midrule
            \bfseries\makecell{2D Riemann \\  Problem} & \makecell{Step-wise decay \\ 100 steps, \\$\gamma=0.999$} & \makecell{Step-wise Decay \\ 50 steps, \\ $\gamma=0.99$}  &  \makecell{Exponential decay \\ $\gamma=0.999$}  & \makecell{Exponential decay \\ $\gamma=0.999$} & None  \\
        
        \bottomrule
        \end{tabular}
        
        \caption{Learning rate scheduler for different benchmarks and different models: $\gamma$ denotes the learning rate decay factor}
        \label{tab:scheduler}
    }
\end{table}

\begin{table}[]
\centering
    \renewcommand{\arraystretch}{1.} 
    
    \footnotesize{
        \begin{tabular}{ c c c c c c} 
        \toprule
           & \bfseries  ResNet &\bfseries  FCNN  & \bfseries DeepONet  & \bfseries  Shift - DeepONet  & \bfseries  FNO   \\ 
            \midrule
            \midrule
               \makecell{ \bfseries Linear Advection \\ \bfseries Equation}  & \makecell{576,768 \\ 1,515,008 } & \makecell{1,156,449 \\ 1,8240,545 }&  \makecell{519,781 \\  892,561} &\makecell{1,018,825 \\ 1,835,297}  &  \makecell{22,945 \\ 352,961 }  \\ 
            \midrule
              \makecell{\bfseries  Burgers' \\ \bfseries Equation   }& \makecell{313,600 \\ 989,696 } & \makecell{1,155,025 \\ 18,219,489 }&  \makecell{519,781 \\ 892,561} &\makecell{1,018,825 \\ 1,835,297}  &  \makecell{22,945 \\352,961 } \\ 
            \midrule
             \makecell{\bfseries  Lax-Sod \\ \bfseries Shock Tube}   & \makecell{1,101,056 \\ 2,563,584  }  &  \makecell{1,156,497 \\ 18,240,737} & \makecell{1,344,357 \\ 3,190,673 }    & \makecell{3,492,553 \\ 8,729,633} & \makecell{23,009 \\ 353,089 } \\
             \midrule
             \makecell{\bfseries  2D Riemann \\  \bfseries  Problem  } & \makecell{42,059,008 \\ 84,416,000  }  &  \makecell{442,985 \\ 7,066,529} & \makecell{361,157 \\ 2,573,513 }    & \makecell{821,961 \\ 7,082,785}   & \makecell{268,833 \\ 13,132,737}   \\
        
        \bottomrule
        
        \end{tabular}
        
        \caption{Minimum (Top sub-row) and maximum (Bottom sub-row) number of trainable parameters among the grid-search hyperparameters configurations.
        }
        \label{tab:params}
    }
        
\end{table}

\begin{table}[]
\centering
    \renewcommand{\arraystretch}{1.1} 
    \footnotesize{
        \begin{tabular}{ c  c c c c } 
        \toprule
           & \bfseries   \makecell{  Advection  \\ Equation} &\bfseries \makecell{  Burgers' \\  Equation   }   & \bfseries \makecell{  Shocktube \\  Problem   }  & \bfseries  \makecell{  2D Riemann \\  Problem   } \\ 
            \midrule
            \midrule
               $\sigma$ & Leaky $ReLU$ & Leaky $ReLU$   &  Leaky $ReLU$  & Leaky $ReLU$ \\ 
              $L$  & 4 & 8   & 8  & 8 \\ 
            $d$    & 128 &  256  & 256  & 256 \\ 
             \midrule
             \makecell{Trainable\\ Params }  & 576,768 & 989,696  & 2,563,584  & 84,416,000  \\ 
        \bottomrule
        \end{tabular}
        \caption{ResNet best performing hyperparameters configuration for different benchmark problems.}
        \label{tab:resnet_params}
    }
\end{table}
\begin{table}[]
\centering
    \renewcommand{\arraystretch}{1.1} 
    \footnotesize{
        \begin{tabular}{ c  c c c c } 
        \toprule
           & \bfseries   \makecell{  Advection  \\ Equation} &\bfseries \makecell{  Burgers' \\  Equation   }   & \bfseries \makecell{  Shocktube \\  Problem   }  & \bfseries  \makecell{  2D Riemann \\  Problem   } \\ 
            \midrule
            \midrule
               $\sigma$ &  Leaky $ReLU$  & Leaky $ReLU$  & Leaky $ReLU$  &  $ReLU$  \\ 
              $c$  & 8 &  16 &  16 & 16 \\ 
             \midrule
             \makecell{Trainable\\ Params }  & 1,156,449  &  4,576,033 & 4,581,537  & 1,768,401 \\ 
        \bottomrule
        \end{tabular}
        \caption{Fully convolutional neural network best performing hyperparameters configuration for different benchmark problems.}
        \label{tab:fcc_params}
    }
\end{table}
\begin{table}[]
\centering
    \renewcommand{\arraystretch}{1.1} 
    \footnotesize{
        \begin{tabular}{ c  c c c c } 
        \toprule
           & \bfseries   \makecell{  Advection  \\ Equation} &\bfseries \makecell{  Burgers' \\  Equation   }   & \bfseries \makecell{  Shocktube \\  Problem   }  & \bfseries  \makecell{  2D Riemann \\  Problem   } \\ 
            \midrule
            \midrule
            $m$ & 256  & 512  & 512  & $64^2$ \\ 
            $n$ & 512 &  256 &  256 &  $256^2$ \\ 
            $p$ & 200 &  50 &  200 & 100 \\ 
            $L_{branch}$ & 3 &  3 &  3 & 3 \\ 
            $L_{trunk}$ & 6 & 4  & 6  & 6 \\
            $d_{branch}$ & 256 & 256  &  256 &  32\\ 
            $d_{trunk}$ & 256 &  256 & 256  & 256 \\
            $\sigma_{branch}$ & Leaky $ReLU$ &  Leaky $ReLU$   &  Leaky $ReLU$  &  $SoftSign$ \\ 
            $\sigma_{trunk}$ & Leaky $ReLU$  &  Leaky $ReLU$  &  Leaky $ReLU$  &  Leaky $ReLU$ \\ 
             \midrule
             \makecell{Trainable\\ Params }  & 761,233 &  618,085 &  3,190,673 &  607,433\\ 
        \bottomrule
        \end{tabular}
        \caption{DeepONet best performing hyperparameters configuration for different benchmark problems.}
        \label{tab:don_params}
    }
\end{table}
\begin{table}[]
\centering
    \renewcommand{\arraystretch}{1.1} 
    \footnotesize{
        \begin{tabular}{ c  c c c c } 
        \toprule
           & \bfseries   \makecell{  Advection  \\ Equation} &\bfseries \makecell{  Burgers' \\  Equation   }   & \bfseries \makecell{  Shocktube \\  Problem   }  & \bfseries  \makecell{  2D Riemann \\  Problem   } \\ 
            \midrule
            \midrule
            $m$ & 512  & 512  & 512  &  $256^2$ \\ 
            $n$ & 512 &  512 &  512 &  $128^2$  \\ 
            $p$ & 50 & 200  & 100  & 50 \\ 
            $L_{branch}$ & 3 & 4  & 4  &3  \\ 
            $L_{trunk}$ & 6 &  6 & 6  &  6 \\
            $d_{branch}$ & 256 & 256  & 256  & 32 \\ 
            $d_{trunk}$ & 256 & 256  &  256 &  256\\
            $\sigma_{branch}$ & Leaky $ReLU$ &  Leaky $ReLU$   &  Leaky $ReLU$  &  Leaky $ReLU$ \\ 
            $\sigma_{trunk}$ & Leaky $ReLU$  &  Leaky $ReLU$  &  Leaky $ReLU$  &  Leaky $ReLU$ \\ 
             \midrule
             \makecell{Trainable\\ Params }  & 1,445,321 &  1,835,297 & 6,047,633  & 6,851,785 \\ 
        \bottomrule
        \end{tabular}
        \caption{Shift DeepONet best performing hyperparameters configuration for different benchmark problems.}
        \label{tab:sdon_params}
    }
\end{table}
\begin{table}[]
\centering
    \renewcommand{\arraystretch}{1.1} 
    \footnotesize{
        \begin{tabular}{ c  c c c c } 
        \toprule
           & \bfseries   \makecell{  Advection  \\ Equation} &\bfseries \makecell{  Burgers' \\  Equation   }   & \bfseries \makecell{  Shocktube \\  Problem   }  & \bfseries  \makecell{  2D Riemann \\  Problem   } \\ 
            \midrule
            \midrule
               $k_{max}$ & 15 & 19  & 7  & 15 \\ 
              $d_v$  & 64 & 32 & 32  & 64 \\ 
            $L$    & 2  &  4 & 4  & 4 \\ 
             \midrule
             \makecell{Trainable\\ Params }  &  148,033 & 90,593   & 41,505  & 8,414,145 \\ 
        \bottomrule
        \end{tabular}
        \caption{Fourier neural operator best performing hyperparameters configuration for different benchmark problems.}
        \label{tab:fno_params}
    }
\end{table}

\subsection{Further Experimental Results.}
\label{app:fex}
In this section, we present some further experimental results which supplement the results presented in Table \ref{tab:1} of the main text. We start by presenting more statistical information about the median errors shown in Table \ref{tab:1}. To this end, in Table \ref{tab:quantiles}, we show the errors, for each model on each benchmark, corresponding to the $0.25$ and $0.75$ \emph{quantiles}, within the test set. We observe from this table that the same trend, as seen in Table \ref{tab:1}, also holds for the statistical spread. In particular, FNO and shift-DeepONet outperform DeepONet and the other baselines on every experiment. Similarly, FNO outperforms shift-DeepONet handily on each experiment, except the four-quadrant Riemann problem associated with the Euler equations of gas dynamics.

\begin{table}[htbp]
\centering
    \renewcommand{\arraystretch}{1.} 
    \footnotesize{
        \begin{tabular}{ c c c c c c} 
        \toprule
           & \bfseries  ResNet &\bfseries  FCNN  & \bfseries DeepONet  & \bfseries  Shift - DeepONet  & \bfseries  FNO   \\ 
            \midrule
            \midrule
               \makecell{ \bfseries Linear Advection \\ \bfseries Equation}  & $10.1\%$ - $22.8\%$ & $8.2\% - 17.3\%$  &   $4.9\% - 13.8\%$ & $1.4\% - 5.4\%$ &  $0.35\% - 1.25\%$  \\ 
            \midrule
              \makecell{\bfseries  Burgers' \\ \bfseries Equation   }& $18.8\% - 22\%$  & $20.3\% - 25.7\%$ & $25.4\% - 32.4\%$ &  $6.7\% - 9.6\%$ &  $1.3\% - 1.9\%$ \\ 
            \midrule
             \makecell{\bfseries  Lax-Sod \\ \bfseries Shock Tube}   & $3.6\% - 5.6\%$  & $7.0\% - 10.25\%$ &    $3.4\% - 5.4\%$ &  $2.0\% - 3.75\%$   & $1.2\% - 2.1\%$\\
             \midrule
             \makecell{\bfseries  2D Riemann \\  \bfseries  Problem  } & $2.4\% - 2.9\%$ &  $0.17\% - 0.21\%$ & $0.77\% - 1.1\%$     & $0.10\% - 0.15\%$ & $0.10\% - 0.14\%$  \\
        
        \bottomrule
        
        \end{tabular}
        
        \caption{0.25 and 0.75 quantile of the relative $L^1$ error computed over 128 testing samples for different benchmarks with different models.
        }
        \label{tab:quantiles}
    }
\end{table}

\begin{table}[htbp] 
    \centering
    \renewcommand{\arraystretch}{1.1} 
    
    \footnotesize{
        \begin{tabular}{ c c c c c  c    } 
            \toprule
               \bfseries $k_{max}$ & $L$ &  $d_v$ &  \bfseries Trainable Params 
               & \bfseries\makecell{ FNO - Median \\ Testing $L^1$-error } & \bfseries\makecell{  FFT - Median \\ $L^1$-error}    \\ 
            \midrule
            \midrule
            0 &  3 & 192  & 247,169  & 2.30\% &  164.2 \% \\ 
            \midrule
            1 &  3 & 160  & 252,097   & 1.21 \% &  137.9 \% \\ 
            \midrule
            3 &  3 & 128  & 263,169 & 0.48 \% &  63.3  \%\\ 
            \midrule
           7 &  3 & 92  & 241,113   & 0.54 \% &  38.9  \%\\ 
            \bottomrule
        \end{tabular}
    \caption{Testing error obtained by training FNO with different number of modes  and corresponding error obtained with linear Fourier projection.}
        \label{tab:fno_kmax_dependecy}
    }
\end{table}

The results presented in Table \ref{tab:quantiles} further demonstrate that FNO is best performing model on all the benchmarks. In order to further understand the superior performance on FNO, we consider the linear advection equation. As stated in the main text, given the linear nature of the underlying operator, a single FNO Fourier layer suffices to represent this operator. However, the layer width will grow linearly with decreasing error. Hence, it is imperative to use the nonlinear reconstruction, as seen in the proof of Theorem \ref{thm:fnoadv}, to obtain good performance. To empirically illustrate this, we compare FNO with different $k_{\mathrm{max}}$ (number of Fourier modes) with corresponding error obtained by simply projecting the outputs of the operator into the linear space spanned by the corresponding number of Fourier modes. This projection onto Fourier space amounts to the action of a linear version of FNO. The errors, presented in Table \ref{tab:fno_kmax_dependecy}, clearly show that as predicted by the theory, very few Fourier modes with a $k_{\mathrm{max}}=1$ are enough to obtain an error of approximately $1\%$. On the other hand, the corresponding error with just the linear projection is \emph{two orders of magnitude} higher. In fact, one needs to project onto approximately $500$ Fourier modes to obtain an error of approximately $1\%$ with this linear method. This experiment clearly brings out the role of the nonlinearities in FNO in enhancing its expressive power on advection-dominated problems.

\begin{figure}[ht]
\centering
    \makebox[\textwidth][c]{\includegraphics[width=1\textwidth]{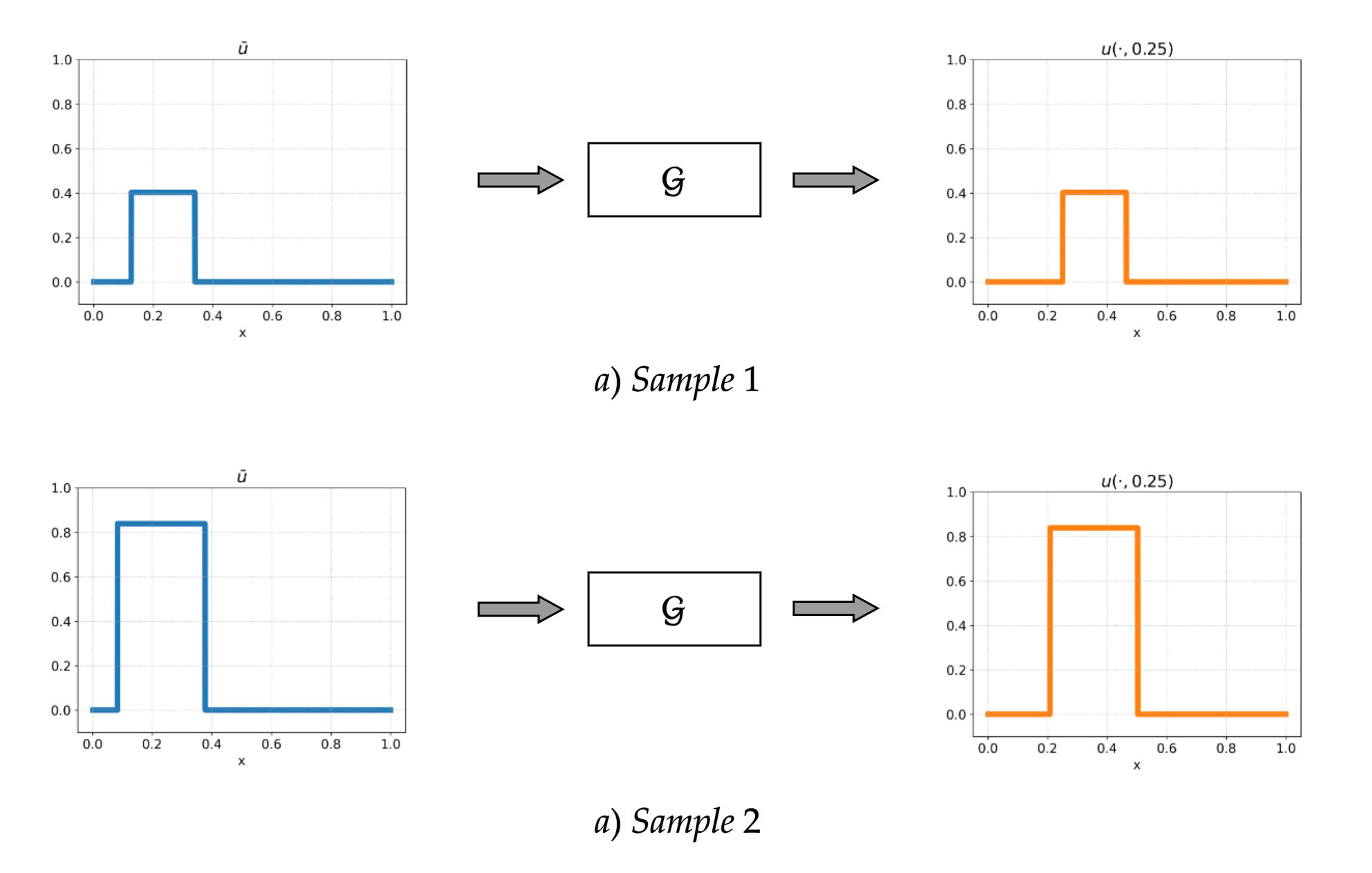}}
\caption{Illustration of two input (blue) and output (orange) samples for the advection equation.}
\label{fig:adv_samples}
\end{figure}

\begin{figure}[ht]
\centering
    \makebox[\textwidth][c]{\includegraphics[width=1\textwidth]{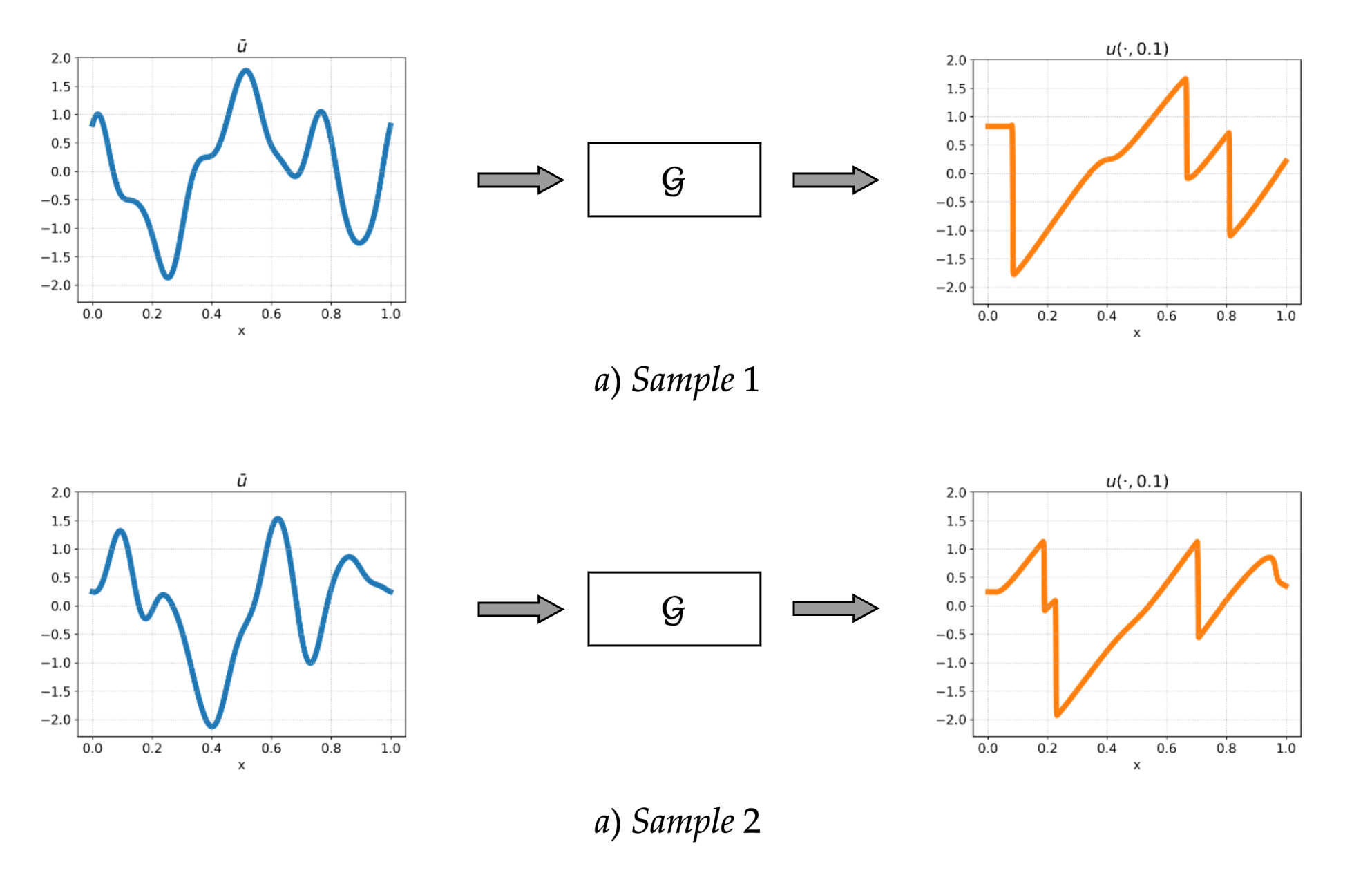}}
\caption{Illustration of two input (blue) and output (orange) samples for the Burgers' equation.}
\label{fig:burg_samples}
\end{figure}

\begin{figure}[ht]
\begin{subfigure}{1\textwidth}
\includegraphics[width=\textwidth]{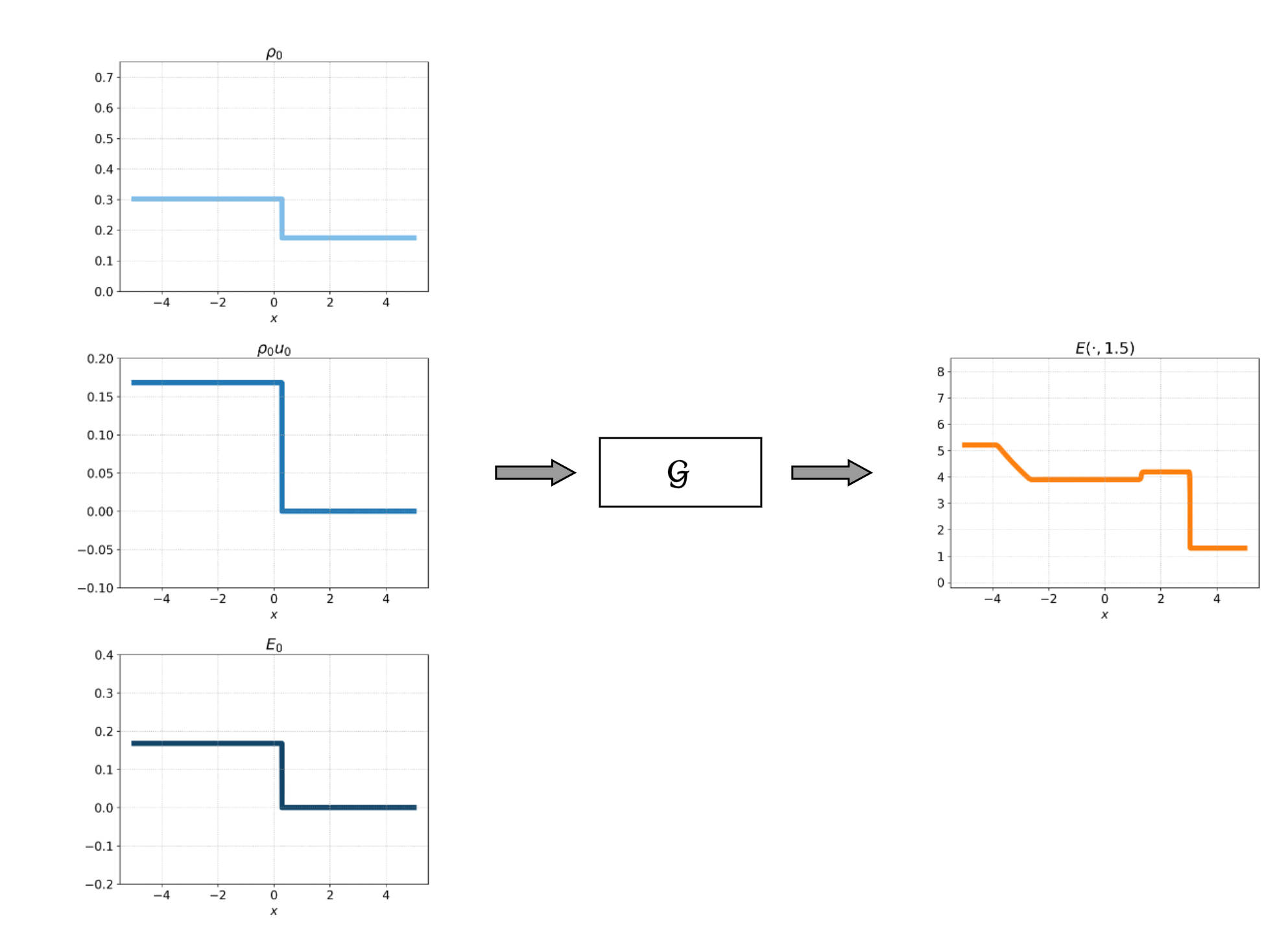}
\caption{Sample 1}
\end{subfigure}
\begin{subfigure}{1\textwidth}
\includegraphics[width=\textwidth]{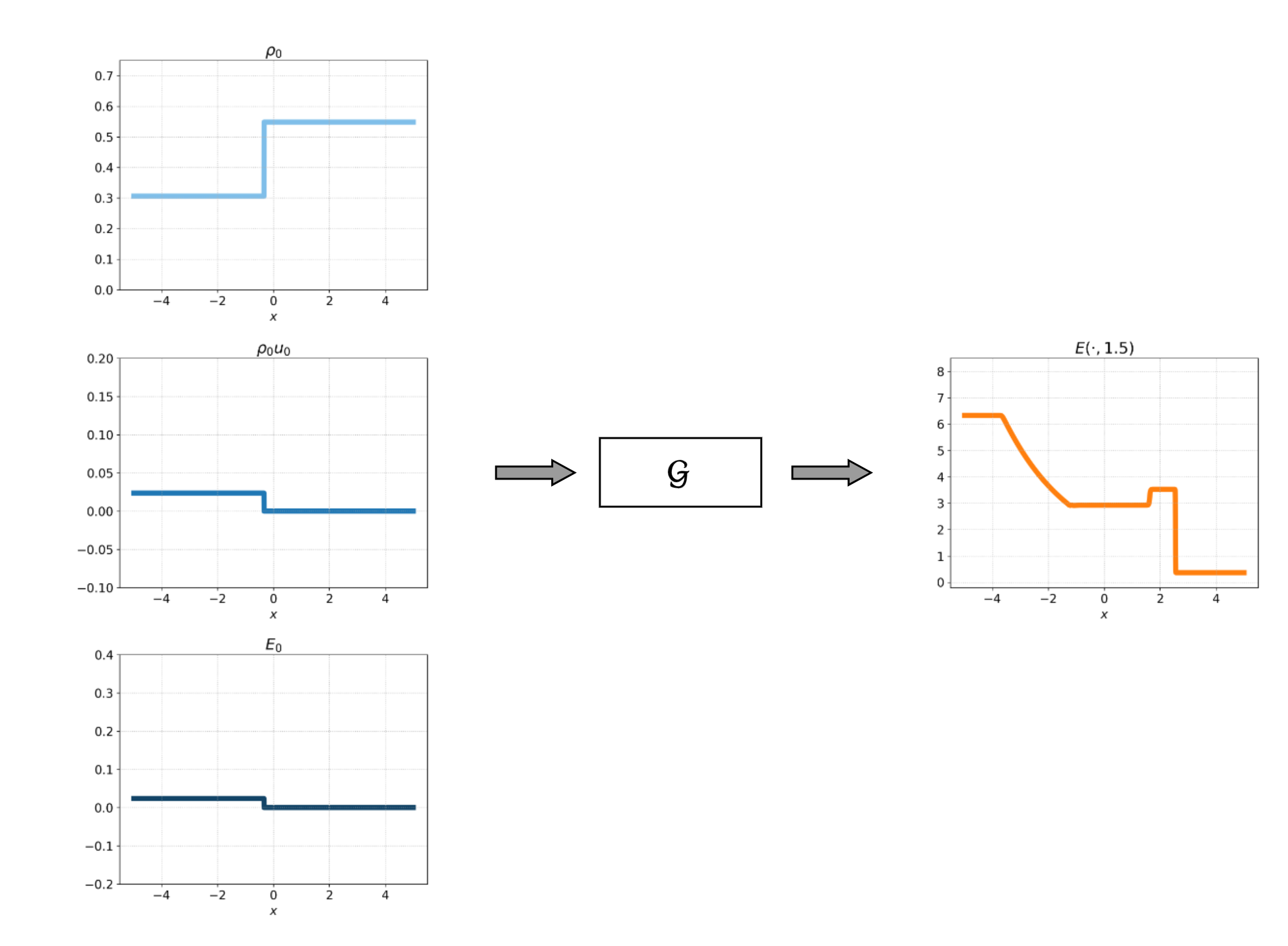}
\caption{Sample 2}
\end{subfigure}
\caption{Illustration of two input (blue) and output (orange) samples for the shock-tube problem.}
\label{fig:laxsod_samples}
\end{figure}

\begin{figure}[ht]
\begin{subfigure}{1\textwidth}
\includegraphics[width=\textwidth]{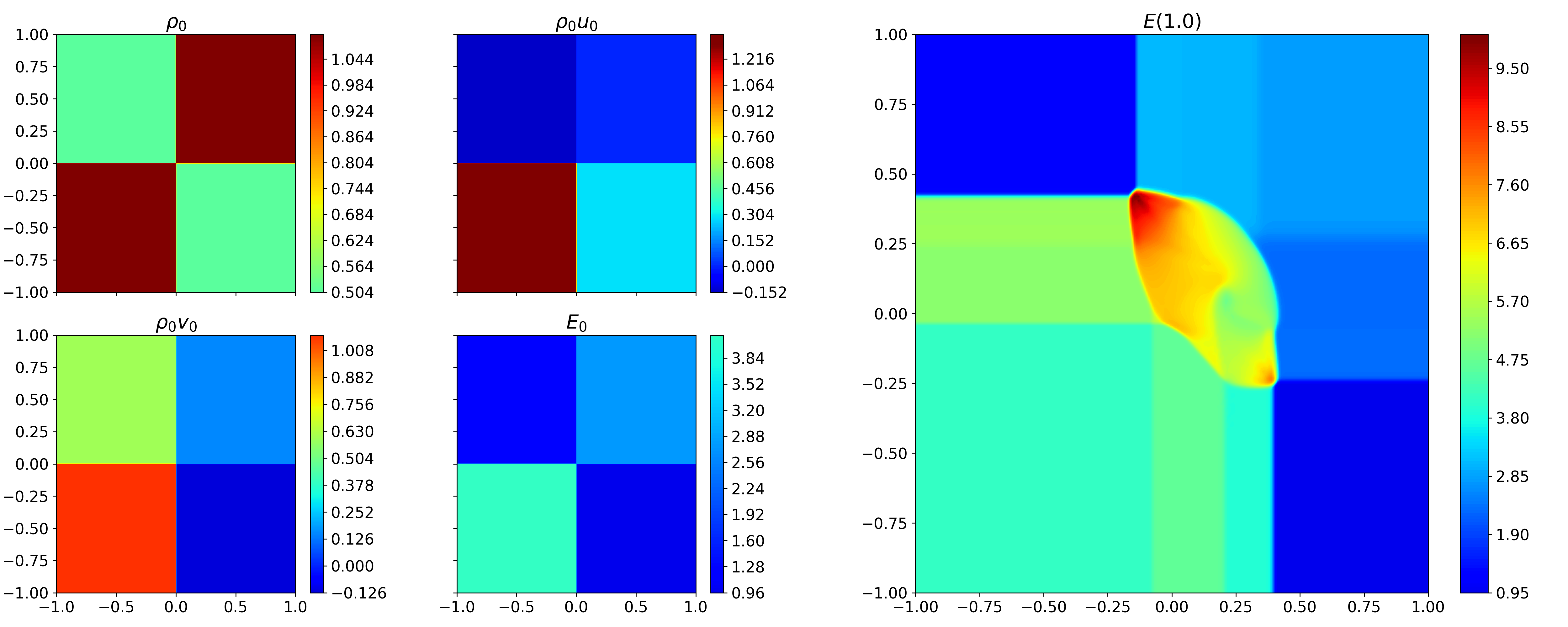}
\caption{Sample 1}
\end{subfigure}
\begin{subfigure}{1\textwidth}
\includegraphics[width=\textwidth]{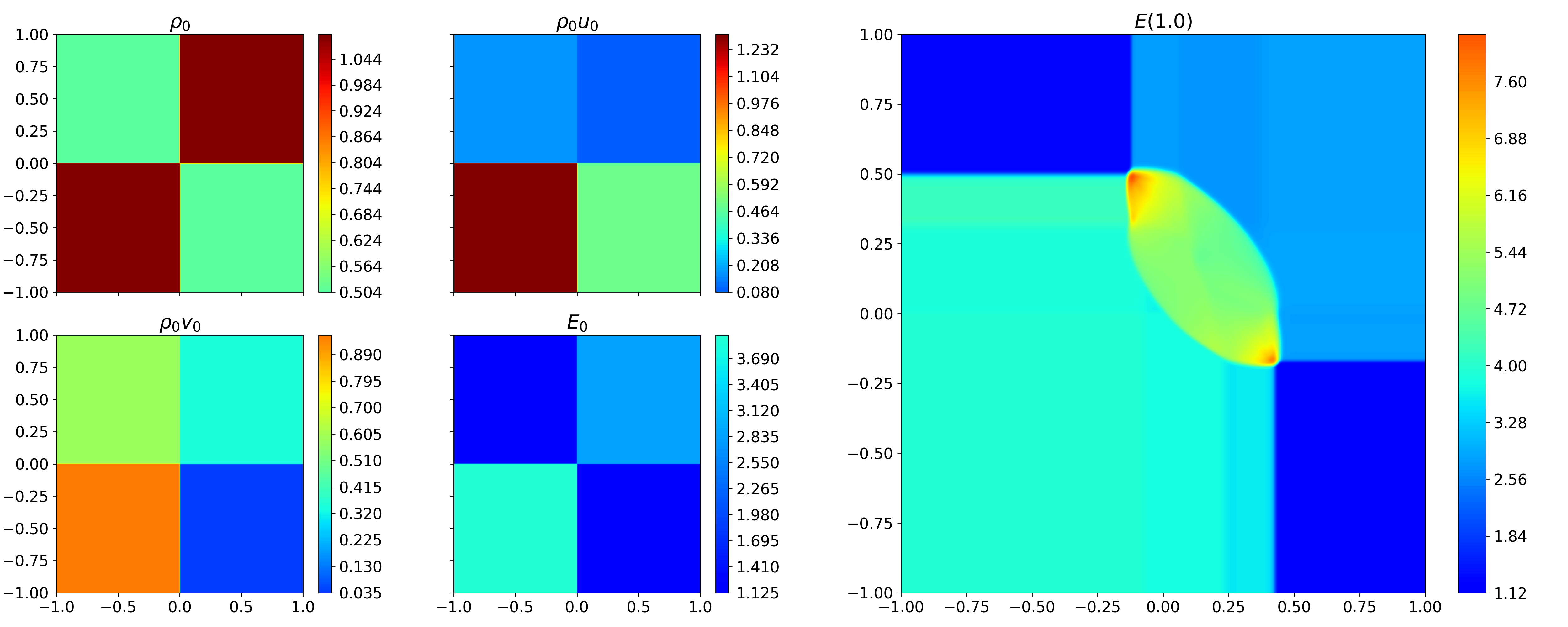}
\caption{Sample 2}
\end{subfigure}
\caption{Illustration of two input (left) and output (right) samples for the 2-dimensional Riemann problem.}
\label{fig:riemann_samples}
\end{figure}


\end{document}